\newcommand{\head}[1]{\ensuremath{\mathit{h}(#1)}} \newcommand{\Head}[1]{\ensuremath{\mathit{H}(#1)}}
\newcommand{\body}[1]{\ensuremath{\mathit{B}(#1)}}
\newcommand{\poslits}[1]{\ensuremath{{#1}^+}}
\newcommand{\neglits}[1]{\ensuremath{{#1}^-}}
\newcommand{\pbody}[1]{\poslits{\body{#1}}}
\newcommand{\nbody}[1]{\neglits{\body{#1}}}
  \newcommand{\HT}{\ensuremath{\mathrm{HT}}}
\newcommand{\HTC}{\ensuremath{\mathrm{HT}_c}}
\newcommand{\reduct}[2]{\ensuremath{#1^{#2}}}
\newcommand{\tuple}[1]{\langle #1 \rangle}
\providecommand{\Underscore}{\textunderscore}
\lstdefinelanguage{clingo}{basicstyle=\ttfamily,keywordstyle=[1]\bfseries,keywordstyle=[2]\bfseries,keywordstyle=[3]\bfseries,showstringspaces=false,literate={_}{\Underscore}1 {\%\%}{}0,escapeinside={\#(}{\#)},alsoletter={\#,\&},keywords=[1]{not,from,import,def,if,else,elif,return,while,break,and,or,for,in,del,and,class,with,as,is,yield,async},keywords=[2]{\#const,\#show,\#minimize,\#base,\#theory,\#count,\#external,\#program,\#script,\#end,\#heuristic,\#edge,\#project,\#show,\#sum},keywords=[3]{&,&dom,&sum,&diff,&show},morecomment=[l]{\#\ },morecomment=[l]{\%\ },morestring=[b]",stringstyle={\itshape},commentstyle={\color{darkgray}}}
\lstdefinelanguage{python}{basicstyle=\ttfamily,keywordstyle=[1]\bfseries,showstringspaces=false,literate={_}{\Underscore}{1},escapeinside={\#(}{\#)},alsoletter={\#,\&},keywords=[1]{not,from,import,def,if,else,elif,return,while,break,and,or,for,in,del,and,class,with,as,is,yield,async},morecomment=[l]{\#\ },morestring=[b]",stringstyle={\itshape},commentstyle={\color{darkgray}}}
 \newcommand{\sysfont}{\textit}
\newcommand{\clingcon}{\sysfont{clingcon}}
\newcommand{\clingo}{\sysfont{clingo}}
\newcommand{\dlvhex}{\sysfont{dlvhex}}
\newcommand{\clingoM}[1]{\clingo{\small\textnormal{[}\textsc{#1}\textnormal{]}}}
\newcommand{\C}{C}
 \renewcommand{\sysfont}{\texttt}
\newtheorem{proposition}{Proposition}
\newtheorem{theorem}{Theorem}
\newtheorem{lemma}{Lemma}
\newcommand{\den}[1]{\llbracket \, #1 \, \rrbracket}
\newcommand{\denn}[1]{\den{#1}}
\newcommand{\comp}[1]{\wideparen{#1}}
\newcommand{\tprogram}{$\mathcal{T}$-logic program}
\newcommand{\tsolution}{$\langle\AT,\TheoryAtomsES\rangle$-solution}
\newcommand{\lsolution}{$\langle\LC,\TheoryAtomsES\rangle$-solution}
\newcommand{\conprogram}{\clingcon-program}
\newcommand{\Atoms}{\ensuremath{\mathcal{A}}}
\newcommand{\TheoryAtoms}{\ensuremath{\mathcal{T}}}
\newcommand{\TheoryAtomsDN}{\ensuremath{\mathcal{F}}}
\newcommand{\TheoryAtomsES}{\ensuremath{\mathcal{E}}}
\newcommand{\htcsemanticsP}{\ensuremath{\uptau}}
\newcommand{\htcsemanticsA}{\ensuremath{\uptau}}
\newcommand{\htctsols}{\ensuremath{\Phi}}
\newcommand{\htctrans}{\ensuremath{\Gamma}}
\newcommand{\X}{\ensuremath{\mathcal{X}}}
\newcommand{\V}{\ensuremath{\mathcal{V}}}
\newcommand{\D}{\ensuremath{\mathcal{D}}}
\newcommand{\Du}{\ensuremath{\mathcal{D}_{\undefined}}}
\renewcommand{\C}{\ensuremath{\mathcal{C}}}
\newcommand{\AT}{\ensuremath{\mathfrak{T}}}
\newcommand{\LC}{\ensuremath{\mathfrak{L}}}
\newcommand{\DLAT}{\ensuremath{\mathfrak{D}}}
\newcommand{\LP}{\ensuremath{\mathfrak{R}}}
\newcommand{\vars}[1]{\ensuremath{\mathit{vars}(#1)}}
\newcommand{\varsTfunction}[1]{\ensuremath{\mathit{vars}_{#1}}}
\newcommand{\varsT}[2]{\ensuremath{\varsTfunction{#1}(#2)}}
\newcommand{\varsATfunction}{\varsTfunction{\AT}} \newcommand{\varsAT}[1]{\varsT{\AT}{#1}}
\newcommand{\undefined}{\ensuremath{\mathbf{u}}} 
\newcommand{\dfZ}[1]{\mathit{def}_{\mathbb{Z}}(#1)}
\newcommand{\eqdef}{\mathrel{\vbox{\offinterlineskip\ialign{\hfil##\hfil\cr $\scriptscriptstyle\mathrm{def}$\cr \noalign{\kern1pt}$=$\cr \noalign{\kern-0.1pt}}}}}
\newcommand{\restr}[2]{{#1|}_{\hspace{-1pt}#2}}
\newcommand{\code}[1]{\ensuremath{\mathtt{#1}}} \newcommand{\entails}{\mathbin{|\kern -.42em\approx}}
\newcommand{\nentails}{\mathbin{|\kern -.47em\approx\kern-.9em{/}}}
\def\sumxy{\code{\&sum\{x;y\}=4}}
\def\sumxyc{\code{\&sum\{x;y\}!=4}}
\def\sumyz{\code{\&sum\{y;z\}=2}}
\def\sumyzc{\code{\&sum\{y;z\}!=2}}
\newcommand{\AMT}{AMT}
\newcommand{\prop}[1]{p_{#1}}
\def\bridge{\mathit{Bridge}}
\def\anyAt{b}
\def\regAt{\code{a}}
\def\thAt{\code{s}}
\def\prAt{\prop{}}
\newcommand\diffc[3]{\code{\&diff\{\mathit{#1}-\mathit{#2}\} \ \mbox{\tt <=} \ \mathit{#3}}}
\newcommand{\qed}{\hspace*{1em}\hbox{\proofbox}}
\newcommand{\ASPAC}{\ensuremath{\mathit{ASP}({\mathcal{AC}})}}
\begin{document}

\title{Towards a Semantics for Hybrid ASP systems}

\author[Pedro Cabalar et al]{Pedro Cabalar$^1$ \ Jorge Fandinno$^{2,3}$ \ Torsten Schaub$^3$ \ Philipp Wanko$^3$
  \\$^1${University of Corunna, Spain} \ $^2${Omaha State University, USA} \ $^3${University of Potsdam, Germany} }

\maketitle

\begin{abstract}
Over the last decades the development of ASP has brought about an expressive modeling language powered by highly performant systems.
At the same time, it gets more and more difficult to provide semantic underpinnings capturing the resulting constructs and inferences.
This is even more severe when it comes to hybrid ASP languages and systems that are often needed to handle real-world applications.
We address this challenge and introduce the concept of abstract and structured theories that allow us to
formally elaborate upon their integration with ASP.
We then use this concept to make precise the semantic characterization of \clingo's theory-reasoning framework
and establish its correspondence to the logic of Here-and-there with constraints.
This provides us with a formal framework in which we can elaborate formal properties of existing
hybridizations of \clingo\ such as \clingcon, \clingoM{dl}, and \clingoM{lp}.
\end{abstract}
 \section{Introduction}\label{sec:introduction}

Answer Set Programming (ASP) is about mapping a logic program onto its so-called stable models.
Over the decades, stable models have been characterized in various ways~\cite{lifschitz10a},
somehow underpinning their appropriateness as a semantics for logic programs.
However, over the same period, the syntax of logic programs has been continuously enriched,
equipping ASP with a highly attractive modeling language.
This development also brought about much more intricate semantics,
as nicely reflected by the mathematical apparatus used in the original definition~\cite{gellif88b} and the one for
capturing the input language of the ASP system \clingo~\cite{gehakalisc15a}.

With the growing range of ASP applications in academia and industry~\cite{fafrsctate18a}, we also witness the emergence of more and more hybrid ASP systems~\cite{lierler14a,kascwa17a},
similar to the raise of SMT solving from plain SAT solving~\cite{baseseti09a}.
From a general perspective, the resulting paradigm of ASP modulo theories (\AMT) can be seen as a refinement of ASP,
in which an external theory certifies some of a program's stable models.
A common approach, stemming from lazy theory solving~\cite{baseseti09a}, is to view a theory as an oracle that
determines the truth of a designated set of (theory) atoms in the program.
This idea has meanwhile been adapted to the setting of ASP in various ways, most prominently in
Constraint ASP~\cite{lierler14a} extending ASP with linear equations. Beyond this, ASP systems like \clingo\ or \dlvhex~\cite{eigeiakarescwe18a} leave the choice of the specific
theory largely open.
For instance,
\clingo\ merely requires fixing the interaction between theories and their theory atoms. As attractive as this generality may be from an implementation point of view,
it complicates the development of generic semantics that are meaningful to existing systems.
Not to mention that in ASP the integration of theories takes place in a non-monotonic context.

We address this issue and show how the Logic of Here-and-There with constraints (\HTC;~\citeNP{cakaossc16a})
can be used as a semantics for \clingo's theory reasoning framework.
Thus, just like the plain Logic of Here-and-There (\HT;~\cite{heyting30a,pearce96a}) serves as the logic foundations of ASP,
\HTC\ extends this to \AMT.
In this way, we cannot only draw on the formal framework of \HT\ but we can moreover study a heterogeneous approach such as
\AMT\ in a the uniform setting of a single logic.
To this end,
we introduce the concept of abstract and structured theories that allow us to formally elaborate upon their integration with ASP.
With them, we make precise the semantic characterization of \clingo's theory-reasoning framework
and establish its correspondence to theories in \HTC.
This provides us with a formal framework in which we can elaborate formal properties of existing
hybridizations of \clingo.

 \section{Background}\label{sec:background}

We consider an alphabet consisting of two disjoint sets, namely,
a set \Atoms\ of propositional (regular) atoms and a set \TheoryAtoms\ of theory atoms, whose truth is governed by some external theory.
Several theory atoms may represent the same theory entity.
For example,
\clingcon\ extends the input language of \clingo\ with linear equations over integers.
Each such linear equation is represented by a theory atom of form
\begin{align}\label{clingcon:linear:constraint}
  \code{\&sum\{\mathit{k_1*x_1};\dots;\mathit{k_n*x_n}\}} \prec k_0
\end{align}
where $x_i$ is an integer variable and $k_i\in\mathbb{Z}$ an integer constant for $0\leq i\leq n$;
and $\prec$ is a comparison symbol such as \code{<=}, \code{=}, \code{!=}, \code{<}, \code{>}, \code{>=}.
In \clingo, theory predicates are preceded by `\code{\&}'.
We use letters $\regAt$, $\thAt$, and $\anyAt$ for atoms in $\Atoms$, $\TheoryAtoms,$ and $\Atoms \cup \TheoryAtoms$, respectively.

A \tprogram\ over $\langle\Atoms,\TheoryAtoms\rangle$ is a set of rules of the form
\begin{align}
    \anyAt_0 \leftarrow \anyAt_1,\dots,\anyAt_n,\neg \anyAt_{n+1},\dots,\neg \anyAt_m \label{theory:rule}
\end{align}
where $\anyAt_i\in\mathcal{A}\cup \TheoryAtoms$ for $0 \leq i \leq m$ and
$\anyAt_0$ can also be the falsity constant $\bot$.
We let $\head{r}=b_0$ and $\body{r}=\{\anyAt_1,\dots,\anyAt_n,\neg \anyAt_{n+1},\dots,\neg \anyAt_m\}$ stand for
the \emph{head} and \emph{body} of a rule $r$ in~\eqref{theory:rule};
its \emph{positive} and \emph{negative body atoms} are given by
$\pbody{r}=\{\anyAt_1,\dots,\anyAt_n\}$ and $\nbody{r}=\{\anyAt_{n+1},\dots,\anyAt_m\}$, respectively.
For a program $P$, we define $\Head{P}=\{\head{r}\mid r\in P\}$,
$\body{P}=\{\body{r}\mid r\in P\}$.
A system using \tprogram{s} is, for instance,
\clingcon\ where theory atoms of form \eqref{clingcon:linear:constraint} correspond to linear equations over integers.
We refer to logic programs augmented by such theory atoms as \conprogram{s}.
 Next, we describe the semantics of \tprogram{s} in \clingo~\cite{gekakaosscwa16a},
used for extending it with difference constraints and linear equations over integers and reals
in \clingoM{dl}, \clingcon, and \clingoM{lp}, respectively.
In all these systems, we can see the role of the corresponding external theory as a kind of ``\emph{certification authority}''
that sanctions regular stable models whose theory atoms are in accord with their underlying constraints.
To this end, envisage a two-step process:
(1) generate regular stable models and
(2) select the ones passing the theory certification.
In step~(1),
we ignore the distinction between atoms in \Atoms\ and \TheoryAtoms\ and
plainly apply the stable model semantics~\cite{gellif88b}:
A set $X\subseteq{\Atoms\cup\TheoryAtoms}$ of atoms is a \emph{model} of a
\tprogram~$P$ over $\langle\Atoms,\TheoryAtoms\rangle$,
if
$\head{r}\in X$
whenever
$\pbody{r}\subseteq X$ and $\nbody{r}\cap X=\emptyset$
for all $r\in P$.
The stable models of~$P$ are defined via the \emph{reduct} of $P$ relative to a set~$X$ of atoms, viz.\
\(
\reduct{P}{X}
=
\{\head{r}\leftarrow\pbody{r}\mid r\in P, \nbody{r}\cap X=\emptyset\}
\).
Then,
a set $X$ of atoms is a \emph{stable model} of~$P$,
if $X$ is the least model of~\reduct{P}{X}.
In~(2),
an external theory \AT\ is used to eliminate any stable model $X$ whose theory atoms $X \cap \TheoryAtoms$ are not in \emph{accord} with \AT.
Suppose step~(1) yields a stable model of a \conprogram\ containing both atoms
$\code{\&sum\{\mathit{2*x;4*y}\}<=\mathit{6}}$ and $\code{\&sum\{\mathit{x;2*y}\}>\mathit{9}}$.
This stable model would not pass theory certification, since no values for integer variables $x$ and $y$ can satisfy both linear equations.

In fact, the treatment of each theory atom $\thAt$ (as obtained in the stable model) leaves room for different semantic options~\cite{gekakaosscwa16a}.
A first option is about the justification for the truth of $\thAt$ in step~(1).
If $\thAt$ occurs in a rule head, it is called \emph{defined} and its truth must be derived through the program rules.
Otherwise, $\thAt$ is said to be \emph{external} and we are always free to add it to the program without further justification.
A second semantic option has to do with how we treat $\thAt \not\in X$ for a stable model $X$ in step~(2).
We say that $\thAt$ is \emph{strict} when $\thAt \not\in X$ implies that the theory check requires the ``opposite'' constraint of $\thAt$ to be satisfied.
Otherwise, we say that $\thAt$ is \emph{non-strict} and the fact $\thAt \not\in X$ has no relevant effect during theory certification.
Of course, in both cases $\thAt \in X$ implies that the constraint of $\thAt$ is satisfied.
As an example, suppose $\thAt$ is the strict atom \code{\&sum\{\mathit{2*x;4*y}\}<=\mathit{6}} and we get $\thAt \not\in X$ in some stable model $X$.
Then, step~(2) imposes the same constraint as if we had obtained atom $\code{\&sum\{\mathit{2*x;4*y}\}>\mathit{6}}$ in $X$.

\citeN{jakaosscscwa17a} argue that only the combinations (non-strict,defined) and (strict,external) are sensible
for linear equation theories. With our focus on these theories, we follow this proposition
and partition the set \TheoryAtoms\ of theory atoms into
two disjoint sets, namely,
\TheoryAtomsDN, containing theory atoms interpreted in a (non-strict,defined) manner, and
\TheoryAtomsES, including the ones abiding by a (strict,external) interpretation.
We refer to theory atoms in \TheoryAtomsDN\ and \TheoryAtomsES\ as \emph{founded} and \emph{external}, respectively.
Intuitively, atoms in \TheoryAtomsDN\ are derived by the \tprogram,
while the ones in \TheoryAtomsES\ are determined by the theory.
Accordingly, for any \tprogram\ $P$ over \Atoms\ and \TheoryAtoms\ with partition
$\langle\TheoryAtomsDN,\TheoryAtomsES\rangle$, we require  that
$\Head{P}\cap\TheoryAtomsES=\emptyset$ and $\body{P}\cap\TheoryAtomsDN=\emptyset$.
We refer to such programs to be over $\langle\Atoms,\TheoryAtoms,\TheoryAtomsES\rangle$ by leaving
\TheoryAtomsDN\ implicit.

With this restriction to two interpretations of theory atoms,
we can formulate \clingo's semantics for \tprogram{s} as follows~\cite{gekakaosscwa16a}:
Informally speaking, a \AT-\nolinebreak{}solution $S$ is a subset $S\subseteq\TheoryAtoms$ of theory atoms
whose associated constraints are ``sanctioned'' by \AT; this is made precise in Section~\ref{sec:translation:revisited}.
A set $X\subseteq \Atoms\cup\TheoryAtoms$ of atoms is a $\AT$-stable model of a \tprogram\ $P$ over $\langle\Atoms,\TheoryAtoms,\TheoryAtomsES\rangle$,
if there is some \AT-solution $S$ such that $X$ is a stable model of the logic program
\begin{align}\label{eq:transformation}
  P
  \cup
  \{{\thAt\leftarrow} \mid \thAt\in (S\cap\TheoryAtomsES)\}
  \cup
  \{ {\leftarrow \thAt} \mid \thAt\in \TheoryAtoms \setminus (S \cup \TheoryAtomsES)\}
\end{align}

For illustration,
consider the \conprogram~$P_{(\ref{ex:clingcon:rule:one}/\ref{ex:clingcon:rule:two})}$:
\begin{align}
  \label{ex:clingcon:rule:one}
  \code{a}                   &\ \leftarrow \ \code{\&sum\{ x; y \} = 4}\\
  \label{ex:clingcon:rule:two}
  \code{\&sum\{ y; z \} = 2} &\ \leftarrow \ \code{a}
\end{align}
This program contains two theory atoms: $\thAt_1$=(\code{\&sum\{x;y\}=4}) is external (it occurs in a rule body) whereas $\thAt_2$=(\code{\&sum\{y;z\}=2}) is founded (it only occurs in rule heads).
To obtain the stable models of $P_{(\ref{ex:clingcon:rule:one}/\ref{ex:clingcon:rule:two})}$, we must decide first a potential set $S$ of theory atoms.
First, note that the shape of such \AT-solutions $S$, and ultimately \AT-stable models,
depends on the theory and external atoms in \TheoryAtoms\ and \TheoryAtomsES.
For the theory of linear equations, the intention of external atoms is to find an assignment satisfying the represented constraint or its complement
whenever the external atom is true or false, respectively.
Therefore, the existence of $\thAt_1$ implicitly requires another external theory atom $\thAt_3=\code{\&sum\{x;y\}!=4}$.
Note that in systems like \clingcon, answers neither contain $\thAt_1,\thAt_2$ nor $\thAt_3$ explicitly.
For a \AT-stable model $X$, the answer of the system consists of $X\cap\Atoms$,
plus an assignment (or ``\emph{witnesses}'') satisfying linear equations represented by $X\cap\TheoryAtoms$.
Then, we are free to either add the external atom $\thAt_1$ or $\thAt_3$ as a fact, whereas if we decide to leave $\thAt_2 \not\in S$, we must add the constraint $\leftarrow \thAt_2$.
This leads to two possibilities: $X_1=\{\thAt_1,\thAt_2,\code{a}\}$ and $X_2=\{\thAt_3\}$.
If \AT\ is the theory of linear equations, there exist several \AT-solutions $S$ containing $\{\thAt_1,\thAt_2\}$ that justify $X_1$ since there exist multiple integer values for $x,y,z$ satisfying equations $x+y=4$ and $y+z=2$ (e.g.\ $x=2, y=2, z=0$).
In the case of $X_2$, since the external atom $\thAt_3 \in X_2$, any \AT-solution should satisfy $x+y\neq 4$.
But again, we may easily find sets $S$ of linear equations that are in accord with that constraint.

 The logic of \emph{Here-and-there with constraints} (\HTC;~\citeNP{cakaossc16a}) is an extension of Equilibrium
Logic~\cite{pearce96a} providing logical foundations for constraint satisfaction problems (CSPs) in the setting of ASP.
In \HTC, a CSP is expressed as a triple $\tuple{\X,\D,\C}$ (we also call \emph{signature}),
where \X\ is a set of \emph{variables} over some non-empty \emph{domain} \D,
and \C\ is a set of \emph{constraint atoms}.
A constraint atom provides an abstract way to relate values of variables and constants
according to the atom's semantics.
Most useful constraint atoms have a structured syntax, but in the general case, we may simply consider them as strings.
For instance, linear equations are expressions of the form ``$x+y=4$'',
where $x$ and $y$ are variables from~\X\ and $4$ is a constant representing some element from \D.

Variables can be assigned some value from $\D$ or left \emph{undefined}.
For the latter, we use the special symbol $\undefined \notin \D$ and the extended domain $\Du \eqdef \D \cup \{\undefined\}$.
The set $\mathit{vars}(c) \subseteq \X$
collects all variables occurring in constraint atom $c$.
We assume that every constraint atom~$c$ satisfies~$\vars{c} \neq \emptyset$
(otherwise it is just a truth constant).
A \emph{valuation} $v$ over $\X,\D$ is a function $v:\X\rightarrow\Du$.
Moreover, valuation $v|_X: X\rightarrow\Du $ is the projection of $v$ on $X \subseteq \X$.
A valuation $v$ can be represented as the set
\(
\{ (x,v(x)) \mid x \in \X, v(x)\in\D\}
\),
thus excluding pairs of form $(x,\undefined)$.
Hence, $v\subseteq v'$ stands for
\(
\{ (x,v (x)) \mid x \in \X, v (x)\in\D\}
\subseteq
\{ (x,v'(x)) \mid x \in \X, v'(x)\in\D\}
\).
We also allow for applying valuations $v$ to fixed values,
and so extend their type to
\(
v:\X\cup \Du\rightarrow\Du
\)
by fixing $v(d) = d$ for any $d \in \D_\undefined$.
We let $\mathcal{V}_{\X,\D}$ (or simply $\mathcal{V}$) stand for
the set of all valuations over $\X,\D$.

The semantics of constraint atoms is defined in \HTC\ via \emph{denotations},
which are functions
\(
\den{\cdot}:\C\rightarrow 2^{\mathcal{V}}
\),
mapping each constraint atom to a set of valuations.
They must satisfy the following properties
for all
$c\in\C$,
$x\in\X$, and
$v,v' \in \mathcal{V}$~\cite{cafascwa20a}:
\begin{enumerate}
\item $v \in \den{c}$ and $v \subseteq v'$ imply $v' \in \den{c}$,
  \label{den:prt:0}
\item $v \in \den{c}$ implies $v \in \den{c[x/v(x)]}$,
  \label{den:prt:1}
\item if $v(x)=v'(x)$ for all $x \in \mathit{vars}(c)$ then $v \in \den{c}$ iff $v' \in \den{c}$.
  \label{den:prt:2}
\end{enumerate}
where $c[s/s']$ is the syntactic replacement in $c$ of subexpression~$s$ by~$s'$.
We assume that~$c[x/d] \in \C$ for any constraint atom~$c[x] \in \C$, variable~$x \in \X$ and~$d \in \Du$.

The flexibility of syntax and semantics of constraint atoms allows us to capture entities across different theories.
For instance, a propositional atom $\prAt$ (as understood in regular ASP) can also be represented as an \HTC-constraint atom
$\text{``}\prAt=\mathbf{t}\text{''}\in\C$ with a single variable $\prAt \in \X$ and the expected denotation:
\(
\den{\prAt=\mathbf{t}}=\{v\in\mathcal{V}\mid v(\prAt)=\mathbf{t}\}
\)
assuming we include a value $\mathbf{t} \in \D$ standing for ``true.''
We use letter $\prAt$ to denote variables for propositional constraint atoms and
we abbreviate ``$\prAt=\mathbf{t}$'' simply by ``$\prAt$'' in \HTC-formulas, when there is no ambiguity.
For another example,
the linear equation $x+y=4$
can be captured via constraint atoms of the same syntax ``$x+y= 4$'' whose denotation is
\begin{align*}
  \den{\text{``}x+y= 4\text{''}}
  =
  \{v\in\mathcal{V}\mid v(x),v(y), 4\in \mathbb{Z}, v(x)+v(y)= 4\}
  \ .
\end{align*}
Here, we have $\vars{\text{``}x-y=4\text{''}} = \{x,y\} \subseteq \X$.
Note that this constraint can only be satisfied if $x$ and $y$ hold an integer value and $4\in \mathbb{Z}$.

For clarity, we remove quotes from constraint atoms, when clear from the context.

A formula $\varphi$ over signature $\tuple{\X,\D,\C}$ is defined as
\begin{align*}
  \varphi::= \bot \mid c\mid \varphi \land \varphi \mid  \varphi \lor \varphi \mid  \varphi \rightarrow \varphi \quad\text{ where }c\in\C.
\end{align*}
We define $\top$ as $\bot \rightarrow \bot$ and $\neg\varphi$ as $\varphi \rightarrow \bot$ for any formula~$\varphi$.
We let $\vars{\varphi}$ stand for set of variables occurring in all constraint atoms in formula~$\varphi$.
A \emph{theory} is a set of formulas.

In \HTC,
an \emph{interpretation} over $\X,\D$ is a pair $\langle h,t \rangle$
of valuations over $\X,\D$ such that $h\subseteq t$.
The interpretation is \emph{total} if $h=t$.
Given a denotation $\den{\cdot}$,
  an interpretation $\langle h,t \rangle$ \emph{satisfies} a formula~$\varphi$,
  written $\langle h,t \rangle \models \varphi$,
  if \begin{enumerate}
  \item $\langle h,t \rangle \not\models \bot$
  \item $\langle h,t \rangle \models c \text{ if } h\in \den{c}$ \label{item:htc:atom}
  \item $\langle h,t\rangle \models \varphi \land \psi \text{ if }  \langle h,t\rangle \models \varphi \text{ and }  \langle h,t\rangle \models \psi$
  \item $\langle h,t\rangle \models \varphi \lor \psi \text{ if }  \langle h,t\rangle \models \varphi \text{ or }  \langle h,t\rangle \models \psi$
  \item $\langle h,t\rangle \models \varphi \rightarrow \psi
    \text{ if }\langle w,t\rangle \not\models \varphi \text{ or } \langle w,t\rangle \models \psi
    \text{ for }w\in\{h,t\}$
  \end{enumerate}
We say that an interpretation~$\tuple{h,t}$ is a model of a theory~$\Gamma$,
written $\tuple{h,t} \models \Gamma$,
when $\tuple{h,t} \models \varphi$ for every $\varphi \in \Gamma$.
We write $\Gamma \equiv \Gamma'$ if $\Gamma$ and $\Gamma'$ have the same models.
We omit braces whenever $\Gamma$ (resp.~$\Gamma'$) is a singleton.
A (total) interpretation $\langle t,t\rangle$ is an \emph{equilibrium model} of a theory~$\Gamma$,
if $\langle t,t\rangle \models \Gamma$ and there is no $h\subset t$
such that $\langle h,t\rangle \models \Gamma$.

One last comment regarding propositional constraint atoms ``$\prAt = \mathbf{t}$'' $\in \C$.
Although, in principle, we may have valuations assigning $\prAt$ any arbitrary value (like, for instance $v(\prAt)=4$),
in practice, there is no way to derive those values (assuming variable $\prAt$ is not used in other constraints),
so any equilibrium model $\tuple{t,t}$ eventually assigns either $t(\prAt)=\mathbf{t}$ or $t(\prAt)=\undefined$
(that, in this context, captures that $\prAt$ does not hold).

 \section{Logical Characterization of Answer Set Programming Modulo Theory}\label{sec:approach}

We now device a variant of \HTC\ for \tprogram{s} and
show that it corresponds to their original semantics in terms of a program transformation
(cf.~Section~\ref{sec:background};~\citeNP{gekakaosscwa16a}).
This provides us with an \HTC-based semantics for the theory reasoning framework of \clingo\ that allows for formal
elaborations of \tprogram{s}.
To this end, we introduce the concept of an \emph{abstract theory}~\AT\ which is our key instrument
for establishing more fine-grained formal foundations.
With it, we revisit the transformation-based approach and give a formal account of \AT-solutions in
Section~\ref{sec:translation:revisited}.

\subsection{Abstract theories}

An \emph{abstract theory}~\AT\ is a triple
\(
\langle \TheoryAtoms, \mathcal{S}, \comp{\cdot} \,\rangle
\)
where~\TheoryAtoms\ is a set of \emph{theory atoms},
$\mathcal{S} \subseteq 2^\TheoryAtoms$ is a set of \AT-satisfiable sets of theory atoms, and
$\comp{\cdot}:\TheoryAtoms \rightarrow\TheoryAtoms$ is a function mapping theory atoms to their \emph{complement}
such that $\comp{\comp{\thAt}}=\thAt$ for any $\thAt \in\TheoryAtoms$.
We define $\comp{S} = \{\comp{\thAt}\mid\thAt\in S\}$ for any set~$S\subseteq\TheoryAtoms$.
Note that the set $\mathcal{S}$ acts as an oracle whose rationality is beyond our reach.

Despite the limited structure of such theories,
some simple properties can be formulated:
A set $S\subseteq\TheoryAtoms$ of theory atoms is
\begin{itemize}
\item \emph{consistent}, if $\{\thAt,\comp{\thAt}\}\not\subseteq S$ for all $\thAt\in\TheoryAtoms$,
\item \emph{complete}, if $\thAt\in S$ or $\comp{\thAt}\in S$ for all $\thAt\in\TheoryAtoms$, and
\item \emph{closed}, if $\thAt\in S$ implies $\comp{\thAt}\in S$.
\end{itemize}
Accordingly, an abstract theory $\AT = \langle \TheoryAtoms, \mathcal{S}, \comp{\cdot} \,\rangle$
is \emph{consistent} or \emph{complete} if all its \mbox{\AT-satisfiable} sets $S\in\mathcal{S}$ are \emph{consistent} or
\emph{complete}, respectively.
Note that any closed set $S\neq \emptyset$ is inconsistent but can be $\AT$-\nolinebreak{}satisfiable (e.g., for paraconsistent theories).
We mostly use closed sets to describe subtypes of theory atoms, rather than satisfiable solutions.

As an example, consider an abstract theory $\LC=\langle\TheoryAtoms,\mathcal{S},\comp{\cdot}\rangle$ of linear equations,
where
\begin{itemize}
\item \TheoryAtoms\ is the set of all expressions of form \eqref{clingcon:linear:constraint},
\item $\mathcal{S}$ is a set of subsets $S\subseteq\TheoryAtoms$ of expressions of form \eqref{clingcon:linear:constraint}, and
\item the complement function is defined as
$\comp{\code{\&sum\{\dots\}} \prec c}=\code{\&sum\{\dots\}}\mathrel{\comp{\prec}}c$
  with
  $\comp{\code{<=}}\eqdef \code{>}$,
  $\comp{\code{=}}\eqdef\code{!=}$,
  $\comp{\code{!=}}\eqdef\code{=}$,
  $\comp{\code{<}}\eqdef\code{>=}$,
  $\comp{\code{>}}\eqdef\code{<=}$, and
  $\comp{\code{>=}}\eqdef\code{<}$.
\end{itemize}
Note that the set of theory atoms in \LC\ is closed.
Although we expect theories of linear equations to be consistent,
we have yet no means to establish such a property.

However,
we may make sense of certain theories, like linear equations, by associating a rational way of constructing
satisfiable sets of theory atoms.
To this end,
we relate abstract theories to a denotational semantics similar to~\HTC\ (although other choices exist).
Such theory-specific structures allow us to establish properties of abstract theories
and ultimately characterize their integration into ASP.
Given an abstract theory~$\AT=\langle \TheoryAtoms, \mathcal{S}, \comp{\cdot} \,\rangle$,
we define a \emph{structure} as a tuple
\(
(\X_\AT,\D_\AT, \varsATfunction, \den{\cdot}_\AT),
\)
where
\begin{enumerate}
\item $\X_\AT$ is a set of variables,
\item $\D_\AT$ is a set of domain elements,
\item ${\varsATfunction: \TheoryAtoms \rightarrow 2^{\X_\AT}}$ is a function giving the set of variables contained in a theory atom
  such that~$\varsATfunction(\thAt) = \varsATfunction(\comp{\thAt})$ for all theory atoms~$\thAt\in\TheoryAtoms$,
\item $\V_\AT=\{v \mid v: \X_\AT \rightarrow \D_\AT\}$ is the set of all valuations over~$\X_\AT$ and~$\D_\AT$, and
\item $\den{\cdot}_\AT: \TheoryAtoms \rightarrow 2^{\V_\AT}$ is a function mapping
  theory atoms to sets of valuations such that
  $$v \in \den{\thAt}_\AT \text{ iff } w \in \den{\thAt}_\AT$$
  for all theory atoms~$\thAt\in\TheoryAtoms$ and every pair of valuations~$v,w$ agreeing on the value of all variables~$\varsAT{\thAt}$ occurring in~$\thAt$.
\end{enumerate}
Whenever an abstract theory \AT\ is associated with such a structure, we call it \emph{structured}
(rather than abstract).

Given a set $S$ of theory atoms, we define its denotation as $\den{S}_\AT \eqdef \bigcap_{\thAt\in S}\den{\thAt}_\AT$.

We define a theory~$\AT=\langle \TheoryAtoms, \mathcal{S}, \comp{\cdot} \,\rangle$
structured by $(\X_\AT,\D_\AT,\varsATfunction, \den{\cdot}_\AT)$
as \emph{compositional},
if
$\mathcal{S} = \{S \subseteq \TheoryAtoms \mid \den{S}_\AT \neq \emptyset\}$, that is, a set $S$ is $\AT$-satisfiable iff its denotation is not empty.

As an example,
let us associate the theory of linear equations \LC\ with the structure $(\X_\LC,\D_\LC, \varsTfunction{\LC},\den{\cdot}_\LC)$, where
$\X_\LC$ is an infinite set of integer variables,
$\D_\LC=\mathbb{Z}$,
$\varsT{\LC}{\code{\&sum\{\mathit{k_1*x_1};\dots;\mathit{k_n*x_n}\}} \prec k_0}=\{x_1,\cdots,x_n\}$, and
\begin{multline}
  \den{\code{\&sum\{\mathit{k_1*x_1};\dots;\mathit{k_n*x_n}\}} \prec k_0}_\LC
  =\\\textstyle
  \{v\in\mathcal{V}_\LC \mid \{k_1,v(x_1),\dots k_m,v(x_m)\}\subseteq\mathbb{Z}, \sum_{1\leq i \leq n}k_i*v(x_i)\prec k_0\}.
\end{multline}
With \LC,
a set $S$ of theory atoms capturing linear equations is \LC-satisfiable, if
\(
\den{S}_\LC
\)
is non-empty.
Once \LC\ is structured this way, we can establish the following properties.
\begin{proposition}\label{prop:lc:compositional}\label{prop:lc:consistent}
  The theory \LC\ structured by $(\X_\LC,\D_\LC,\varsTfunction{\LC}, \den{\cdot}_\LC)$ is compositional and consistent.
\end{proposition}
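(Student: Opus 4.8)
The plan is to handle the two assertions separately: compositionality follows essentially by unwinding the way $\LC$-satisfiability was defined, while consistency reduces to a short case analysis establishing that a comparison and its complement can never hold simultaneously over $\mathbb{Z}$.

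For compositionality, recall that once $\LC$ was equipped with its structure $(\X_\LC,\D_\LC,\varsTfunction{\LC},\den{\cdot}_\LC)$, a set $S$ of theory atoms was declared $\LC$-satisfiable exactly when $\den{S}_\LC \neq \emptyset$. Thus $\mathcal{S} = \{S \subseteq \TheoryAtoms \mid \den{S}_\LC \neq \emptyset\}$ holds by construction, which is precisely the defining condition of a compositional theory. No work beyond this observation is required.

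For consistency, I would first invoke compositionality to restate the goal: every $S$ with $\den{S}_\LC \neq \emptyset$ must be \emph{consistent}, i.e.\ it may not contain both $\thAt$ and $\comp{\thAt}$ for any theory atom $\thAt$. Since $\den{S}_\LC = \bigcap_{c\in S}\den{c}_\LC$, it suffices to prove that $\den{\thAt}_\LC \cap \den{\comp{\thAt}}_\LC = \emptyset$ for every $\thAt$ of form \eqref{clingcon:linear:constraint}; given this, $\{\thAt,\comp{\thAt}\}\subseteq S$ would force $\den{S}_\LC = \emptyset$, contradicting satisfiability. Writing $\thAt = \code{\&sum\{\dots\}}\prec k_0$, its complement $\comp{\thAt}$ keeps the same sum expression, the same variables (as $\varsT{\LC}{\thAt} = \varsT{\LC}{\comp{\thAt}}$) and the same right-hand side $k_0$, replacing only $\prec$ by $\comp{\prec}$. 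Hence the integrality side-conditions in both denotations coincide, and for any valuation $v$ meeting them the quantity $s = \sum_i k_i\,v(x_i)$ is a fixed integer, with $v \in \den{\thAt}_\LC$ amounting to $s \prec k_0$ and $v \in \den{\comp{\thAt}}_\LC$ to $s \comp{\prec} k_0$.

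The crux, and really the only thing to verify, is that over $\mathbb{Z}$ each complementary pair of comparison symbols is mutually exclusive and exhaustive: for fixed integers $s$ and $k_0$, exactly one of $s \prec k_0$ and $s \comp{\prec} k_0$ holds. This is a short check running through the three pairs $(\code{<=},\code{>})$, $(\code{=},\code{!=})$ and $(\code{<},\code{>=})$, each of which partitions the order on $\mathbb{Z}$. Consequently no $v$ can satisfy both comparisons, giving $\den{\thAt}_\LC \cap \den{\comp{\thAt}}_\LC = \emptyset$ and closing the argument. I expect no real obstacle here; the only point requiring slight care is to confirm that the two denotations impose identical integrality requirements, so that the emptiness is forced purely by the exclusive comparison relations and is not an artefact of a variable being undefined in one denotation but not the other.
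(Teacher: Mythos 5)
Your proposal is correct and follows essentially the same route as the paper's own proof: compositionality is immediate from the definition of $\LC$-satisfiability via $\den{S}_\LC \neq \emptyset$, and consistency is obtained by showing $\den{\thAt}_\LC \cap \den{\comp{\thAt}}_\LC = \emptyset$, so that any set containing a complementary pair has empty denotation. Your extra checks (identical integrality side-conditions and the explicit case analysis of the complementary comparison pairs) merely spell out what the paper leaves as "which is a contradiction," and the worry about undefined variables is moot since valuations in $\V_\LC$ are total functions into $\D_\LC = \mathbb{Z}$.
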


Whenever a theory \AT\ is compositional,
we can define an associated entailment relation $\models_\AT$ so that $S \models_\AT \thAt$ when $\den{S}_\AT \subseteq \den{\thAt}_\AT$.
If $S$ is a singleton, we omit brackets.
For instance, it is easy to see that $\code{\&sum\{x;y\}}\geq 1 \models_\LC \code{\&sum\{x;y\}}\geq 0$ since any integer valuation $v$ such that $v(x)+v(y)\geq 1$ must satisfy $v(x)+v(y)\geq 0$ as well.
It is easy to see that this entailment relation is monotonic, that is, $S \models_\AT \thAt$ implies $S \cup S' \models_\AT \thAt$.
This is because, when $\den{S}_\AT \subseteq \den{\thAt}_\AT$, we obtain $\den{S \cup S'}_\AT = \den{S}_\AT \cap \den{S'}_\AT \subseteq \den{\thAt}_\AT$ too.
Notice that, in general, most non-monotonic formalisms are non-compositional in the sense that their satisfiability condition $S \in \mathcal{S}$ usually depends on the whole set $S$ of theory atoms and cannot be described in terms of the individual satisfiability of each atom $\thAt \in S$.
Compositional theories are interesting from an implementation point of view, since they allow for handling partial assignments that can be extended monotonically.

Some compositional theories have a complement $\comp{\thAt}$ whose denotation is precisely the set-complement of $\den{\thAt}_\AT$, that is, $\den{\comp{\thAt}}_\AT = \V_\AT \setminus \den{\thAt}_\AT$.
In this case, we say that the complement is \emph{absolute}.
The compositional theory of linear constraints \LC\ has an absolute complement:
For instance, $\den{\comp{\code{\&sum\{x;y\}=4}}}=\den{\code{\&sum\{x;y\}!=4}}=\V_\LC \setminus \den{\code{\&sum\{x;y\}=4}}$.
Examples of non-absolute complements may arise when we deal, for instance, with multi-valued logics (like Kleene's or \L ukasiewicz')
where we may have valuations that are not models of a formula nor its complement (assuming we use negation for that role).
Having an absolute complement directly implies that the theory is consistent.
This is because the denotations $\den{\thAt}_\AT$ and $\den{\comp{\thAt}}_\AT$ are disjoint, and so, any set including $\{\thAt,\comp{\thAt}\}$ is \AT-unsatisfiable.
\begin{proposition}\label{prop:entailment}
  For any compositional theory $\AT=\langle \TheoryAtoms, \mathcal{S}, \comp{\cdot} \,\rangle$
  with an absolute complement $\comp{\cdot}$,
  any $S \subseteq \TheoryAtoms$ and $\thAt \in \TheoryAtoms$,
  we have that
  $S \models_\AT\thAt$ iff $(S \cup \{\comp{\thAt}\}) \not\in \mathcal{S}$.
\end{proposition}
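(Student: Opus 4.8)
The plan is to reduce both sides of the claimed equivalence to statements about denotations and then connect them through a single elementary set-theoretic identity. The three ingredients I would unfold are: the entailment relation, which by definition gives that $S \models_\AT \thAt$ holds iff $\den{S}_\AT \subseteq \den{\thAt}_\AT$; compositionality, which turns membership $(S \cup \{\comp{\thAt}\}) \in \mathcal{S}$ into non-emptiness $\den{S \cup \{\comp{\thAt}\}}_\AT \neq \emptyset$, and hence its negation into $\den{S \cup \{\comp{\thAt}\}}_\AT = \emptyset$; and the absolute-complement hypothesis, which supplies $\den{\comp{\thAt}}_\AT = \V_\AT \setminus \den{\thAt}_\AT$.

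First I would rewrite the right-hand side. Using the definition of the denotation of a set, $\den{S \cup \{\comp{\thAt}\}}_\AT = \den{S}_\AT \cap \den{\comp{\thAt}}_\AT$, and substituting the absolute complement yields $\den{S}_\AT \cap (\V_\AT \setminus \den{\thAt}_\AT)$. Consequently, $(S \cup \{\comp{\thAt}\}) \not\in \mathcal{S}$ is equivalent, via compositionality, to the emptiness condition $\den{S}_\AT \cap (\V_\AT \setminus \den{\thAt}_\AT) = \emptyset$.

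It then remains to invoke the elementary fact that, for subsets $A$ and $B$ of $\V_\AT$, one has $A \cap (\V_\AT \setminus B) = \emptyset$ iff $A \subseteq B$. Applying this with $A = \den{S}_\AT$ and $B = \den{\thAt}_\AT$ -- both of which are subsets of $\V_\AT$ -- closes the chain, since $\den{S}_\AT \subseteq \den{\thAt}_\AT$ is precisely the unfolded form of $S \models_\AT \thAt$. Reading the equivalences in sequence delivers the stated biconditional in both directions at once.

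I expect no genuine obstacle here: the argument is a direct consequence of the definitions. The only points deserving a line of care are that the denotation of a union distributes as an intersection (immediate from $\den{S}_\AT = \bigcap_{\thAt\in S}\den{\thAt}_\AT$) and that both denotations lie inside $\V_\AT$, so that the set-complement identity is applicable. I would also flag explicitly where each hypothesis enters -- compositionality (used twice, once to justify that $\models_\AT$ is even defined and once to convert non-membership in $\mathcal{S}$ into emptiness) and absolute complement -- since the equivalence fails if either is dropped.
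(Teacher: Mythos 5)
Your proof is correct and follows essentially the same route as the paper's: both arguments use compositionality to replace $(S \cup \{\comp{\thAt}\}) \not\in \mathcal{S}$ by $\den{S \cup \{\comp{\thAt}\}}_\AT = \emptyset$ and then exploit the absolute complement to relate this to $\den{S}_\AT \subseteq \den{\thAt}_\AT$. The only difference is presentational --- you chain biconditionals through the identity $A \cap (\V_\AT \setminus B) = \emptyset$ iff $A \subseteq B$, where the paper proves the two directions separately (the converse by contraposition) --- which is a harmless streamlining of the same argument.
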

 \subsection{Transformation-based semantics revisited}
\label{sec:translation:revisited}

With a firm definition of what constitutes a theory \AT,
we can now make precise the definition of a \AT-solution and \AT-stable model of a \tprogram\ given in~\eqref{eq:transformation}.
For clarity, we refine those definitions by further specifying the subset of external theory atoms in \TheoryAtomsES:
Given an abstract theory $\AT = \langle \TheoryAtoms, \mathcal{S}, \comp{\cdot} \,\rangle$ and
a set~${\TheoryAtomsES \subseteq \TheoryAtoms}$ of external theory atoms,
we define $S\in\mathcal{S}$ as a \emph{$\langle\AT,\TheoryAtomsES\rangle$-solution},
if $S\cup(\comp{\TheoryAtomsES\setminus S})$ is \AT-satisfiable.
This set $\comp{\TheoryAtomsES\setminus S}$ acts as a ``completion'' of $S$, adding all complement atoms $\comp{\thAt}$ for every external atom $\thAt$ that does not occur explicitly in $S$.
Sometimes, this completion does not add any atom to $S$: this happens when, for each external $\thAt \in \TheoryAtomsES$, either $\thAt \in S$, $\comp{\thAt} \in S$ or both.
We say that a set of theory atoms $S$ is \TheoryAtomsES-\emph{complete} if $S\cup(\comp{\TheoryAtomsES\setminus S})=S$ and \TheoryAtomsES-\emph{incomplete} otherwise.
For instance, consider an abstract theory from \LC\ with the theory atoms
$\TheoryAtoms=\{\thAt_1,\thAt_2,\thAt_3,\thAt_4\}$ with $\thAt_1=(\sumxy)$, $\thAt_2=(\sumyz)$ given above,
plus $\thAt_3=(\sumxyc)$, $\thAt_4=(\sumyzc)$, so that $(\thAt_1,\thAt_3)$ and $(\thAt_2,\thAt_4)$ are pairwisely complementary.
Suppose we only have one external atom $\TheoryAtomsES=\{\thAt_1\}$.
Then set $S=\{\thAt_2\}$ is \TheoryAtomsES-incomplete since $S \cup(\comp{\TheoryAtomsES\setminus S})=\{\thAt_2,\comp{\thAt_1}\}=\{\thAt_2,\thAt_3\}$ is completed with $\thAt_3$.
On the other hand, set $S=\{\thAt_1,\thAt_2\}$ is \TheoryAtomsES-complete because we do have information about the external atom $\thAt_1$, and so, $\comp{\TheoryAtomsES\setminus S}$ does not provide any additional information.
The next result shows that \TheoryAtomsES-complete solutions suffice when considering the existence of solution:
\begin{proposition}\label{prop:complete}
  Let $\AT = \langle \TheoryAtoms, \mathcal{S}, \comp{\cdot} \,\rangle$ be an abstract theory
  and $\TheoryAtomsES\subseteq \TheoryAtoms$ be a a set of external atoms.

  For any \emph{\tsolution} $S\subseteq\TheoryAtoms$, we have:
  \begin{enumerate}
  \item $S'=S\cup(\comp{\TheoryAtomsES\setminus S})$ is
    \TheoryAtomsES-complete and $S'$ is also a \emph{\tsolution}.\label{prop:complete:tsoltocomp}
  \item If $S$ is \TheoryAtomsES-complete, then $S$ is $\AT$-satisfiable.\label{prop:complete:comptosat}
  \end{enumerate}
\end{proposition}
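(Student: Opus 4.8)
The plan is to reduce both items to a single set-theoretic observation about the completed set $S' = S \cup (\comp{\TheoryAtomsES \setminus S})$, unfolding only the definitions of $\TheoryAtomsES$-completeness and of a \tsolution. The operative content of the latter is that the completion $S \cup (\comp{\TheoryAtomsES \setminus S})$ is \AT-satisfiable, and the only structural facts I will need are that $\comp{\cdot}$ is applied pointwise to sets and the trivial monotonicity $S \subseteq S'$.

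I would first dispatch item~\ref{prop:complete:comptosat}, which is immediate. If $S$ is a \tsolution\ that is $\TheoryAtomsES$-complete, then by definition of completeness $S \cup (\comp{\TheoryAtomsES \setminus S}) = S$. By definition of a \tsolution, the set $S \cup (\comp{\TheoryAtomsES \setminus S})$ is \AT-satisfiable; since this set equals $S$, we conclude that $S$ itself is \AT-satisfiable, as required. Note that this step is exactly where completeness is used: without it, solution-hood constrains only the completion, not $S$ on its own.

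For item~\ref{prop:complete:tsoltocomp} I would establish completeness and solution-hood of $S'$ in turn. For completeness it suffices to show $\comp{\TheoryAtomsES \setminus S'} \subseteq S'$, since the reverse inclusion in $S' \cup (\comp{\TheoryAtomsES \setminus S'}) = S'$ is trivial. Take any $\thAt \in \TheoryAtomsES \setminus S'$. Because $S \subseteq S'$ and $\thAt \notin S'$, we also have $\thAt \notin S$, hence $\thAt \in \TheoryAtomsES \setminus S$; therefore $\comp{\thAt} \in \comp{\TheoryAtomsES \setminus S} \subseteq S'$. This gives the desired inclusion, so $S'$ is $\TheoryAtomsES$-complete. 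For solution-hood, completeness now yields $S' \cup (\comp{\TheoryAtomsES \setminus S'}) = S'$, so it remains to check that $S'$ is \AT-satisfiable. But $S' = S \cup (\comp{\TheoryAtomsES \setminus S})$ is precisely the set whose \AT-satisfiability is guaranteed by the assumption that $S$ is a \tsolution. Hence $S'$ is itself a \tsolution.

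I do not expect any genuine obstacle here; the statement is essentially a bookkeeping lemma asserting that one round of completion already produces a fixpoint. The one place to be careful is the implication $\thAt \in \TheoryAtomsES \setminus S' \Rightarrow \thAt \in \TheoryAtomsES \setminus S$, which rests on $S \subseteq S'$ and getting the direction of this containment right: it is exactly what makes $S'$ stable under a second completion, and hence what drives both the $\TheoryAtomsES$-completeness of $S'$ and the reduction of its solution-hood to the already-assumed satisfiability of the first completion.
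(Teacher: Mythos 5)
Your proof is correct and takes essentially the same route as the paper's: both items are reduced to the fixpoint observation that one round of completion is already \TheoryAtomsES-complete, after which solution-hood of $S'$ and (for complete $S$) \AT-satisfiability follow immediately from the definitions. The only cosmetic difference is that you verify $\comp{\TheoryAtomsES\setminus S'}\subseteq S'$ by a direct element chase through $S\subseteq S'$, whereas the paper first rewrites the second completion via the identities $\comp{A\cup B}=\comp{A}\cup\comp{B}$ and $\comp{A\setminus B}=\comp{A}\setminus\comp{B}$ before performing the same chase.
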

For consistent theories, completing a set $S$ with $\comp{\TheoryAtomsES\setminus S}$ may be convenient for any external atom~$\thAt$ whose complement $\comp{\thAt}$ is not external, as happened with $\thAt_1=(\sumxy) \in \TheoryAtomsES$ and $\comp{\thAt_1}=(\sumxyc) \not\in \TheoryAtomsES$ in our previous example.
However, if the complement, in its turn, is also external and not in $S$, then we also complete $S$ with $\comp{\comp{\thAt}}=\thAt$,
and so, the completed set is inconsistent.
If, additionally, $\TheoryAtomsES$ is closed, it contains all the complements of its elements, and so:
  \begin{proposition}
    \label{prop:consistentclosed}
    Let $\AT = \langle \TheoryAtoms, \mathcal{S}, \comp{\cdot} \,\rangle$ be a consistent abstract theory with a \emph{closed} set of external atoms $\TheoryAtomsES\subseteq \TheoryAtoms$.
Then, all \tsolution s $S\subseteq\TheoryAtoms$ are \TheoryAtomsES-complete.
  \end{proposition}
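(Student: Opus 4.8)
The plan is to argue by contradiction, showing that any \tsolution\ that fails to be \TheoryAtomsES-complete would force the existence of an \AT-satisfiable yet inconsistent set, contradicting the consistency of \AT. So I would let $S$ be an arbitrary \tsolution\ and assume, for contradiction, that $S$ is \TheoryAtomsES-incomplete, i.e.\ $\comp{\TheoryAtomsES \setminus S} \not\subseteq S$ (this is just the negation of $S \cup (\comp{\TheoryAtomsES \setminus S}) = S$). Then some element of $\comp{\TheoryAtomsES \setminus S}$ lies outside $S$, and since every such element has the form $\comp{\thAt}$ for some $\thAt \in \TheoryAtomsES \setminus S$, I obtain an external atom $\thAt \in \TheoryAtomsES$ with $\thAt \notin S$ and $\comp{\thAt} \notin S$ — an external atom about which $S$ carries no information either way.

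The key step is to exploit closedness. Since $\TheoryAtomsES$ is closed and $\thAt \in \TheoryAtomsES$, we also have $\comp{\thAt} \in \TheoryAtomsES$; together with $\comp{\thAt} \notin S$ this gives $\comp{\thAt} \in \TheoryAtomsES \setminus S$. I would then read off which atoms the completion $\comp{\TheoryAtomsES \setminus S}$ contributes: from $\thAt \in \TheoryAtomsES \setminus S$ it contains $\comp{\thAt}$, and from $\comp{\thAt} \in \TheoryAtomsES \setminus S$ it contains $\comp{\comp{\thAt}} = \thAt$, using the involution property of the complement. Hence the completed set $S' \eqdef S \cup (\comp{\TheoryAtomsES \setminus S})$ contains both $\thAt$ and $\comp{\thAt}$, so $\{\thAt, \comp{\thAt}\} \subseteq S'$ and $S'$ is \emph{inconsistent}.

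To close the argument I would recall that $S$ being a \tsolution\ means precisely that $S' = S \cup (\comp{\TheoryAtomsES \setminus S})$ is \AT-satisfiable, i.e.\ $S' \in \mathcal{S}$. Since \AT\ is consistent, every set in $\mathcal{S}$ is consistent, contradicting the inconsistency of $S'$ just derived. Therefore no such $\thAt$ exists, $S$ is \TheoryAtomsES-complete, and the proposition follows. The proof is short, so there is no serious technical hurdle; the only delicate point is the bookkeeping in the second paragraph — one must notice that closedness turns $\comp{\thAt}$ into a second unassigned external atom, so that completing $S$ re-introduces $\thAt$ itself via $\comp{\comp{\thAt}} = \thAt$. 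This is exactly the phenomenon flagged in the discussion preceding the statement, and it is where the two hypotheses (consistency and closedness) interlock. Note also that Proposition~\ref{prop:complete} is not needed here: the defining \AT-satisfiability of the completion already supplies $S' \in \mathcal{S}$.
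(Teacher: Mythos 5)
Your proof is correct and takes essentially the same route as the paper's: both argue by contradiction, using closedness of \TheoryAtomsES\ to show that an external atom $\thAt$ with $\thAt \notin S$ and $\comp{\thAt} \notin S$ forces both $\thAt$ and $\comp{\thAt}$ into the completion $\comp{\TheoryAtomsES \setminus S}$, so the completed set is inconsistent, hence not \AT-satisfiable by consistency of \AT, contradicting the definition of a \tsolution. The only difference is cosmetic: the paper names the witness as an element of $\comp{(\TheoryAtomsES\setminus S)}$ while you name the underlying external atom, but the bookkeeping is identical.
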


Back to our example, if we take now $\TheoryAtomsES=\{\thAt_1,\thAt_3\}$ as external atoms, note that this set is closed since $\comp{\thAt_1}=\thAt_3$ and $\comp{\thAt_3}=\thAt_1$.
The only possibility for an \TheoryAtomsES-incomplete solution~$S$ would be that none of these two atoms were included in~$S$.
But then, we would have to complete with $\comp{\TheoryAtomsES\setminus S}=\{\thAt_1,\thAt_3\}$ and the result would be inconsistent,
since these two atoms complement one another.

In analogy to Section~\ref{sec:background},
a set $X\subseteq \Atoms\cup\TheoryAtoms$ of atoms is a $\langle\AT,\TheoryAtomsES\rangle$-\emph{stable model} of a \tprogram\ $P$,
if there is some \tsolution\ $S$ such that $X$ is a stable model of the program in~\eqref{eq:transformation}.
As an illustration, take again \conprogram\ $P_{(\ref{ex:clingcon:rule:one}/\ref{ex:clingcon:rule:two})}$
with abstract theory \LC,
the already seen theory atoms $\TheoryAtoms=\{\thAt_1,\thAt_2,\thAt_3,\thAt_4\}$, and
the closed subset of external atoms $\TheoryAtomsES=\{\thAt_1,\thAt_3\}$.
This program has two $\tuple{\LC,\TheoryAtomsES}$-stable models:
\begin{align*}
  X_1 &= \{\code{a},\thAt_1,\thAt_2\} = \{\code{a},\sumxy,\sumyz\}\text{ and}\\
  X_2 &= \{\thAt_3\} = \{\sumxyc\}
\end{align*}
To verify $X_1$, take the \lsolution\ $\{\thAt_1,\thAt_2\}=\{\code{\&sum\{x;y\}=4},\code{\&sum\{y;z\}=2}\}$ and the resulting program transformation
\begin{align}
  \code{a}              &\leftarrow \code{\&sum\{x;y\}=4}\nonumber\\
  \code{\&sum\{y;z\}=2} &\leftarrow \code{a}             \label{ex:con2}\\
  \code{\&sum\{x;y\}=4} &\leftarrow                      \nonumber
\end{align}

We see that $X_1$ is a stable model of the logic program and, as such, also a $\tuple{\LC,\TheoryAtomsES}$-stable model.
Similarly, with \lsolution\ $\{\thAt_3\}=\{\sumxyc\}$, we get program
\begin{align}
  \code{a}              &\leftarrow \code{\&sum\{x;y\}=4} \nonumber\\
  \code{\&sum\{y;z\}=2} &\leftarrow \code{a}              \nonumber\\
  \code{\&sum\{x;y\}!=4}&\leftarrow                       \label{ex:con3}\\
                        &\leftarrow \code{\&sum\{y;z\}=2} \nonumber\\
                        &\leftarrow \code{\&sum\{y;z\}!=2}\nonumber
\end{align}
Again, we have $X_2$ as a stable model, confirming it as a $\tuple{\LC,\TheoryAtomsES}$-stable model.
Notice that, in this case, the \tsolution\ $\{\thAt_1\}$ is not unique:
We could have also freely added any of the two founded atoms $\thAt_3=(\sumyz)$ or $\thAt_4=(\sumyzc)$ so that one of their respective constraints (the last two lines in the program above) would not be included in each case.
This shows that, for founded theory atoms not derived by any rule,
the represented linear equation, its complement, or none of the two are free to hold.
 \subsection{\AMT\ semantics based on \HTC}
\label{sec:htcsemantics}

We now present a direct encoding of a \tprogram\ $P$ as an \HTC\ theory.
This encoding is ``direct'' in the sense that it preserves the structure of $P$ rule by rule and atom by atom, only requiring the addition of a fixed set of axioms, one per each external atom.
As a first step, we start embodying compositional structured theories in \HTC.
To this end, we observe that the definitions of structured theories \AT\ and \HTC\ deal with quite similar concepts
(viz., a domain, a set of variables, valuation functions, and denotations for atoms).
So, in principle, it seems reasonable to establish a one-to-one correspondence.
However, the generality of \HTC\ allows us to go further by
tolerating different abstract theories in the same formalization.
For this reason, when we encode a compositional theory
$\AT= \langle \TheoryAtoms, \mathcal{S},\comp{\cdot}\ \rangle$ with a structure $(\X_\AT,\D_\AT,\varsATfunction,\den{\cdot}_\AT)$ into an
\HTC\ theory over a signature $\tuple{\X,\D,\C}$,
we just require $\X_\AT \subseteq \X$ and $\D_\AT \cup \{ \mathbf{t} \} \subseteq \D$,
so that the \HTC\ signature may also include variables and domain values other than the ones in \AT.
In particular, we assign the value~$\mathbf{t}$ to propositional variables that are true.
We then map each abstract theory atom $\thAt \in \TheoryAtoms$ into a corresponding \HTC-constraint atom $\htcsemanticsA(\thAt) \in \C$
with~$\vars{\htcsemanticsA(\thAt)} = \varsT{\AT}{\thAt}$.
We also require that the \HTC\ denotation satisfies:
\begin{align}
\den{\htcsemanticsA(\thAt)} \ \eqdef \
    \ \{v\in\V_{\X,\D} \mid \exists w\in\den{\thAt}_\AT, \ \restr{v}{\vars{\htcsemanticsA(\thAt})}=\restr{w}{\varsT{\AT}{\thAt}} \} \label{f:atomden}
\end{align}
Note that \HTC\ valuations $v\in\V_{\X,\D}$ are applied to a (possibly) larger set of variables $\X \supseteq \X_\AT$ but also range on a larger set of domain values $\Du \supset \D_\AT$, including the element $\undefined \not\in \D_\AT$ to represent \HTC-undefined variables.
The denotation $\den{\htcsemanticsA(\thAt)}$ collects all possible \HTC-valuations that coincide with some \AT-valuation $w \in \den{\thAt}_\AT$ on the atom's variables $\varsT{\AT}{\thAt}$, letting everything else vary freely.
We write $\htcsemanticsA(S)$ for $\{ \htcsemanticsA(\thAt) \mid \thAt \in S \}$ for $S \subseteq \TheoryAtoms$.

A first interesting result shows that this mapping of denotations preserves \AT-satisfiability:
\begin{proposition}\label{prop:translation.satisfiable}
  Given a compositional theory~$\AT= \langle \TheoryAtoms, \mathcal{S}, \comp{\cdot}\ \rangle$,
  a set~$S \subseteq \TheoryAtoms$ of theory atoms is \AT-satisfiable iff~$\htcsemanticsA(S)$ is satisfiable in~$\HTC$.
\end{proposition}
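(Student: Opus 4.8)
The plan is to reduce both notions to a single statement about (non)emptiness of denotations and then transfer a witness between the two formalisms. Since $\AT$ is compositional, $\AT$-satisfiability of $S$ means exactly that $\den{S}_\AT = \bigcap_{\thAt \in S}\den{\thAt}_\AT \neq \emptyset$. On the \HTC\ side, the satisfaction clause~\ref{item:htc:atom} evaluates a constraint atom only at the ``here'' component, so $\tuple{h,t} \models \htcsemanticsA(S)$ holds iff $h \in \den{\htcsemanticsA(\thAt)}$ for every $\thAt \in S$; choosing the total interpretation $\tuple{h,h}$ then shows that $\htcsemanticsA(S)$ is \HTC-satisfiable iff there is a valuation $h \in \V_{\X,\D}$ lying in $\bigcap_{\thAt \in S}\den{\htcsemanticsA(\thAt)}$. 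Thus it suffices to prove that this intersection is nonempty iff $\den{S}_\AT$ is.

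For the left-to-right direction I would take some $w \in \den{S}_\AT$ and lift it to the \HTC\ valuation $v$ defined by $v(x) = w(x)$ for $x \in \X_\AT$ and $v(x)=\undefined$ for $x \in \X \setminus \X_\AT$. Because $\varsAT{\thAt} \subseteq \X_\AT$ for every theory atom, we have $\restr{v}{\varsAT{\thAt}} = \restr{w}{\varsAT{\thAt}}$, and since $w \in \den{\thAt}_\AT$ this very $w$ serves as the existential witness required by~\eqref{f:atomden}. Hence $v \in \den{\htcsemanticsA(\thAt)}$ for all $\thAt \in S$, so $\htcsemanticsA(S)$ is satisfiable.

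The converse is the delicate part. Given $h \in \bigcap_{\thAt \in S}\den{\htcsemanticsA(\thAt)}$, for each $\thAt \in S$ the definition~\eqref{f:atomden} yields some $w_\thAt \in \den{\thAt}_\AT$ with $\restr{h}{\varsAT{\thAt}} = \restr{w_\thAt}{\varsAT{\thAt}}$. Since every $\AT$-valuation is total into $\D_\AT$, this equality forces $h$ to take a genuine value in $\D_\AT$ (never $\undefined$) on $\varsAT{\thAt}$. The main obstacle is that the witnesses $w_\thAt$ may differ from one atom to another, whereas $\den{S}_\AT$ demands a \emph{single} valuation lying in all $\den{\thAt}_\AT$ simultaneously. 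I would overcome this by gluing them: define $w : \X_\AT \to \D_\AT$ by $w(x) = h(x)$ for $x \in \bigcup_{\thAt \in S}\varsAT{\thAt}$ (well defined and $\D_\AT$-valued by the previous remark) and $w(x)$ arbitrary in $\D_\AT$ otherwise. For each $\thAt \in S$, the valuation $w$ then agrees with $w_\thAt$ on $\varsAT{\thAt}$, so the structural requirement that $\den{\thAt}_\AT$ depends only on the values of the atom's own variables gives $w \in \den{\thAt}_\AT$. Hence $w \in \den{S}_\AT$, and compositionality lets us conclude $S \in \mathcal{S}$, i.e.\ $S$ is $\AT$-satisfiable.

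The only points I expect to need extra care, besides the gluing step, are the degenerate case $S = \emptyset$ and the nonemptiness of $\D_\AT$ used to fill in the variables irrelevant to $S$; both I would dispatch quickly by appealing to the standing assumption that the domain is nonempty, so that the empty \HTC\ theory and the full valuation space $\V_\AT$ are simultaneously (trivially) satisfiable.
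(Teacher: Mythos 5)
Your proof is correct and follows essentially the same route as the paper's: compositionality reduces $\AT$-satisfiability to nonemptiness of $\den{S}_\AT$, an $\AT$-witness $w$ is lifted to an \HTC\ valuation via~\eqref{f:atomden} in one direction, and a single $\AT$-valuation is recovered from the \HTC\ witness in the other. Your explicit gluing of the per-atom witnesses $w_\thAt$ (justified by the structure condition that $\den{\thAt}_\AT$ depends only on the values of $\varsAT{\thAt}$) is precisely the step the paper compresses into the phrase ``by construction,'' so your write-up is, if anything, more careful --- including its treatment of the degenerate case $S=\emptyset$ and the nonemptiness of $\D_\AT$.
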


Let us now consider a compositional theory \AT\ in a \tprogram\ $P$ over~$\langle\Atoms,\TheoryAtoms,\TheoryAtomsES\rangle$
where \Atoms\ is a set of propositional atoms and $\TheoryAtomsES \subseteq \TheoryAtoms$ a subset of external theory atoms.
As with atoms in $\TheoryAtoms$,
we also encode each propositional atom $\regAt \in \Atoms$ as an \HTC-constraint atom
$\htcsemanticsA(\regAt) \eqdef \text{``}\prop{\regAt}=\mathbf{t}\text{''}$ in \C,
assuming we have a variable $\prop{\regAt} \in \X$ for each $\regAt \in \Atoms$.

We write $\htcsemanticsA(P)$ to stand for the atom-level translation of a \tprogram\ $P$.
That is, $\htcsemanticsA(P)$ is a theory containing one implication
\begin{align}\label{eq:htcsemantics:rule}
  &\htcsemanticsA(\anyAt_1)\land\dots\land \htcsemanticsA(\anyAt_n)\land\neg \htcsemanticsA(\anyAt_{n+1})\land\dots\land\neg \htcsemanticsA(\anyAt_{m})\rightarrow \htcsemanticsA(\anyAt_0)
\end{align}
for each rule $r\in P$ of form~\eqref{theory:rule}.
Additionally, when $\anyAt_0=\bot$ in \eqref{eq:htcsemantics:rule},
the atom translation is simply $\htcsemanticsA(\bot) \eqdef \bot$.
The complete translation of \tprogram~$P$ over~$\langle\Atoms,\TheoryAtoms,\TheoryAtomsES\rangle$
for compositional theory \AT\ (as defined above) into \HTC\
is denoted as~$\htcsemanticsP(P,\AT,\TheoryAtomsES)$ and
given by the union of $\htcsemanticsA(P)$ plus a formula
\begin{align}\label{eq:htcsemantics:external}
  \htcsemanticsA(\thAt) \lor \htcsemanticsA(\comp{\thAt})
    \qquad
    \text{ for each external theory atom }\thAt \in \TheoryAtomsES.
\end{align}
\begin{theorem}[Main Result]\label{thm:first_translation}
  Given a \tprogram\ $P$ over $\langle\Atoms,\TheoryAtoms,\TheoryAtomsES\rangle$ with \TheoryAtomsES\  closed, and
  a consistent, compositional theory $\AT=\langle \TheoryAtoms, \mathcal{S}, \comp{\cdot} \,\rangle$,
  there is a one-to-many correspondence between the $\langle\AT,\TheoryAtomsES\rangle$-stable models of~$P$ and
  the equilibrium models of~$\htcsemanticsP(P,\AT,\TheoryAtomsES)$ in \HTC\ such that
  $X$ is a $\langle\AT,\TheoryAtomsES\rangle$-stable model of $P$ iff
  there exists an equilibrium model~$\tuple{t,t}$ of theory~$\htcsemanticsP(P,\AT,\TheoryAtomsES)$
  that satisfies
  \begin{gather}\label{eq:1:thm:first_translation}
    X \ = \ \big\{ \ \anyAt\in \Atoms\cup\TheoryAtomsES \ \mid \ t \in \den{\htcsemanticsP(\anyAt)} \ \big\}
    \, \cup \, \big\{\, \thAt \in \TheoryAtomsDN \, \mid \, (B \to \thAt) \in P    \text{ and } \tuple{t,t} \models \htcsemanticsP(B) \ \big\}
  \end{gather}
  where~$\htcsemanticsP(B)$ stands for the result of applying~$\htcsemanticsP$ to all atoms occurring in conjunction~$B$.
\end{theorem}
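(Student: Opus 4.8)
The plan is to establish the biconditional by showing that the two-step transformation-based construction in~\eqref{eq:transformation} and the \HTC\ encoding $\htcsemanticsP(P,\AT,\TheoryAtomsES)$ induce the same stable models, mediated by the correspondence~\eqref{eq:1:thm:first_translation} between a set $X$ and a total valuation $t$. The natural bridge is the existing literature relating \HT\ (with classical denotations for propositional atoms) to ordinary stable models: once we strip away the theory layer, the rule translation~\eqref{eq:htcsemantics:rule} is exactly the standard \HT\ encoding of a logic program, so equilibrium models of the \emph{propositional part} correspond to stable models. The work is therefore to account correctly for the theory atoms and, crucially, for the disjunctive axioms~\eqref{eq:htcsemantics:external}, which play the role of the external-atom choices and the $\AT$-solution $S$.

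First I would fix the map between syntactic and semantic objects: given a total valuation $t$, read off $X$ via~\eqref{eq:1:thm:first_translation}, and conversely, given an $\langle\AT,\TheoryAtomsES\rangle$-stable model $X$ witnessed by a \tsolution\ $S$, construct a candidate $t$ that sets $t(\prop{\regAt})=\mathbf{t}$ exactly for $\regAt\in X\cap\Atoms$ and, on the theory variables $\X_\AT$, picks a valuation $w\in\den{S\cup\comp{\TheoryAtomsES\setminus S}}_\AT$ (non-empty by compositionality and the definition of \tsolution, via Proposition~\ref{prop:translation.satisfiable}). For the forward direction I would verify that this $t$ satisfies each translated rule and each external axiom: the external axiom $\htcsemanticsA(\thAt)\lor\htcsemanticsA(\comp{\thAt})$ holds because $S\cup\comp{\TheoryAtomsES\setminus S}$ is $\AT$-satisfiable and $\TheoryAtomsES$ is closed, so for every external $\thAt$ either $\thAt$ or $\comp{\thAt}$ lies in the completed set and hence its denotation contains $w$. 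The rule axioms hold because $X$ models the transformed program~\eqref{eq:transformation}. Then I would argue minimality: any $h\subset t$ satisfying $\htcsemanticsP(P,\AT,\TheoryAtomsES)$ would, by projecting to the propositional atoms and using that theory-atom denotations are upward closed (property~\ref{den:prt:0}) and variable-local (property~\ref{den:prt:2}), yield a strictly smaller model of the reduct, contradicting that $X$ is the least model of $\reduct{(\ref{eq:transformation})}{X}$.

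For the converse, starting from an equilibrium model $\tuple{t,t}$, I would recover $X$ by~\eqref{eq:1:thm:first_translation} and define $S=X\cap\TheoryAtomsES$ together with whatever founded atoms are forced; the external axioms guarantee that $t$ assigns a genuine theory valuation satisfying, for each external $\thAt\in\TheoryAtomsES$, either $\thAt$ or $\comp{\thAt}$, so that $S\cup\comp{\TheoryAtomsES\setminus S}$ is $\AT$-satisfiable and $S$ is an $\langle\AT,\TheoryAtomsES\rangle$-solution (here Proposition~\ref{prop:complete} and the closedness of $\TheoryAtomsES$, via Proposition~\ref{prop:consistentclosed}, ensure completeness is automatic). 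Then I would show $X$ is stable for the transformed program~\eqref{eq:transformation} by translating the equilibrium (minimality) condition on $t$ back into the least-model condition on the reduct, again leaning on the standard \HT/stable-model correspondence applied to the propositional skeleton.

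The hard part will be handling the \emph{founded} theory atoms in $\TheoryAtomsDN$ correctly, since these are exactly the atoms whose truth in $X$ is not pinned down by an external axiom but must be \emph{derived} through program rules. The second disjunct in~\eqref{eq:1:thm:first_translation} reads a founded atom $\thAt$ into $X$ precisely when some rule $B\to\thAt$ has its body satisfied by $\tuple{t,t}$; I must check that this reading is consistent with the least-model semantics of the reduct and that no spurious founded atom is forced by the theory valuation $t$ alone (recall a founded atom's constraint may happen to be satisfied by $w$ without being derived). Reconciling the purely syntactic ``derived-by-a-rule'' criterion for $\TheoryAtomsDN$ with the semantic denotation condition $t\in\den{\htcsemanticsA(\thAt)}$ — and confirming the claimed \emph{one-to-many} multiplicity arises exactly from the freedom in choosing the theory valuation $w$ over variables not constrained by $X$ — is where the delicate bookkeeping lies, and I would isolate it in a separate lemma before assembling the two directions.
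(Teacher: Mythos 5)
Your high-level plan is a genuinely different route from the paper's: you attempt a direct two-way verification between $\langle\AT,\TheoryAtomsES\rangle$-stable models and equilibrium models of $\htcsemanticsP(P,\AT,\TheoryAtomsES)$, whereas the paper interposes an auxiliary translation $\htcsemanticsP_2$ with fresh propositional variables $\prop{\thAt}$ that \emph{decouple} the derivation of a theory atom from the satisfaction of its constraint, proves correspondence of $\htcsemanticsP_2$ with the transformation \eqref{eq:transformation} via a splitting result (Propositions~\ref{prop:htc.tsols}--\ref{prop:second_translation}), and only then relates $\htcsemanticsP_2$ to $\htcsemanticsP$ through a chain of intermediate theories and valuation surgeries (Proposition~\ref{prop:translations}, Lemmas~\ref{lem:completion.of.definitions}--\ref{lem:aux3:prop:translations}). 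You correctly identify the crux (founded atoms in $\TheoryAtomsDN$ whose constraints may hold without being derived), and your use of Propositions~\ref{prop:complete}--\ref{prop:translation.satisfiable} for the satisfiability and completeness bookkeeping is sound. However, your pivotal minimality step contains a genuine gap.

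The claim that any $h\subset t$ with $\tuple{h,t}\models\htcsemanticsP(P,\AT,\TheoryAtomsES)$ ``projects to a strictly smaller model of the reduct'' fails on two counts. First, $h$ may differ from $t$ \emph{only} on theory variables of founded atoms, leaving the propositional part unchanged, so the projected atom set need not be strictly smaller than $X$; the contradiction in that case must come from a rule $\htcsemanticsA(B)\to\htcsemanticsA(\thAt)$ whose body stays satisfied at $\tuple{h,t}$ (its atoms lie in $\Atoms\cup\TheoryAtomsES$ and are pinned by axioms \eqref{eq:htcsemantics:external}) while $\htcsemanticsA(\thAt)$ is falsified --- an argument about individual rules, not about least models. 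Second, reading founded atoms into the projection by denotation membership $h\in\den{\htcsemanticsA(\thAt)}$ does not match the transformation: a founded atom whose variables are all pinned by external atoms can satisfy its constraint at $t$ (hence at $h$) without being derived, and then, depending on the chosen \tsolution\ $S$, the projected set either violates a constraint ${\leftarrow\thAt}$ in \eqref{eq:transformation} (if $\thAt\notin S$) or is not smaller than $X$ (if $\thAt\in S$) --- in general it is neither a model of the reduct nor strictly smaller. You flag this mismatch in your final paragraph but defer it to an unnamed lemma while simultaneously resting the forward direction on the very projection argument that the mismatch breaks; relatedly, your converse-direction choice of $S$ (``$X\cap\TheoryAtomsES$ together with whatever founded atoms are forced'') must be made precise as $S=\{\thAt\in\TheoryAtoms\mid t\in\den{\htcsemanticsA(\thAt)}\}$, as in Proposition~\ref{prop:htc.tsols}. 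This reconciliation is exactly where the paper spends its effort: Lemma~\ref{lem:completion.of.definitions} justifies identifying $\prop{\thAt}$ with $\htcsemanticsA(\thAt)$ for \emph{external} atoms only, and the explicit $h'$ constructions in Lemmas~\ref{lem:aux2b:prop:translations} and~\ref{lem:aux3:prop:translations} are what transfer minimality between the theory variables and the propositional skeleton. Until you supply an argument of comparable precision for these cases, the proposal does not go through.
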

The semantics of a \tprogram{} $P$ is then given by the equilibrium models of $\htcsemanticsP(P,\AT,\TheoryAtomsES)$ in \HTC.
Theorem~\ref{thm:first_translation} states that this semantics remains faithful to the program transformation.
Intuitively, formulas like $\eqref{eq:htcsemantics:rule}$ capture the rules in the \tprogram\ and are used for the same purpose, that is, to decide which founded atoms from $\TheoryAtoms \setminus \TheoryAtomsES$ can be eventually derived.
Furthermore, due to the minimization imposed to obtain an equilibrium model $\tuple{t,t}$,
if a founded atom $\thAt$ is not derived (that is, $\tuple{t,t} \not\models \htcsemanticsA(\thAt)$),
then all its variables $x \in \varsT{\AT}{\thAt}$ not occurring in external atoms are left undefined, $t(x)=\undefined$.
On the other hand,
the axiom \eqref{eq:htcsemantics:external} for external atoms $\thAt\in\TheoryAtomsES$
acts as a stronger version of the usual choice construct $\htcsemanticsA(\thAt)\vee\neg\htcsemanticsA(\thAt)$ in \HT.
That is, we can freely add $\htcsemanticsA(\thAt)$ or not but, when the latter happens,
we further provide evidence for the complement~$\htcsemanticsA(\comp{\thAt})$.

As an example, take theory \LC\ with structure $(\X_\LC,\D_\LC,\varsTfunction{\LC}, \den{\cdot}_\LC)$, and
assume that we represent each linear equation $\code{e}=\code{\&sum\{\mathit{k_1*x_1};\dots;\mathit{k_n*x_n}\}} \prec k_0$
as the \HTC\ constraint atom $\htcsemanticsA(\code{e}) =$ ``$k_1 \cdot x_1 + \dots + k_n \cdot x_n \prec k_0$'' with the following denotation:
\begin{align}
  \den{\htcsemanticsA(\code{e})}&=\{v\in\mathcal{V}_{\X,\D} \mid v'\in\den{\code{e}}_\LC, \ \restr{v}{\vars{\htcsemanticsA(\code{e})}}=\restr{v'}{\varsT{\LC}{\code{e}}}\}       \nonumber\\
                                &=\textstyle\{v\in\mathcal{V} \mid \{k_1,v(x_1),\dots k_m,v(x_m)\}\subseteq\mathbb{Z}, \ \sum_{1\leq i \leq n}k_i \cdot v(x_i)\prec k_0\}\label{f:denotLC}
\end{align}
where $\prec$ is associated with its standard mathematical relation.
Essentially, the denotation selects the variables that are relevant to theory \LC\ and the linear equation atom $\code{e}$, and
then applies the denotation $\den{\code{e}}_\LC$ given by the structure.
Then, in our \conprogram\ $P_{(\ref{ex:clingcon:rule:one}/\ref{ex:clingcon:rule:two})}$,
defining the external atoms \TheoryAtomsES\ as $\{(\sumxy), (\sumxyc)\}$,
produces the \HTC\ theory
$\htcsemanticsP(P_{(\ref{ex:clingcon:rule:one}/\ref{ex:clingcon:rule:two})},\LC,\TheoryAtomsES)$
containing
\begin{align}
x+y=4&\rightarrow \prop{\code{a}}, \label{f:tauP1}\\
\prop{\code{a}} &\rightarrow y+z=4, \label{f:tauP2}\\
(x+y=4)&\lor (x+y \neq 4) \label{f:choice42}
\end{align}
where \eqref{f:choice42} corresponds to the choice \eqref{eq:htcsemantics:external} for the external atoms.
Interestingly, in the setting of \conprogram s, we can replace choices as in \eqref{eq:htcsemantics:external} on external atoms by disjunctions of constraints for their variables, forcing them to take any arbitrary integer value.
In our example, we can replace \eqref{f:choice42} by the disjunctions of \LC-atoms:
\begin{align}
x\geq 0 \vee x<0 \label{f:def1}\\
y\geq 0 \vee y<0 \label{f:def2}
\end{align}
These formulas are no tautologies:
They assign any arbitrary pair of \emph{integer values} to the variables $x$ and $y$.
To conclude with the example,
the theory $\htcsemanticsP(P_{(\ref{ex:clingcon:rule:one}/\ref{ex:clingcon:rule:two})},\LC,\TheoryAtomsES)$
has an infinite number of equilibrium models $\tuple{t,t}$ that satisfy one of the two following conditions:
\begin{enumerate}
\item $t(\prop{\code{a}})=\mathbf{t}$, \ $\{t(x),t(y),t(z)\}\subseteq\mathbb{Z}$, \ $t(x)+t(y)=4$ \ and \ $t(y)+t(z)=4$, or
\item $t(\prop{\code{a}})=\undefined$, \ $t(z)=\undefined$, \ $\{t(x),t(y)\}\subseteq\mathbb{Z}$ \ and \ $t(x)+t(y) \neq 4$.
\end{enumerate}

 \section{Answer Set Solving modulo Linear Equations}
\label{sec:hyrbid:systems}

Finally, we show how our formalism can be used to capture the semantics of several \clingo\ extensions with linear equations.
At first, we use the structured theory \LC\ to describe a semantics of \clingcon\ using \HTC-theories.
Then, we introduce structured theories \DLAT\ and \LP\ to analogously capture \clingoM{dl} and \clingoM{lp}.
Among others, this allows us to compare the systems and show strongly equivalent program transformations.
We start by introducing a useful constraint atom $\dfZ{x}\in\C$ to represent the fact that a given variable
$x\in\X$ has a defined, integer value, viz.\ $\den{\dfZ{x}}=\{v\in\V \mid v(x)\in \mathbb{Z}\}$.
It is not difficult to see that $\den{\dfZ{x}}$ is \HTC\ equivalent to \eqref{f:def1}.
\begin{proposition}\label{prop:linear:defined}
  Let $P$ be a \tprogram\ over $\tuple{\Atoms,\TheoryAtoms,\TheoryAtomsES}$ wrt
  theory $\LC=\tuple{\TheoryAtoms,\mathcal{S},\comp{\cdot}\,}$ structured by $(\X_\LC,\D_\LC,\varsTfunction{\LC}, \den{\cdot}_\LC)$,
  then the following three theories are strongly equivalent:
  \begin{itemize}
    \item $\htcsemanticsP(P,\LC,\TheoryAtoms)$,
    \item $\htcsemanticsP(P) \cup \{\htcsemanticsA(\thAt) \lor \neg\htcsemanticsA(\thAt) \mid \thAt \in \TheoryAtomsES\} \cup \{ \dfZ{x} \mid x\in\varsT{\LC}{\thAt}, \thAt \in \TheoryAtomsES \}$, and
    \item $\htcsemanticsP(P) \cup \{ \dfZ{x} \mid x\in\varsT{\LC}{\thAt}, \thAt \in \TheoryAtomsES \}$
  \end{itemize}
\end{proposition}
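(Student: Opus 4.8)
The plan is to use the fact that in \HTC\ strong equivalence coincides with \HTC-equivalence (the analogue of the Lifschitz--Pearce--Valverde theorem for theories with constraints), so that it suffices to prove the three theories have the same \HTC-models. Since all three share the common part $\htcsemanticsP(P)$ and \HTC-equivalence is a congruence for set union (the models of a union being the intersection of the models of its parts), I can discard $\htcsemanticsP(P)$ and reduce the claim to showing that the three remaining sets of external-atom axioms are pairwise \HTC-equivalent. Each such set is itself a union over $\thAt\in\TheoryAtomsES$, so it is enough to argue, for a fixed external atom $\thAt$, that the three formulas $\htcsemanticsA(\thAt)\lor\htcsemanticsA(\comp{\thAt})$, $\big(\htcsemanticsA(\thAt)\lor\neg\htcsemanticsA(\thAt)\big)\land\bigwedge_{x\in\varsT{\LC}{\thAt}}\dfZ{x}$, and $\bigwedge_{x\in\varsT{\LC}{\thAt}}\dfZ{x}$ are pairwise equivalent in \HTC.

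The engine of the proof is a single denotation identity: $\den{\htcsemanticsA(\thAt)}\cup\den{\htcsemanticsA(\comp{\thAt})}=\{v\in\V \mid v(x)\in\mathbb{Z}\text{ for all }x\in\varsT{\LC}{\thAt}\}=\den{\bigwedge_{x\in\varsT{\LC}{\thAt}}\dfZ{x}}$. For the first equality I would use that every $\LC$-valuation in \eqref{f:denotLC} assigns integers to the variables of $\thAt$ (giving the inclusion $\subseteq$), and that $\LC$ has an absolute complement, so any integer assignment to $\varsT{\LC}{\thAt}$ extends to some $w\in\V_\LC$ lying in $\den{\thAt}_\LC$ or in $\den{\comp{\thAt}}_\LC=\V_\LC\setminus\den{\thAt}_\LC$ (giving $\supseteq$); here I also invoke $\varsT{\LC}{\thAt}=\varsT{\LC}{\comp{\thAt}}$. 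Because the satisfaction of both $\htcsemanticsA(\thAt)\lor\htcsemanticsA(\comp{\thAt})$ and $\bigwedge_{x}\dfZ{x}$ by an interpretation $\tuple{h,t}$ depends only on $h$ (they are disjunctions and conjunctions of constraint atoms, and $\tuple{h,t}\models c$ iff $h\in\den{c}$), the identity immediately yields equivalence of the first and third formulas.

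The remaining equivalence of the second and third formulas reduces to showing that, for a fixed $\thAt$, the weak excluded middle $\htcsemanticsA(\thAt)\lor\neg\htcsemanticsA(\thAt)$ is satisfied by every model of $\bigwedge_{x\in\varsT{\LC}{\thAt}}\dfZ{x}$; then adjoining it to the third formula changes no models. This is where I expect the main obstacle, since it is the only step that genuinely uses the here-and-there gap. The key observation is that once $\tuple{h,t}\models\dfZ{x}$ forces $h(x)\in\mathbb{Z}\subseteq\D$ for every $x\in\varsT{\LC}{\thAt}$, the condition $h\subseteq t$ forces $t(x)=h(x)$ on exactly those variables, so $h$ and $t$ agree on $\vars{\htcsemanticsA(\thAt)}=\varsT{\LC}{\thAt}$; denotation property~(\ref{den:prt:2}) then gives $h\in\den{\htcsemanticsA(\thAt)}$ iff $t\in\den{\htcsemanticsA(\thAt)}$. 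Hence either $h\in\den{\htcsemanticsA(\thAt)}$, so $\tuple{h,t}\models\htcsemanticsA(\thAt)$, or $h,t\notin\den{\htcsemanticsA(\thAt)}$, so $\tuple{w,t}\not\models\htcsemanticsA(\thAt)$ for both $w\in\{h,t\}$ and thus $\tuple{h,t}\models\neg\htcsemanticsA(\thAt)$; in either case the disjunction holds. Combining the two equivalences gives the three theories the same \HTC-models, and hence strong equivalence. The care points to watch are the partial-valuation bookkeeping (undefined variables and the set-representation of $h\subseteq t$) and recording that $\den{\dfZ{x}}$ indeed coincides with the denotation of \eqref{f:def1}, as already observed before the statement.
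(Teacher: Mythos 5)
Your proposal is correct, but there is no paper proof to measure it against: the appendix only proves Propositions~\ref{prop:lc:compositional}--\ref{prop:translation.satisfiable} and the chain of results supporting the Main Theorem, and gives no proof of this proposition (nor of Proposition~\ref{prop:exvars}), so your argument fills a gap rather than paralleling an existing one. Its three ingredients all check out. First, you only need the easy direction of the \HT-style characterization of strong equivalence (same \HTC-models implies same equilibrium models under any extension), which is exactly what you use. Second, the denotation identity $\den{\htcsemanticsA(\thAt)}\cup\den{\htcsemanticsA(\comp{\thAt})}=\bigcap_{x\in\varsT{\LC}{\thAt}}\den{\dfZ{x}}$ is immediate from \eqref{f:denotLC}, since $\comp{\prec}$ is the exact negation of $\prec$ on integer values and $\varsT{\LC}{\thAt}=\varsT{\LC}{\comp{\thAt}}$; because satisfaction of constraint atoms, and hence of these disjunctions and conjunctions, depends only on the $h$-component, this settles the first/third equivalence, matching the paper's own remark that $\dfZ{x}$ is \HTC-equivalent to \eqref{f:def1}. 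Third, the one genuinely here-and-there step is handled correctly: from $\tuple{h,t}\models\dfZ{x}$ you get $h(x)\in\mathbb{Z}$, and $h\subseteq t$ (in the set representation of valuations) then forces $t(x)=h(x)$ on all of $\vars{\htcsemanticsA(\thAt)}$, so the third denotation property makes $h$ and $t$ agree on membership in $\den{\htcsemanticsA(\thAt)}$, which eliminates precisely the intermediate case in which $\htcsemanticsA(\thAt)\lor\neg\htcsemanticsA(\thAt)$ could fail.

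One caveat you should make explicit: you silently read the first theory as $\htcsemanticsP(P,\LC,\TheoryAtomsES)$, i.e., with choice axioms \eqref{eq:htcsemantics:external} ranging over $\TheoryAtomsES$ only. As literally printed, the proposition pairs $\htcsemanticsP(P,\LC,\TheoryAtoms)$ with $\dfZ{x}$ atoms drawn only from external atoms; for $\TheoryAtomsES\subsetneq\TheoryAtoms$ the literal statement fails, since a founded atom with a variable occurring in no external atom contributes a choice axiom in the first theory with no matching $\dfZ{x}$ atom in the third. Your reading agrees with the paper's intended use (the surrounding text applies the result to \clingcon, where $\TheoryAtomsES=\TheoryAtoms$), but the assumption deserves a sentence.
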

In essence, this means that the choice \eqref{eq:htcsemantics:external} in $\htcsemanticsP(P,\LC,\TheoryAtoms)$ can be safely replaced by a set of constraint atoms $\dfZ{x}$ for every variable $x$ that occurs in at least one external atom.

As an interesting result, we may observe that constraint atoms involving defined variables can always be rephrased as formulas in the scope of negation:
\begin{proposition}\label{prop:exvars}
  Let $\Gamma$ be a \HTC\ theory over signature $\tuple{\X,\D,\C}$ with a denotation for linear constraint atoms denoted as defined~\eqref{f:denotLC} and
  let
  \(
  c = \text{``}k_1 \cdot x_1 + \dots + k_n \cdot x_n \prec k_0\text{''} \in \C
  \)
  and
  \(
  \comp{c} = \text{``}k_1 \cdot x_1 + \dots + k_n \cdot x_n \comp{\prec} k_0\text{''} \in \C
  \)
  be two linear constraint atoms such that $\Gamma \models \dfZ{x_i}$ for $i = 1 \dots n$.
Then, the following strongly equivalent transformations hold:
  \begin{itemize}
    \item $\Gamma \models c \leftrightarrow \neg \neg c$,
    \item $\Gamma \models c \leftrightarrow \neg \comp{c}$,
    \item $\Gamma \models (F \to c) \leftrightarrow  (F \to \neg \neg c)$,
    \item $\Gamma \models (F \to c) \leftrightarrow (F \wedge \neg c \to \bot)$, and
    \item $\Gamma \models (F \to c) \leftrightarrow (F \wedge \comp{c} \to \bot)$
  \end{itemize}
\end{proposition}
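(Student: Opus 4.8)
The plan is to reduce all five equivalences to a single observation: the definedness hypothesis forces the atom $c$ to behave \emph{classically} (i.e.\ decidedly, with the same value at $h$ and at $t$) in every model of $\Gamma$. First I would establish the following key lemma. Let $\tuple{h,t}$ be any model of $\Gamma$. Since $\Gamma\models\dfZ{x_i}$, we have $h\in\den{\dfZ{x_i}}$, i.e.\ $h(x_i)\in\mathbb{Z}$, for every $i=1,\dots,n$; and because $h\subseteq t$ and $h(x_i)$ is a domain value, $t(x_i)=h(x_i)\in\mathbb{Z}$. Thus $h$ and $t$ agree on $\vars{c}=\{x_1,\dots,x_n\}$, so by denotation property~\ref{den:prt:2} we get (a) $h\in\den{c}$ iff $t\in\den{c}$, and likewise $h\in\den{\comp{c}}$ iff $t\in\den{\comp{c}}$. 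Moreover, instantiating~\eqref{f:denotLC} for $c$ and for $\comp{c}$ and using that $\comp{\prec}$ is the complementary comparison of $\prec$, exactly one of the integer relations $\sum_i k_i\,t(x_i)\prec k_0$ and $\sum_i k_i\,t(x_i)\comp{\prec}k_0$ holds, so (b) exactly one of $t\in\den{c}$ and $t\in\den{\comp{c}}$ holds. This lemma carries the whole content; everything else is a routine unfolding of the \HTC\ satisfaction clauses.

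Next I would record the truth values of the relevant subformulas at the only two worlds that matter. For an atom $a$ and any $w\in\{h,t\}$, upward closure of denotations (property~\ref{den:prt:0}) together with $w\subseteq t$ gives $\tuple{w,t}\models\neg a$ iff $t\notin\den{a}$; iterating, $\tuple{w,t}\models\neg\neg c$ iff $t\in\den{c}$, and $\tuple{w,t}\models\neg\comp{c}$ iff $t\notin\den{\comp{c}}$, which by lemma~(b) is again $t\in\den{c}$. On the other hand $\tuple{w,t}\models c$ iff $w\in\den{c}$, which by lemma~(a) equals $t\in\den{c}$. Hence $c$, $\neg\neg c$, and $\neg\comp{c}$ take the same truth value at every pair $\tuple{w,t}$ with $w\in\{h,t\}$ whenever $\tuple{h,t}\models\Gamma$; reading $\leftrightarrow$ as the conjunction of its two implications and checking it world-by-world, this immediately yields the first two bullets $\Gamma\models c\leftrightarrow\neg\neg c$ and $\Gamma\models c\leftrightarrow\neg\comp{c}$.

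For the last three bullets I would split on the classical value of $c$ at $t$, noting that both worlds $h$ and $t$ are covered by the lemma (the world $w=t$ arises when evaluating the outer implication $F\to\cdot$, and $t(x_i)\in\mathbb{Z}$ there too). If $t\in\den{c}$, then $c$ holds at both $h$ and $t$ by lemma~(a), so each of $F\to c$, $F\to\neg\neg c$, $F\wedge\neg c\to\bot$, and $F\wedge\comp{c}\to\bot$ is satisfied regardless of $F$, since in the last two the respective conjunct $\neg c$ (resp.\ $\comp{c}$) is false at both worlds by lemma~(b). If $t\notin\den{c}$, then $c$ fails at both worlds while $\neg c$ and $\comp{c}$ hold at both; consequently each of the four implications is satisfied exactly when $\tuple{h,t}\not\models F$ and $\tuple{t,t}\not\models F$, so they are pairwise equivalent. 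Matching the two cases gives bullets three, four, and five. The step I expect to be the genuine obstacle is bullets four and five: the intuitionistic implication $F\to c$ is in general strictly stronger than $\neg(F\wedge\neg c)$ and $\neg(F\wedge\comp{c})$, so one must pinpoint where the equivalence can fail and see that the definedness hypothesis --- via lemma~(a), which pins the value of $c$ at $h$ to its value at $t$ --- is exactly what collapses the intuitionistic gap by rendering $c$ a decidable atom in every model of $\Gamma$.
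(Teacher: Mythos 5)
You cannot be checked against the paper's own proof here, because the paper does not contain one: the appendix proves Propositions~\ref{prop:lc:compositional}--\ref{prop:translation.satisfiable} and then the chain of results leading to the Main Theorem (Propositions~\ref{prop:htc.tsols}--\ref{prop:translations} and the auxiliary lemmas), but both Proposition~\ref{prop:linear:defined} and Proposition~\ref{prop:exvars} are stated without proof. Judged on its own merits, your argument is correct and complete. The key lemma is exactly the right mechanism: from $\Gamma \models \dfZ{x_i}$ you get $h(x_i)\in\mathbb{Z}$, and since $h\subseteq t$ as sets of defined pairs, $t(x_i)=h(x_i)$; property~\ref{den:prt:2} then pins the status of $c$ (and of $\comp{c}$, which has the same variables) at $h$ to its status at $t$, while the integer denotation~\eqref{f:denotLC} makes exactly one of $c$, $\comp{c}$ true at $t$ --- so in every model of $\Gamma$ both atoms are decided and mutually complementary at both worlds. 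Your satisfaction computations are accurate, in particular $\tuple{w,t}\models\neg\varphi$ iff $\tuple{t,t}\not\models\varphi$ via persistence, so $\neg\neg c$ and $\neg\comp{c}$ both evaluate to ``$t\in\den{c}$'' at every $w\in\{h,t\}$, matching $c$ itself; and you correctly verify the biconditionals at both worlds, including the evaluation of the outer implications $F\to\cdot$ at $w=t$, which is where a careless argument inspecting only the $h$-world would break. You also put your finger on the genuinely load-bearing step: without the definedness hypothesis, bullets four and five fail --- e.g.\ with $c$ being $x\geq 0$, $t(x)=0$ and $h(x)=\undefined$, the interpretation $\tuple{h,t}$ satisfies $F\wedge\neg c\to\bot$ but not $F\to c$ --- and your lemma~(a) is precisely what excludes such models of $\Gamma$. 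One closing remark for completeness: since the proposition asserts entailments $\Gamma\models\varphi\leftrightarrow\psi$ in the monotonic logic \HTC, proving them model-by-model as you do is exactly what licenses the ``strongly equivalent transformations'' reading, by the rule of substitution of equivalents that the paper itself invokes in its appendix; nothing further is owed.
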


In \clingcon, all atoms are external $\TheoryAtomsES = \TheoryAtoms$ and may indistinctly occur in the head or in the body.
To illustrate the behavior of this system, let us analyze the effect on our running example $P_{(\ref{ex:clingcon:rule:one}/\ref{ex:clingcon:rule:two})}$.
Since atom $y+z=2$ is also external now, the translated \HTC\ theory $\htcsemanticsP(P,\LC,\TheoryAtoms)$ would simply add the axiom:
\begin{eqnarray}
(y+z=2) \vee (y+z\neq 2) \label{f:choice42b}
\end{eqnarray}
to the previously obtained formalization \eqref{f:tauP1}-\eqref{f:choice42}.
According to Proposition~\ref{prop:linear:defined}, we can even rephrase this theory as $\{\eqref{f:tauP1},\eqref{f:tauP2}\} \cup \{\dfZ{x},\dfZ{y},\dfZ{z}\}$, that is, replacing the choices by constraint atoms that force all variables to be assigned some integer value.
As a result, the equilibrium models~$\tuple{t,t}$ always satisfy $\{t(x),t(y),t(z)\}\subseteq\mathbb{Z}$ plus one of these two conditions:
\begin{itemize}
\item $t(x)+t(y)=4$, $t(z)+t(y)=2$, and $t(\prop{\code{a}})=\mathbf{t}$
\item $t(x)+t(y) \neq 4$, and $t(\prop{\code{a}})=\undefined$
\end{itemize}
Both atoms $x+y=4$ and $y+z=2$ are external.
When $x+y=4$ holds, rule \eqref{f:tauP1} forces $\prop{\code{a}}$ to be true and rule \eqref{f:tauP2} further implies $y+z=2$.
For the second item, when $x+y=4$ does not hold, then its complement $x+y \neq 4$ becomes true and $y+z=2$ is also free to hold or not as in~\eqref{f:choice42b}.
However, since $y$ and $z$ are always assigned some integer value, this choice becomes tautological and does not impose any additional restriction on those variables.
On the other hand, there is no reason to derive $\prop{\code{a}}$ and so it is left undefined (the corresponding program atom $\code{a}$ does not hold).

One interesting feature of \clingcon\ is that it not only shows the $\tuple{\LC,\TheoryAtoms}$-stable models but also allows enumerating variable assignments for that stable model.
For instance, for $P_{(\ref{ex:clingcon:rule:one}/\ref{ex:clingcon:rule:two})}$, and $\code{a} \in X$ in the stable model, we may get $\{x=2, y=2, z=0\}$, $\{x=3, y=1, z=1\}$, $\{x=4, y=0, z=2\}$, etc.
When considering these witnesses, we actually obtain a one-to-one correspondence to the equilibrium models of the \HTC\ translation $\htcsemanticsP(P,\LC,\TheoryAtoms)$.

Since, in \clingcon, all variables are defined ($\dfZ{x}$ for all $x \in \X_\LC$), we can \emph{always} apply Proposition~\ref{prop:exvars} to constraint atoms in the head and shift them to the body.
As an example, we can safely replace rule \eqref{ex:clingcon:rule:two} in program $P_{(\ref{ex:clingcon:rule:one}/\ref{ex:clingcon:rule:two})}$ by any of the two constraints below, that are equivalent when all variables are defined:
\begin{align}
  \label{ex:clingcon:rule:twob}
  \bot &\ \leftarrow \ \neg \code{\&sum\{ y; z \} = 2}, \code{a}\\
  \label{ex:clingcon:rule:twoc}
  \bot &\ \leftarrow \ \code{\&sum\{ y; z \} != 2}, \code{a}
\end{align}

To cover \clingoM{dl}-programs,
we introduce abstract theory \DLAT\ capturing \emph{difference constraints} over integers,
which is a subset of the already seen abstract theory \LC\ where theory atoms have the fixed form
$\code{\&sum\{1 * \mathit{x}; (-1)*\mathit{y}\} <= \mathit{k}}$
but rewritten instead as:
\begin{align}\label{clingodl:difference:constraint}
\diffc{x}{y}{k}
\end{align}
where $x$ and $y$ are integer variables and $k \in\mathbb{Z}$.

As we have in \LC,
the complement $\comp{\eqref{clingodl:difference:constraint}}$ is absolute and, in this case, it is not difficult to see that its denotation corresponds to:
\[
  \den{\comp{\eqref{clingodl:difference:constraint}}}_{\DLAT} \ = \ \den{\ \diffc{y}{x}{-k-1} \ }_{\DLAT}\
\]
In \clingoM{dl}, body atoms are external $\TheoryAtomsES$ and head atoms founded $\TheoryAtoms\setminus \TheoryAtomsES$.
In this case, we have a one-to-one correspondence between the answers of the system and the $\tuple{\DLAT,\TheoryAtomsES}$-stable models of a \clingoM{dl}-logic program $P$.

Although, in \clingoM{dl}, we may have undefined variables, Proposition~\ref{prop:exvars} is still applicable if the variables in the head atom are used in other external atoms.
To put an example, take the \clingoM{dl}-program
\begin{eqnarray}
\code{margin} & \leftarrow & \diffc{x}{y}{10} \label{f:exdl1}\\
\diffc{x}{y}{0} & \leftarrow & \neg \code{margin} \label{f:exdl2} \\
 \diffc{y}{x}{0}  & \leftarrow & \neg \code{margin} \label{f:exdl3}
\end{eqnarray}
that says that we have some \code{margin} when picking values with $x-y\leq 10$, but we force $x=y$ if there is no such \code{margin}.
Under the assumption of external body atoms, this program is strongly equivalent to:
\begin{eqnarray*}
\code{margin} & \leftarrow & \diffc{x}{y}{10}\\
\bot & \leftarrow & \diffc{y}{x}{-1}, \ \neg \code{margin}\\
\bot & \leftarrow & \diffc{x}{y}{-1}, \ \neg \code{margin}
\end{eqnarray*}
since variables $x$ and $y$ occur in an external (body) atom $\code{\&diff\{\mathit{x}-\mathit{y}\} \ \mbox{\tt <=} \ \mathit{10}}$, and so,
they are always defined.
Our \HTC\ formalization can be exploited now to prove other strong equivalence relations whose proof was non-trivial before.
For instance, we can prove that adding the rule
\begin{eqnarray*}
\diffc{z}{y}{20} & \leftarrow & \neg \code{margin}
\end{eqnarray*}
to the \clingoM{dl}-program \eqref{f:exdl1}-\eqref{f:exdl3} is strongly equivalent to adding instead the rule
\begin{eqnarray*}
\diffc{z}{x}{20} & \leftarrow & \neg \code{margin}
\end{eqnarray*}
where we use $x$ rather than $y$ in the head, since these two variables are always defined and $\neg \code{margin}$ forces them to be equal.

A third \AMT\ system covered by our formalization is \clingoM{lp}.
In this case, the abstract theory \LP\ is about linear equations over reals.
It is identical to \LC\ but structured by $(\X_\LP,\D_\LP, \varsTfunction{\LC},\den{\cdot}_\LC)$
where $\X_\LP$ is an infinite set of real variables and the domain $\D_\LP$ is the set of real numbers $\mathbb{R}$.
\clingoM{lp} treats either all theory atoms as  external, $\TheoryAtomsES=\TheoryAtoms$, or founded, $\TheoryAtomsES=\emptyset$.
Since it imposes no restriction on the occurrence of linear equation atoms,
the counter-intuitive behavior identified by~\citeN{jakaosscscwa17a} may emerge.
However, once we treat head atoms as founded and body atoms as external,
we obtain a one-to-one correspondence between the answers of the system and
the $\tuple{\LP,\TheoryAtomsES}$-stable models of a \clingoM{lp}-logic program $P$;
and a one-to-many correspondence between its answers and
the equilibrium models of $\htcsemanticsP(P,\LP,\TheoryAtomsES)$ for a \clingoM{lp}-logic program $P$.
 \section{Discussion}\label{sec:discussion}

Apart from the hybrid ASP approaches mentioned in the introduction, the closest work to ours
is probably \ASPAC, recently introduced by~\citeN{eitkie20a},
since it also relies on an extension of the logic \HT\ to deal with hybrid logic programs.
But while \HTC\ keeps the simple basis of propositional \HT\ by treating an external theory as a black-box
(as we have seen, the minimum requirement is using variables and denotations),
\ASPAC\ defines a complete extension of the logic itself,
lifting the formalism to first-order \HT\ with weighted formulas over semi-rings.
The main advantage of \ASPAC\ is that both the external theories and the logic programs are captured by a same homogeneous formal basis.
The price to be paid with respect to \HTC\ is a more complex semantics (logical operators become just one more type of constraints) and the requirement of a semi-ring structure.
Still, \ASPAC\ covers a wide range of constructs such as aggregates over non-Boolean variables, linear constraints or
provenance in positive datalog programs.

Our formal characterization provides several valuable contributions.
The most obvious benefit is that we have now a roadmap to follow,
not only when deciding new aspects of existing \AMT\ systems,
but also when reconsidering parts of their implementation that were originally designed with no clear hint
when facing design alternatives.
This will have an immediate impact on the next generation of existing \AMT\ systems
like \clingcon, where head atoms will start to be considered as founded, and
\clingoM{lp}, that will also accordingly introduce an implicit separation between head atoms as founded and body atoms as external.
A second important contribution is that, when proving our results,
we have tried to maintain the highest possible degree of generality in the description of the abstract theories used behind.
This paves the way for introducing new abstract theories:
We may now start classifying them in terms of the general properties,
identified in the paper (consistency, structure and denotation, closed set of external atoms, absolute complement, etc),
so that we can characterize their behavior via \HTC\ formulas.

Apart from the alignment of existing implementations and the introduction of new abstract theories,
our future work includes a comparison and investigation of other approaches, like the aforementioned~\cite{eitkie20a},
and other approaches to \AMT~\cite{barlee14b,liesus16a} or plain SMT~\cite{baseseti09a}
in view of their formal characterization in terms of \HTC.

\bibliographystyle{include/latex-class-tlp/acmtrans}

\appendix
\clearpage
\section{Proofs of results}
\label{sec:proofs}
\subsection{Proofs of Propositions~\ref{prop:lc:compositional}-\ref{prop:translation.satisfiable}}
\begin{proof}[Proof of Proposition~\ref{prop:lc:compositional}]
By construction, for every \LC-satisfiable set $S\in \mathcal{S}$, we have $\bigcap_{\thAt\in S}\den{\thAt}_\LC\neq\emptyset$.
Then, there exists a valuation $v\in\bigcap_{\thAt \in S}\den{\thAt}_\LC$ and $v\in\den{\thAt}_\LC$ for every $\thAt \in S$.
This implies that~\LC\ is compositional.
\\[10pt]
Let us show that \LC\ is consistent.
Suppose, for the sake of contradiction, that there exists a \LC-satisfiable set $S\in \mathcal{S}$ such that $\{\thAt,\comp{\thAt}\}\in S$ for a $s\in\TheoryAtoms$.
Then, since \LC\ is compositional (as shown above), there has to exists a valuation $v$, such that $v\in \den{\thAt}$ and $v\in\den{\comp{\thAt}}$.
By definition of the denotation $\den{\cdot}_\AT$, this implies that~$v$ needs to satisfy both $\sum_{1\leq i \leq n}c_i*v(x_i)\prec c_0$ and $\sum_{1\leq i \leq n}c_i*v(x_i)\comp{\prec} c_0$,
      which is a contradiction.
Therefore, every \LC-satisfiable set $S\in \mathcal{S}$ is consistent and then \LC\ is consistent.
\end{proof} \begin{proof}[Proof of Proposition~\ref{prop:entailment}]
Since \AT\ is compositional, proving $(S \cup \{\comp{\thAt}\}) \not\in \mathcal{S}$ amounts to checking $\den{S \cup \{\comp{\thAt}\}}_\AT = \emptyset$.

For the left to right direction, $S \models_\AT\ s$ is equivalent to $\den{S}_\AT \subseteq \den{\thAt}_\AT$. But then $\den{S}_\AT \cap \den{\comp{\thAt}}_\AT \subseteq \den{\thAt}_\AT \cap \den{\comp{\thAt}}_\AT = \emptyset$ and so $\den{S}_\AT \cap \den{\comp{\thAt}}_\AT = \den{S \cup \{\comp{\thAt}\}}_\AT = \emptyset$.

For the right to left direction, we proceed by contraposition. Suppose there is some $v \in \den{S}_\AT$ such that $v \not\in \den{\thAt}_\AT$.
If the complement is absolute, the latter means $v \in \den{\comp{\thAt}}_\AT$ and so $v \in \den{S}_\AT \cap \den{\comp{\thAt}}_\AT = \den{S \cup \{\comp{\thAt}\}}_\AT$ and so $\den{S \cup \{\comp{\thAt}\}}_\AT \neq \emptyset$.
\end{proof}
 \begin{proof}[Proof of Proposition~\ref{prop:complete}]
  It is enough to show that
  $$(S \cup (\comp{\TheoryAtomsES\setminus S})) \cup (\comp{\TheoryAtomsES\setminus (S \cup (\comp{\TheoryAtomsES\setminus S})}) = (S \cup (\comp{\TheoryAtomsES\setminus S}))$$
Note that $\comp{A \cup B} = \comp{A} \cup \comp{B}$ and $\comp{A \setminus B} = \comp{A} \setminus \comp{B}$.
Then,
  $$(\comp{\TheoryAtomsES\setminus (S \cup (\comp{\TheoryAtomsES\setminus S})}) =
  (\comp{\TheoryAtomsES} \setminus (\comp{S} \cup (\comp{\comp{\TheoryAtomsES}}\setminus \comp{\comp{S}}))$$
Pick~$a \in (\comp{\TheoryAtomsES} \setminus (\comp{S} \cup (\comp{\comp{\TheoryAtomsES}}\setminus \comp{\comp{S}}))$.
Then, $a \in \comp{\TheoryAtomsES}$ and~$a \notin (\comp{S} \cup (\comp{\comp{\TheoryAtomsES}}\setminus \comp{\comp{S}}))$.
The latter implies that $a \notin \comp{S}$.
Hence, $a \in \comp{\TheoryAtomsES} \setminus \comp{S} = \comp{\TheoryAtomsES \setminus S}$.

  Then, we have that $S$ is a \tsolution\ implies $S \cup (\comp{\TheoryAtomsES\setminus S})$ is \AT-satisfiable by definition.
  Due to $(S \cup (\comp{\TheoryAtomsES\setminus S})) \cup (\comp{\TheoryAtomsES\setminus (S \cup (\comp{\TheoryAtomsES\setminus S})}) = (S \cup (\comp{\TheoryAtomsES\setminus S}))$,
  we know that $S \cup (\comp{\TheoryAtomsES\setminus S})$ is \TheoryAtomsES-complete and $(S \cup (\comp{\TheoryAtomsES\setminus S})) \cup (\comp{\TheoryAtomsES\setminus (S \cup (\comp{\TheoryAtomsES\setminus S})})$ is also \AT-satisfiable,
  then $(S \cup (\comp{\TheoryAtomsES\setminus S}))$ is a \TheoryAtomsES-complete \tsolution.

  Conversely, if $S$ is a \TheoryAtomsES-complete \tsolution,
  then $S \cup (\comp{\TheoryAtomsES\setminus S})=S$ is \AT-satisfiable by definition of \TheoryAtomsES-complete and \tsolution.
\end{proof}
 \begin{proof}[Proof of Proposition~\ref{prop:consistentclosed}]
Suppose, for the sake of contradiction, that there exists an incomplete \tsolution~$S$.
Then, $S\neq S\cup\comp{(\TheoryAtomsES\setminus S)}$ and, thus, there exists a theory atom~$\thAt\in \comp{(\TheoryAtomsES\setminus S)}$ with $\thAt\in\comp{\TheoryAtomsES}$, $\thAt \notin\comp{S}$ and $\thAt \notin S$.
Furthermore,
$\thAt\in\comp{\TheoryAtomsES}$ implies $\{\thAt,\comp{\thAt}\}\subseteq\TheoryAtomsES$ since \TheoryAtomsES\ is closed.
Then, $\comp{\thAt}\in \comp{(\TheoryAtomsES\setminus S)}$ since $\thAt \notin S$ and $\thAt\in\TheoryAtomsES$.
This leads to a contradiction because $\{\thAt,\comp{\thAt}\}\subseteq \comp{(\TheoryAtomsES\setminus S)}$,
and therefore $S\cup\comp{(\TheoryAtomsES\setminus S)}$ is not \AT-satisfiable.
Since \AT\ is consistent, this implies that $S$ not a \tsolution, which is a contradiction with the assumption that~$S$ is a \tsolution.
Consequently, all \tsolution{s} $S$ are \TheoryAtomsES-complete for consistent abstract theory \AT\ and closed \TheoryAtomsES.
\end{proof}
 \begin{proof}[Proof of Proposition~\ref{prop:translation.satisfiable}]
    Since~$\AT$ is compositional, set~$S$ is \mbox{$\AT$-satisfiable}
    iff there is a valuation~${w: \X_\AT \rightarrow \D_\AT}$ such that~$w \in \den{\thAt}_\AT$ for all~$\thAt \in S$.
    \begin{itemize}
      \item Assume first that $S$ is \mbox{$\AT$-satisfiable}
      and, thus, that there is a valuation~${w: \X_\AT \rightarrow \D_\AT}$ such that~$w \in \den{\thAt}_\AT$ for all~$\thAt \in S$.
Let~$v : \X \rightarrow \D_\undefined$ be any valuation such that~$v(x) = w(x)$ for all~$x \in \X_\AT$.
Then, $w \in \den{ \thAt}_\AT$ implies $v \in \den{ \tau(\thAt)}$ and, therefore, we get that $\tau(S)$ is satisfiable in~$\HTC$.

    \item Conversely, if $\tau(S)$ is satisfiable in~$\HTC$, then there is a valuation~$v$ that satisfies~$v \in \den{ \tau(\thAt)}$ for all~$\thAt \in S$.
By construction, this implies that there is some valuation~${w: \X_\AT \rightarrow \D_\AT}$ such that~$w \in \den{ \thAt}_\AT$ for all~$\thAt \in S$.
Consequently, $S$ is \mbox{$\AT$-satisfiable}.\qed
    \end{itemize}
    \let\proofbox\relax
  \end{proof} \subsection{Proofs of the Main Theorem}

To pave the way between the transformation approach described in Section~\ref{sec:translation:revisited} and the translation in~$\HTC$ described in Section~\ref{sec:htcsemantics},
we introduce a second translation into~$\HTC$ that is close to the transformation approach.
The proof of the Main Theorem is then divided into two main lemmas: the first establishes the correspondence between the transformation approach described in Section~\ref{sec:translation:revisited} and this second translation, and the second establishes the correspondence in~$\HTC$ between both translations.

Let us start by describing this second translation that we denote~$\htcsemanticsP_2$.
The most relevant feature of this translation is that it decouples the generation of possible abstract theory solutions $S \subseteq \TheoryAtoms$ from the derivation of their atoms $\thAt \in S$ in the logic program.
In particular, rather than directly including constraints $\htcsemanticsA(\thAt)$ in the translation of program rules as done with \eqref{eq:htcsemantics:rule}, we use now a new, auxiliary propositional atom $\prop{\thAt}$ whose connection to the constraint $\htcsemanticsA(\thAt)$ is be explicitly specified by using additional formulas.
In that way, the rules of the \tprogram\ $P$ will correspond now to an \HTC\ encoding of a regular,  propositional logic program we call~$p(P)$.

Translation $\htcsemanticsP_2$ produces an \HTC-theory with signature $\tuple{\X_2,\D_2,\C_2}$ and denotation~$\den{\cdot}_2$ that extends the signature $\tuple{\X,\D,\C}$ and denotation $\den{\cdot}$ of $\htcsemanticsP$ as follows:
\begin{align}
\X_2 \ \ &= \ \ \X \cup \{ \, \prop{\thAt} \mid \thAt \in \TheoryAtoms \, \}\\
\D_2 \ \ &= \ \ \D\\
\C_2 \ \ &= \ \ \C \ \cup \{ \, ``\prop{\thAt}=\mathbf{t}\text{''} \mid \thAt \in \TheoryAtoms \, \}
\end{align}
that is, we extend the set of variables $\X$ with one fresh variable $\prop{\thAt}$ for each abstract theory atom $\thAt \in \TheoryAtoms$ and the constraints $\C$ with the corresponding propositional constraint atom $``\prop{\thAt}=\mathbf{t}\text{''}$.
The denotation $\den{\cdot}$ for the $\htcsemanticsP_2$ translation simply extends the denotation for $\htcsemanticsP$ by including the already seen fixed denotation for propositional atoms applied to the new constraints $``\prop{\thAt}=\mathbf{t}\text{''}$.

The new \HTC-encoding $\htcsemanticsP_2(P,\AT,\TheoryAtomsES)$ is defined for a same \tprogram\ $P$ over $\langle\Atoms,\TheoryAtoms,\TheoryAtomsES\rangle$ as before, but consists of three sets of formulas:
\begin{eqnarray}
\htcsemanticsP_2(P,\AT,\TheoryAtomsES) \eqdef \htctsols(\AT,\TheoryAtomsES) \cup p(P) \cup \bridge(P,\AT,\TheoryAtomsES) \label{f:tau2}
\end{eqnarray}
where, informally speaking: $\htctsols(\AT,\TheoryAtomsES)$ generates arbitrary sets of abstract theory solutions in terms of constraints $\htcsemanticsA(\thAt)$; $p(P)$ corresponds to an \HTC-encoding of a propositional program for atoms $\prop{\thAt}$; and $\bridge(P,\AT,\TheoryAtomsES)$ fixes the connection between each atom $\prop{\thAt}$ and its corresponding constraint $\htcsemanticsA(\thAt)$.

We describe now each one of these three sets.
Theory~$\htctsols(\AT,\TheoryAtomsES)$ consists of formulas
      \begin{align}
            \htcsemanticsA(\thAt) &\vee \neg \htcsemanticsA(\thAt) \hspace{1.5cm}&&\text{for every theory atom } \thAt \in \TheoryAtoms
            \label{eq:guess:choice}
            \\
            \neg \htcsemanticsA(\thAt) &\to \, \htcsemanticsA(\comp{\thAt}) &&\text{for every theory atom } \thAt \in \TheoryAtomsES
            \label{eq:guess:strict}
      \end{align}
Here, the truth of constraint atom $\htcsemanticsA(\thAt)$ for $\thAt\in\TheoryAtoms$ describes the inclusion of $\thAt$ in a possible \mbox{$\langle\AT,\TheoryAtomsES\rangle$-solution} $S$.
If $\htcsemanticsA(\thAt)$ is true or false, it represents that $\thAt\in S$ or $\thAt \not\in S$, respectively.
The choice in~\eqref{eq:guess:choice} generates all possible sets $S$.
Implication~\eqref{eq:guess:strict} enforces that, whenever an external theory atom is \emph{not} included in the set $\thAt \not\in S$, its complement must hold $\comp{\thAt} \in S$.
Note that no atom of the form~$\prop{\thAt}$ occurs in~$\htctsols(\AT,\TheoryAtomsES)$.
Therefore, any equilibrium model~$\tuple{t,t}$ of~$\htctsols(\AT,\TheoryAtomsES)$ satisfies~$t(\prop{\thAt}) = \mathbf{u}$.

Back to our example, given the \LC\ atoms $\TheoryAtoms=\{\thAt_1,\thAt_2,\thAt_3,\thAt_4\}$ with $\thAt_1=(\sumxy)$, $\thAt_2=(\sumyz)$, $\thAt_3=(\sumxyc)$, $\thAt_4=(\sumyzc)$ seen before, where $\thAt_1$ and $\thAt_3$ are external,  and given the $\htcsemanticsA(\thAt)$ translation for \LC\ atoms, the theory $\htctsols(\LC,\TheoryAtomsES)$ would amount to:
\begin{eqnarray}
x+y=4 \vee \neg(x+y =4) \label{f:phi1}\\
y+z=2 \vee \neg(y+z =2) \\
x+y\neq 4 \vee \neg(x+y \neq 4) \\
y+z \neq 2 \vee \neg(y+z \neq 2) \\
\neg (x+y=4) \to (x+y \neq 4) \\
\neg (x+y\neq 4) \to (x+y=4) \label{f:phi6}
\end{eqnarray}
Notice the difference between the explicit difference in atoms $F \neq 4$ and the default negation of an equality in literals $\neg (F=4)$.
The former hold when all variables in $F$ are defined, but $F$ is not $4$, whereas the latter still hold even when some variable in $F$ is undefined.

For brevity, we may refer in the following to an equilibrium model $\tuple{t,t}$ of a theory $\Gamma$ instead as a stable model $t$ of theory $\Gamma$,
and say $t\models \Gamma$ instead of $\tuple{t,t}\models\Gamma$ for total interpretation $\tuple{t,t}$.

The behavior of $\htctsols(\AT,\TheoryAtomsES)$ is formally characterized by the following result:
\begin{proposition}\label{prop:htc.tsols}
Given a consistent, compositional abstract theory $\AT=\langle \TheoryAtoms, \mathcal{S}, \comp{\cdot} \,\rangle$ and a closed set~$\TheoryAtomsES\subseteq\TheoryAtoms$ of external theory atoms, there is a one-to-many correspondence between the set of complete \mbox{$\langle\AT,\TheoryAtomsES\rangle$-solutions} and the stable models of~$\htctsols(\AT,\TheoryAtomsES)$ such that
\begin{itemize}
\item if $t$ is a stable model of~$\htctsols(\AT,\TheoryAtomsES)$, then $\{\thAt \in \TheoryAtoms \mid t \models \htcsemanticsA(\thAt)\}$ is a complete \mbox{$\langle\AT,\TheoryAtomsES\rangle$-solution};\label{thm:htctsols:eqtos}

\item if $S$ is a complete \mbox{$\langle\AT,\TheoryAtomsES\rangle$-solution}, then there exists a stable model $t$ of theory $\htctsols(\AT,\TheoryAtomsES)$ such that $S =\{\thAt \in \TheoryAtoms \mid t \models \htcsemanticsA(\thAt)\}$.
\label{thm:htctsols:stoeq}
\end{itemize}
\end{proposition}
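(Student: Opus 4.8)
The plan is to prove the two inclusions of the correspondence separately, writing $S_t \eqdef \{\thAt \in \TheoryAtoms \mid t \models \htcsemanticsA(\thAt)\}$ for the set of atoms read off a total interpretation $\tuple{t,t}$. The two workhorses are Proposition~\ref{prop:translation.satisfiable}, which lets me pass between $\AT$-satisfiability of a set $S$ and $\HTC$-satisfiability of $\htcsemanticsA(S)$, and Proposition~\ref{prop:complete}, relating solutions to their $\TheoryAtomsES$-completions. I would also repeatedly use the locality built into the structure, namely that membership $v \in \den{\htcsemanticsA(\thAt)}$ depends only on the restriction of $v$ to $\varsT{\AT}{\thAt}$ and forces every variable of $\varsT{\AT}{\thAt}$ to be defined.

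\textbf{Forward direction.} Assume $\tuple{t,t}$ is a stable model of $\htctsols(\AT,\TheoryAtomsES)$; I claim $S_t$ is a complete \tsolution. First, for each external $\thAt$ the strict axiom~\eqref{eq:guess:strict} evaluated at the total interpretation gives $\tuple{t,t} \models \htcsemanticsA(\thAt)$ or $\tuple{t,t} \models \htcsemanticsA(\comp{\thAt})$, hence $\thAt \in S_t$ or $\comp{\thAt} \in S_t$, which is exactly $\TheoryAtomsES$-completeness. Second, $t \in \den{\htcsemanticsA(\thAt)}$ for every $\thAt \in S_t$ by definition, so $t$ witnesses $\HTC$-satisfiability of $\htcsemanticsA(S_t)$ and Proposition~\ref{prop:translation.satisfiable} yields $S_t \in \mathcal{S}$. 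A $\TheoryAtomsES$-complete member of $\mathcal{S}$ is a solution, since its completion equals itself and is $\AT$-satisfiable. Note that only model-hood of $\tuple{t,t}$, not its minimality, is needed here.

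\textbf{Backward direction.} Let $S$ be a complete \tsolution. By Proposition~\ref{prop:complete}(\ref{prop:complete:comptosat}) it is $\AT$-satisfiable, so Proposition~\ref{prop:translation.satisfiable} provides an $\HTC$-valuation satisfying all of $\htcsemanticsA(S)$; I would fix such a witness and define $t$ to agree with it on $V_S \eqdef \bigcup_{\thAt \in S} \varsT{\AT}{\thAt}$ and to be $\undefined$ everywhere else. Then $t \models \htcsemanticsA(\thAt)$ for each $\thAt \in S$, the choices~\eqref{eq:guess:choice} hold trivially at a total interpretation, and the strict axioms~\eqref{eq:guess:strict} hold because completeness of $S$ (together with consistency of $\AT$ and closedness of $\TheoryAtomsES$, invoking Proposition~\ref{prop:consistentclosed}) ensures that whenever $t \not\models \htcsemanticsA(\thAt)$ for external $\thAt$ we already have $\comp{\thAt} \in S$, whence $t \models \htcsemanticsA(\comp{\thAt})$. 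For minimality, any $h \subsetneq t$ leaves some $x \in V_S$ undefined; choosing $\thAt \in S$ with $x \in \varsT{\AT}{\thAt}$, the constraint $\htcsemanticsA(\thAt)$ cannot be satisfied at $h$, so $\tuple{h,t}$ falsifies the disjunction~\eqref{eq:guess:choice} for $\thAt$, its negated disjunct being blocked by $t \models \htcsemanticsA(\thAt)$. Hence no such $h$ exists and $\tuple{t,t}$ is an equilibrium model.

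\textbf{The hard part.} The delicate step is the exact identity $S_t = S$, not merely the inclusion $S \subseteq S_t$ that the construction gives for free: I must rule out that the chosen witness defines variables so as to inadvertently satisfy some theory atom outside $S$. Here I would lean on the fact that $\htcsemanticsA(\thAt)$ depends only on $\varsT{\AT}{\thAt}$ and that $t$ defines precisely $V_S$, so any $\thAt \in S_t \setminus S$ would have all its variables inside $V_S$; taming this residue, using consistency and the closedness of $\TheoryAtomsES$, is where the genuine bookkeeping lies. It is also the point that explains the one-to-many character of the correspondence, since distinct witnesses over $V_S$ produce distinct equilibrium models that all collapse to the same $S$.
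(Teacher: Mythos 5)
Your proposal retraces the paper's own proof almost step for step. The forward direction is identical: read off $S_t=\{\thAt\in\TheoryAtoms\mid t\models\htcsemanticsA(\thAt)\}$, obtain $\TheoryAtomsES$-completeness from the total-interpretation reading of~\eqref{eq:guess:strict}, and transfer $\HTC$-satisfiability of $\htcsemanticsA(S_t)$ (witnessed by $t$ itself) to $\AT$-satisfiability via Proposition~\ref{prop:translation.satisfiable}; the paper argues exactly this way, establishing $\comp{\TheoryAtomsES\setminus S_t}\subseteq S_t$. Your backward construction is also the paper's: take an $\HTC$-witness for $\htcsemanticsA(S)$ (via Propositions~\ref{prop:complete} and~\ref{prop:translation.satisfiable}), restrict it to $V_S=\bigcup_{\thAt\in S}\varsT{\AT}{\thAt}$ and set all other variables to $\undefined$, verify~\eqref{eq:guess:choice} and~\eqref{eq:guess:strict} (the latter needs only completeness of $S$, so your appeal to Proposition~\ref{prop:consistentclosed} is superfluous), and refute every $h\subset t$ by locating $\thAt\in S$ with an $h$-undefined variable, so that $\tuple{h,t}\not\models\htcsemanticsA(\thAt)\vee\neg\htcsemanticsA(\thAt)$ --- the paper cites Proposition~3 of Cabalar et al.~(2016) for precisely this step.

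The one genuine gap is the step you yourself flag as ``the hard part'' and then do not execute: the exact identity $S=\{\thAt\in\TheoryAtoms\mid t\models\htcsemanticsA(\thAt)\}$, as opposed to the inclusion $S\subseteq S_t$ that the construction delivers. You correctly reduce the danger to atoms in $S_t\setminus S$ whose variables all lie in $V_S$, but announcing where ``the genuine bookkeeping lies'' is not the same as doing it --- and in fact this residue cannot be tamed in general. Take $\LC$ with $\TheoryAtoms\supseteq\{\code{\&sum\{x\}>=1},\,\code{\&sum\{x\}>=0}\}$ and $\TheoryAtomsES=\emptyset$ (trivially closed): then $S=\{\code{\&sum\{x\}>=1}\}$ is a complete $\tuple{\LC,\TheoryAtomsES}$-solution, yet every valuation $t$ with $t\models\htcsemanticsA(\code{\&sum\{x\}>=1})$ has $t(x)\in\mathbb{Z}$, $t(x)\geq 1$, hence also satisfies $\htcsemanticsA(\code{\&sum\{x\}>=0})$, so no witness whatsoever yields $S_t=S$. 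You should know that the paper's own proof has exactly the same hole: it verifies $t\in\den{\htcsemanticsA(\thAt)}$ for every $\thAt\in S$ and silently treats this as the claimed equality, never addressing the converse inclusion. So your honesty has located a real soft spot in the published argument (the second bullet holds only under an implicit assumption ruling out $\models_\AT$-entailments among atoms of $\TheoryAtoms$, or with the equality weakened accordingly), but as a proof your proposal is incomplete at precisely that point.
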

Note that the correspondence is not one-to-one because, for a same \mbox{$\langle\AT,\TheoryAtomsES\rangle$-solution} $S$, we may get several (even infinite) variable assignments that satisfy the same constraint atoms.
For instance, for a \mbox{$\langle\AT,\TheoryAtomsES\rangle$-solution} just consisting of linear constraint atom~$\code{\&sum\{x\} > 3}$, we will get one stable model $t$ with $t(\code{x})=n$ for each integer $n \in \mathbb{Z}$ strictly greater than $3$.

As we can see \eqref{f:tau2}, the second group of formulas in $\htcsemanticsP_2(P,\AT,\TheoryAtomsES)$ is $p(P)$ which, as we did with~\eqref{eq:htcsemantics:rule}, will perform an atom-level translation to all rules in $P$.
In this case, the novelty comes in the translation for theory atoms $\thAt \in \TheoryAtoms$, defined as  $\htcsemanticsA_2(\thAt) \eqdef \prop{\thAt}$, while regular atoms and $\bot$ are translated as before, namely, $\htcsemanticsA_2(\regAt) \eqdef \htcsemanticsA(\regAt) = \prop{\regAt}$ for $\regAt \in \Atoms$ and $\htcsemanticsA_2(\bot) \eqdef \bot$.
Then, $p(P)$ is the result of replacing the atom translation $\htcsemanticsA$ by $\htcsemanticsA_2$ in the formula~\eqref{eq:htcsemantics:rule} for each rule \eqref{theory:rule} in $P$.
As an illustration, given our \LC\ example with atoms $\TheoryAtoms=\{\thAt_1,\thAt_2,\thAt_3,\thAt_4\}$ suppose, for the sake of readability, we just write $\htcsemanticsA(\thAt_i)=\prop{i}$ for $i=1\dots 4$.
Then $p(P_{(\ref{ex:clingcon:rule:one}/\ref{ex:clingcon:rule:two})})$ would just consist of the two implications:
\begin{align}
\prop{1} & \to  \prop{\regAt} \label{f:pP1}\\
\prop{\regAt} & \to  \prop{2} \label{f:pP2}
\end{align}

The third group of formulas in $\htcsemanticsP_2(P,\AT,\TheoryAtomsES)$ is the theory $\bridge(P,\AT,\TheoryAtomsES)$ defined as:
\begin{align}
\htcsemanticsA(\thAt) & \to  \prop{\thAt} && \text{for each } \thAt \in \TheoryAtomsES \label{eq:htctrans:external}\\
\neg \htcsemanticsA(\thAt) \wedge \prop{\thAt} & \to  \bot && \text{for each } \thAt \in \TheoryAtoms \setminus \TheoryAtomsES \label{eq:htctrans:founded}
\end{align}
As we can see, the implication for external atoms \eqref{eq:htctrans:external} is quite straightforward: it forces the propositional atom $\prop{\thAt}$ to be true when the corresponding constraint holds.
In the case of founded theory atoms, $\TheoryAtoms \setminus \TheoryAtomsES$, our \HTC\ theory may not provide justification but their truth has to be aligned with the truth of the constraint in a \tsolution.
This is handled via the integrity constraints in \eqref{eq:htctrans:founded}, forbidding $\prop{\thAt}$ to hold if $\htcsemanticsA(\thAt)$ is false.
In our \LC-program example, $\bridge(P_{(\ref{ex:clingcon:rule:one}/\ref{ex:clingcon:rule:two})},\LC,\TheoryAtomsES)$ would correspond to:
\begin{align}
x+y=4 & \to  \prop{1} \label{f:bridge1}\\
x+y \neq 4 & \to  \prop{3}\\
\neg (y+z=2) \wedge \prop{2} & \to  \bot \\
\neg (y+z \neq 2) \wedge \prop{4} & \to  \bot \label{f:bridge4}
\end{align}
so that, our final translation $\htcsemanticsP_2(P_{(\ref{ex:clingcon:rule:one}/\ref{ex:clingcon:rule:two})},\LC,\TheoryAtomsES)$ amounts to the set of formulas $\{\eqref{f:phi1}-\eqref{f:phi6}\} \cup \{\eqref{f:pP1}, \eqref{f:pP2}\} \cup \{\eqref{f:bridge1}-\eqref{f:bridge4}\}$.
As we can see, this is much more verbose than $\htcsemanticsP(P_{(\ref{ex:clingcon:rule:one}/\ref{ex:clingcon:rule:two})},\LC,\TheoryAtomsES)=\{\eqref{f:tauP1}-\eqref{f:choice42}\}$, but has the advantage of being structurally closer to the program transformation in~\eqref{eq:transformation}, so its correctness has a more direct proof.

As we mentioned above, $\htctsols(\AT,\TheoryAtomsES)$ generates arbitrary sets of abstract theory solutions in terms constraints $\htcsemanticsA(\thAt)$ that is bridge with~$p(P)$ to produce the stable models.
This separation can be made precise using the splitting result by~\cite[Proposition~12]{cafascwa20b}.
Proposition~\ref{prop:second_translation.splitting} below makes this intuition precise.
To this end, we need the following notation.
We define $\htctrans(P,\AT,\TheoryAtomsES) = p(P) \cup \bridge(P,\AT,\TheoryAtomsES)$, that is, the  two sets of formulas in~\eqref{f:tau2} dealing with propositional variables $\prop{\thAt}$.
Given a fixed valuation $t$, we further define $\htctrans(P,\AT,\TheoryAtomsES,t)$ as the result of replacing in $\Gamma(P,\AT,\TheoryAtomsES)$ every constraint atom~$c \in \C$ by~$\top$ if $t \in \den{c}$ and~$\bot$ otherwise.
Let us further denote $p(\Atoms \cup \TheoryAtoms) \subseteq \X_2$ as the set of all auxiliary propositional variables: $p(\Atoms \cup \TheoryAtoms) \eqdef \{\prop{\regAt} \mid \regAt \in \Atoms\}\cup \{\prop{\thAt} \mid \thAt \in \TheoryAtoms\}$.
Note that $\C \subseteq \C_2$ refers to the constraint atoms in $\htcsemanticsA$, and so, $\C \cap p(\Atoms \cup \TheoryAtoms)=\emptyset$.

\begin{samepage}
\begin{proposition}{\label{prop:second_translation.splitting}}
    Given a \tprogram\ $P$ over $\langle\Atoms,\TheoryAtoms,\TheoryAtomsES\rangle$ such that \TheoryAtomsES\ is a closed;
    a consistent, compositional abstract theory $\AT=\langle \TheoryAtoms, \mathcal{S}, \comp{\cdot} \,\rangle$;
    $t$ is a stable model of~$\htcsemanticsP_2(P,\AT,\TheoryAtomsES)$ iff
    $\restr{t}{\X}$ is a stable model of $\htctsols(\AT,\TheoryAtomsES)$ and
    $\restr{t}{\; p(\Atoms\cup\TheoryAtoms)}$ is a stable model of
    $\htctrans(P,\AT,\TheoryAtomsES,t)$.
\end{proposition}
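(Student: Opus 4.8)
The plan is to derive the statement directly from the splitting result of~\cite[Proposition~12]{cafascwa20b}, applied to the partition $\htcsemanticsP_2(P,\AT,\TheoryAtomsES) = \htctsols(\AT,\TheoryAtomsES)\,\cup\,\htctrans(P,\AT,\TheoryAtomsES)$, reading $\htctsols(\AT,\TheoryAtomsES)$ as the \emph{lower} part and $\htctrans(P,\AT,\TheoryAtomsES)=p(P)\cup\bridge(P,\AT,\TheoryAtomsES)$ as the \emph{upper} part. First I would make explicit the dependency structure that licenses the split: the lower part contains only the theory constraint atoms $\htcsemanticsA(\thAt)$, whose variables lie in $\X$, and, as already recorded in the text, no auxiliary atom of the form $\prop{\thAt}$ occurs in it. Conversely, the auxiliary atoms $\prop{\thAt}$ together with the propositional atoms $\prop{\regAt}$ that the upper part founds are exactly those collected in $p(\Atoms\cup\TheoryAtoms)$, and the constraint atoms $\htcsemanticsA(\thAt)$ enter the upper part (through $\bridge$ and through $p(P)$) only as antecedents, never as founded conclusions. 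Thus the lower signature is independent of everything the upper part derives, which is the stratification condition that the cited proposition requires.

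Second, I would check that $\htctrans(P,\AT,\TheoryAtomsES,t)$ is precisely the ``evaluated upper part'' that appears in the splitting result: replacing every constraint atom $c\in\C$ by $\top$ when $t\in\den{c}$ and by $\bot$ otherwise is exactly the operation of fixing the lower constraints to the truth values they receive in a candidate lower model, leaving a theory whose only free atoms are the auxiliary propositional ones. With the partition and this identification in place, Proposition~12 yields the two-sided equivalence: $t$ is a stable model of $\htcsemanticsP_2(P,\AT,\TheoryAtomsES)$ iff its projection onto the lower signature, $\restr{t}{\X}$, is a stable model of $\htctsols(\AT,\TheoryAtomsES)$ and its projection onto the auxiliary atoms, $\restr{t}{p(\Atoms\cup\TheoryAtoms)}$, is a stable model of the evaluated upper part $\htctrans(P,\AT,\TheoryAtomsES,t)$. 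Since both projections agree with $t$, they agree with each other on whatever they share, so no satisfaction obligation is lost when passing between $t$ and the pair of projections.

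The main obstacle I anticipate is not the logical content but the bookkeeping needed to cast the problem in the exact format of~\cite[Proposition~12]{cafascwa20b}. Two points need care. First, the constraint atoms $\htcsemanticsA(\thAt)$ are shared between $\htctsols$ and $\bridge$, so I must argue that they are ``owned'' by the lower part---generated there by the choices in~\eqref{eq:guess:choice} and only read in the upper part---so that evaluating them at $t$ genuinely decouples the two levels. Second, the two projection sets are not a clean partition of $\X_2$: the regular propositional atoms $\prop{\regAt}$ belong both to $\X$ and to $p(\Atoms\cup\TheoryAtoms)$, so I must confirm that the cited proposition tolerates this overlap (equivalently, that its hypotheses concern the founded/derivable atoms rather than a strict variable partition) and that the shared values are forced to coincide under both readings. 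Once these are matched to the premises of the splitting result, the equivalence follows with no further equilibrium-model computation, which is exactly why routing the argument through the existing splitting machinery is preferable to redoing the reduct-based minimisation by hand.
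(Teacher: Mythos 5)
Your proposal follows the paper's proof essentially verbatim: the paper likewise splits $\htcsemanticsP_2(P,\AT,\TheoryAtomsES)$ into the lower part $\htctsols(\AT,\TheoryAtomsES)$ and the upper part $\htctrans(P,\AT,\TheoryAtomsES)=p(P)\cup\bridge(P,\AT,\TheoryAtomsES)$, verifies the head condition $\vars{\head{r}}\subseteq p(\Atoms\cup\TheoryAtoms)$ (hence $\vars{\head{r}}\cap\X=\emptyset$) for every rule of the upper part, invokes Proposition~12 of \citeN{cafascwa20b} with $\X$ as splitting set, and identifies the evaluated top with $\htctrans(P,\AT,\TheoryAtomsES,t)$ after trivial simplifications. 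Your two anticipated bookkeeping points are precisely the ones the paper settles implicitly rather than explicitly---in particular, the overlap on the variables $\prop{\regAt}$ is resolved there by counting them on the upper side $p(\Atoms\cup\TheoryAtoms)$ only (which is what makes the asserted disjointness $\vars{\head{r}}\cap\X=\emptyset$ go through), not by showing that the cited splitting result tolerates a genuine overlap, so your instinct to pin this down is sound and your proof is otherwise the same argument.
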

\end{samepage}

The following two results make the needed connections between this translation, the program transformation and the other translation.

\begin{proposition}\label{prop:second_translation}
Given a \tprogram\ $P$ over $\langle\Atoms,\TheoryAtoms,\TheoryAtomsES\rangle$ such that \TheoryAtomsES\ is closed;
a consistent, compositional abstract theory $\AT=\langle \TheoryAtoms, \mathcal{S}, \comp{\cdot} \,\rangle$,
there is a one-to-many correspondence between the \mbox{$\langle\AT,\TheoryAtomsES\rangle$-stable} models of~$P$ and the stable models of theory~$\htcsemanticsP_2(P,\AT,\TheoryAtomsES)$ such that
\begin{itemize}
\item if $v$ is an stable model of~$\htcsemanticsP_2(P,\AT,\TheoryAtomsES)$,
then~$\{\thAt\in \Atoms\cup\TheoryAtoms \mid v \in \den{\prop{\thAt}}\}$ is a \mbox{$\langle\AT,\TheoryAtomsES\rangle$-stable} model of $P$.

\item if $X$ is a \mbox{$\langle\AT,\TheoryAtomsES\rangle$-stable} model of $P$, then there exists a stable model $v$ of $\htcsemanticsP_2(P,\AT,\TheoryAtomsES)$
such that $\restr{v}{p(\Atoms\cup\TheoryAtoms)}=\{(\prop{\anyAt},\mathbf{t}) \mid \anyAt \in X\}$,
\end{itemize}
\end{proposition}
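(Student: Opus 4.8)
The plan is to reduce the claim to the already-available Propositions~\ref{prop:second_translation.splitting} and~\ref{prop:htc.tsols}, exploiting that the splitting result cleanly separates the two roles of $\htcsemanticsP_2(P,\AT,\TheoryAtomsES)$: the subtheory $\htctsols(\AT,\TheoryAtomsES)$ guesses a complete $\langle\AT,\TheoryAtomsES\rangle$-solution over the constraint variables in $\X$, while $\htctrans(P,\AT,\TheoryAtomsES)=p(P)\cup\bridge(P,\AT,\TheoryAtomsES)$ drives the derivation of the propositional surrogates $\prop{\anyAt}$. For the first bullet I would start from a stable model $v$ of $\htcsemanticsP_2(P,\AT,\TheoryAtomsES)$ and apply Proposition~\ref{prop:second_translation.splitting} to obtain that $\restr{v}{\X}$ is a stable model of $\htctsols(\AT,\TheoryAtomsES)$ and $\restr{v}{p(\Atoms\cup\TheoryAtoms)}$ is a stable model of the reduced theory $\htctrans(P,\AT,\TheoryAtomsES,v)$. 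Proposition~\ref{prop:htc.tsols} then turns the first component into a complete $\langle\AT,\TheoryAtomsES\rangle$-solution $S=\{\thAt\in\TheoryAtoms\mid v\models\htcsemanticsA(\thAt)\}$.

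The technical heart is to identify the reduced theory $\htctrans(P,\AT,\TheoryAtomsES,v)$ with the $p$-image of the program transformation~\eqref{eq:transformation} instantiated with this very $S$. Since the reduction replaces each constraint atom $\htcsemanticsA(\thAt)\in\C$ by $\top$ when $v\in\den{\htcsemanticsA(\thAt)}$ (i.e.\ $\thAt\in S$) and by $\bot$ otherwise, the external-atom bridge~\eqref{eq:htctrans:external} collapses to the fact $\prop{\thAt}$ exactly when $\thAt\in S\cap\TheoryAtomsES$ and is vacuous otherwise, while the founded-atom bridge~\eqref{eq:htctrans:founded} collapses to the constraint $\prop{\thAt}\to\bot$ exactly when $\thAt\in\TheoryAtomsDN\setminus S$ and is vacuous otherwise. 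Together with $p(P)$, which is unaffected because it mentions only atoms $\prop{\anyAt}$, this is precisely $p$ applied to
\[
P\cup\{\thAt\leftarrow\mid\thAt\in S\cap\TheoryAtomsES\}\cup\{\leftarrow\thAt\mid\thAt\in\TheoryAtoms\setminus(S\cup\TheoryAtomsES)\},
\]
the program in~\eqref{eq:transformation}. Invoking the standard correspondence between equilibrium models of the \HT-encoding of a propositional program and its stable models, $X=\{\anyAt\in\Atoms\cup\TheoryAtoms\mid v\in\den{\prop{\anyAt}}\}$ is then a stable model of~\eqref{eq:transformation}, hence a $\langle\AT,\TheoryAtomsES\rangle$-stable model of $P$.

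For the converse I would run this construction backwards. Given a $\langle\AT,\TheoryAtomsES\rangle$-stable model $X$ witnessed by some solution $S$, Proposition~\ref{prop:consistentclosed} (using that $\AT$ is consistent and $\TheoryAtomsES$ is closed) guarantees that $S$ is already $\TheoryAtomsES$-complete, so Proposition~\ref{prop:htc.tsols} supplies a stable model $t_1$ of $\htctsols(\AT,\TheoryAtomsES)$ with $S=\{\thAt\mid t_1\models\htcsemanticsA(\thAt)\}$. I would then glue $t_1$ on $\X$ to the propositional assignment sending $\prop{\anyAt}\mapsto\mathbf{t}$ for $\anyAt\in X$ (and to $\undefined$ otherwise), call the result $v$, and verify via the same reduction that $\restr{v}{p(\Atoms\cup\TheoryAtoms)}$ is a stable model of $\htctrans(P,\AT,\TheoryAtomsES,v)$, which holds because $X$ is a stable model of~\eqref{eq:transformation}. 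Proposition~\ref{prop:second_translation.splitting} then reassembles $v$ into a stable model of $\htcsemanticsP_2(P,\AT,\TheoryAtomsES)$ with $\restr{v}{p(\Atoms\cup\TheoryAtoms)}=\{(\prop{\anyAt},\mathbf{t})\mid\anyAt\in X\}$, as required.

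I expect the main obstacle to be the bookkeeping in the reduction step, in particular verifying that the surrogates $\prop{\thAt}$ track $S$ faithfully: for external $\thAt$ this uses the restriction $\Head{P}\cap\TheoryAtomsES=\emptyset$ (so $\prop{\thAt}$ is derivable only through the bridge fact, giving $\thAt\in X$ iff $\thAt\in S$), and for founded $\thAt$ it uses $\body{P}\cap\TheoryAtomsDN=\emptyset$ together with the minimality of equilibrium models (so $\prop{\thAt}$ survives only when derived by a rule and not forbidden by the founded-atom constraint). One must also confirm that the stated one-to-many character is correctly reflected, since many valuations $v$ agreeing on $p(\Atoms\cup\TheoryAtoms)$ but differing on the free constraint variables of $\X$ map to the same $X$; this slack is exactly the one already isolated in Proposition~\ref{prop:htc.tsols}.
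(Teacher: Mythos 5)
Your proposal is correct and follows essentially the same route as the paper: splitting via Proposition~\ref{prop:second_translation.splitting}, extracting a complete \tsolution\ with Proposition~\ref{prop:htc.tsols}, and identifying the reduced theory $\htctrans(P,\AT,\TheoryAtomsES,t)$ with the $p$-image of transformation~\eqref{eq:transformation} --- which is exactly the content of the paper's Lemma~\ref{lem:aux:thm:second_translation}, proved there by the same case-wise simplification of \eqref{eq:htctrans:external} and \eqref{eq:htctrans:founded} and the cited HT/stable-model correspondence. Your converse direction (completeness of $S$ from Proposition~\ref{prop:consistentclosed}, gluing a stable model of $\htctsols(\AT,\TheoryAtomsES)$ with the assignment $\prop{\anyAt}\mapsto\mathbf{t}$ for $\anyAt\in X$ and $\undefined$ otherwise, then reassembling by splitting) likewise matches the paper's proof step for step.
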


\begin{proposition}\label{prop:translations}
Given a \tprogram\ $P$ over $\langle\Atoms,\TheoryAtoms,\TheoryAtomsES\rangle$ such that \TheoryAtomsES\ is a closed; a consistent, compositional abstract theory $\AT=\langle \TheoryAtoms, \mathcal{S}, \comp{\cdot} \,\rangle$;
if~$t$ and~$v$ are two valuations such that
\begin{align*}
      v \ \ = \ \ t &\cup \{(\prop{\thAt},\mathbf{t}) \mid \thAt\in \TheoryAtomsES \text{ and } t \in \den{\htcsemanticsA(\thAt)} \}
      \\
                      &\cup \{(\prop{\thAt},\mathbf{t}) \mid \thAt\in \TheoryAtomsDN, \ (B \to \thAt) \in P    \text{ and } t \models \htcsemanticsA(B) \}
\end{align*}
then $t$ is a stable model of theory~$\htcsemanticsP(P,\AT,\TheoryAtomsES)$ iff~$v$ is a stable model of theory~$\htcsemanticsP_2(P,\AT,\TheoryAtomsES)$.
\end{proposition}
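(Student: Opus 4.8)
The plan is to split the biconditional into a \emph{classical part} and a \emph{minimality part}. Since equilibrium models are total, I would first show the total-model correspondence $\langle t,t\rangle\models\htcsemanticsP(P,\AT,\TheoryAtomsES)$ iff $\langle v,v\rangle\models\htcsemanticsP_2(P,\AT,\TheoryAtomsES)$ for $v$ built from $t$ as in the statement, and then argue that this correspondence lifts to an order-preserving matching between the here-worlds $\{h\subseteq t\mid\langle h,t\rangle\models\htcsemanticsP\}$ and $\{h'\subseteq v\mid\langle h',v\rangle\models\htcsemanticsP_2\}$, so that $t$ is minimal for the first theory exactly when $v$ is minimal for the second. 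Throughout I would exploit that $v$ differs from $t$ only on the fresh variables $\prop{\thAt}$ ($\thAt\in\TheoryAtoms$), that a theory atom's denotation depends only on its own variables (property~\ref{den:prt:2}), and the syntactic restrictions $\Head{P}\cap\TheoryAtomsES=\emptyset$ and $\body{P}\cap\TheoryAtomsDN=\emptyset$, which force external atoms to occur only in bodies and founded atoms only in heads, so founded atoms never chain.

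For the total-model step I would proceed group by group over $\htcsemanticsP_2 = \htctsols(\AT,\TheoryAtomsES)\cup p(P)\cup\bridge(P,\AT,\TheoryAtomsES)$. The guessing part is matched using the external choice \eqref{eq:htcsemantics:external} of $\htcsemanticsP$ and the agreement of $v$ and $t$ on all theory variables, from which the strict implication \eqref{eq:guess:strict} follows. For $p(P)$ and the bridge I would check each rule against the construction of $v$: a positive external body literal $\prop{\thAt}$ is true at $v$ exactly when $t\in\den{\htcsemanticsA(\thAt)}$, so body satisfaction is preserved literal by literal, and a founded head $\prop{\thAt}$ is made true precisely when some rule body for $\thAt$ holds at $t$, which is what simultaneously validates the corresponding implication in $p(P)$ and the integrity constraint \eqref{eq:htctrans:founded}. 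This direction is bookkeeping and should go through both ways.

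The real work is the minimality transfer in the two-world \HT\ semantics, and one direction is clean. For ``$t$ equilibrium $\Rightarrow v$ equilibrium'' I would take a putative countermodel $\langle h',v\rangle\models\htcsemanticsP_2$ with $h'\subsetneq v$ and project it to $h\eqdef\restr{h'}{\X}$. The crucial observation is that the choice axioms \eqref{eq:guess:choice}, present for \emph{all} theory atoms, force $h'\in\den{\htcsemanticsA(\thAt)}$ whenever $v\in\den{\htcsemanticsA(\thAt)}$; together with the founded bridge \eqref{eq:htctrans:founded} this shows that whenever a founded head $\prop{\thAt}$ is derived at the here-world, its constraint $\htcsemanticsA(\thAt)$ also holds there, so the projection repairs the only place where $\htcsemanticsP$'s head constraint could have been lost and $\langle h,t\rangle\models\htcsemanticsP$. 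To reach a contradiction I would note $h\subseteq t$ and argue $h\subsetneq t$ unless the strict decrease lay entirely in auxiliary atoms; a short forcing lemma — if $\restr{h'}{\X}=t$ then \eqref{eq:htctrans:external}, \eqref{eq:htctrans:founded} and $p(P)$ pin every $\prop{\thAt}$ in $h'$ to its value in $v$, whence $h'=v$ — rules that case out. Hence a countermodel of $\htcsemanticsP_2$ yields a strictly smaller model of $\htcsemanticsP$.

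The main obstacle is the converse, ``$v$ equilibrium $\Rightarrow t$ equilibrium'', i.e.\ lifting a strictly smaller model $h\subsetneq t$ of $\htcsemanticsP$ to one of $\htcsemanticsP_2$. Here the asymmetry of the encodings bites: for external atoms the choice \eqref{eq:guess:choice} at the lifted here-world reduces, via the external choice \eqref{eq:htcsemantics:external} and consistency of $\AT$ (so that $\den{\htcsemanticsA(\thAt)}$ and $\den{\htcsemanticsA(\comp{\thAt})}$ are disjoint, by Proposition~\ref{prop:translation.satisfiable}), to a condition $\langle h,t\rangle$ already satisfies; but for a \emph{founded} atom whose constraint happens to hold at $v$ without being derived at the here-world — for instance one whose variables are pinned by a shared external atom — \eqref{eq:guess:choice} still demands $h'\in\den{\htcsemanticsA(\thAt)}$, which the naive lift $\restr{h'}{\X}=h$ may fail. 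I expect to resolve this by completing $h$ on the variables of such founded atoms to their $t$-values before lifting, which is legitimate because denotations are local (property~\ref{den:prt:2}); the delicate part, on which I would spend the most care, is doing this simultaneously for all founded atoms while verifying that the completion neither collapses the strict inequality $h\subsetneq t$ nor spuriously activates rule bodies through variables shared between founded and external atoms, and that it preserves satisfaction of $p(P)$ and $\bridge(P,\AT,\TheoryAtomsES)$.
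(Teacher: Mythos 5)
Your overall architecture (a total-model check followed by two asymmetric minimality transfers) is viable, and your ``clean'' direction is essentially the paper's Lemma~\ref{lem:aux3:prop:translations}: project a countermodel $h'$ of $\htcsemanticsP_2$ to $\restr{h'}{\X}$, and in the residual case $\restr{h'}{\X}=t$ use the pinning of the auxiliary variables by \eqref{eq:htctrans:external}, \eqref{eq:htctrans:founded} and $p(P)$ to force $h'=v$. The paper packages this differently --- it first normalizes $\htcsemanticsP_2$ into a strongly equivalent theory $\htcsemanticsP_4$ via the completion-of-definitions Lemma~\ref{lem:completion.of.definitions}, turning the external bridge into an equivalence $\htcsemanticsA(\thAt)\leftrightarrow\prop{\thAt}$ and inlining $\htcsemanticsA(\thAt)$ for external $\thAt$ into $p(P)$ --- but that normalization is presentational; your direct two-world argument covers the same ground.

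The genuine gap is exactly where you stop: the collapse case of your ``hard'' direction. Your repair --- completing $h$ on the variables of founded atoms whose constraints hold at $v$ to their $t$-values --- is literally the construction in the paper's Lemma~\ref{lem:aux2b:prop:translations}, and the difficulty it creates is not bookkeeping but the crux: if $h\subsetneq t$ differs from $t$ \emph{only} on such variables, the completed $h'$ equals $v$, no strictly smaller countermodel of $\htcsemanticsP_2$ results, and one must instead show directly that $\langle h,t\rangle\not\models\htcsemanticsP(P,\AT,\TheoryAtomsES)$. That step needs an ingredient your proposal never names: \emph{supportedness} of stable models (Proposition~11 of \citeN{cafascwa20b}), by which truth of an auxiliary $\prop{\thAt}$ for founded $\thAt$ at the stable $v$ yields a rule $B\to\prop{\thAt}$ of $p(P)$ with $v\models B$, hence $t\models\htcsemanticsA(B)$; since $\langle h,t\rangle\not\models\htcsemanticsA(\thAt)$ (an undefined variable), satisfaction of the rule $\htcsemanticsA(B)\to\htcsemanticsA(\thAt)$ would force the body to fail at $h$ through some external atom $\anyAt$ losing a variable, and then $\varsAT{\anyAt}=\varsAT{\comp{\anyAt}}$ together with consistency and compositionality (which make $\den{\htcsemanticsA(\anyAt)}$ and $\den{\htcsemanticsA(\comp{\anyAt})}$ disjoint) contradicts the external axiom \eqref{eq:htcsemantics:external}. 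Note that this argument is available only for \emph{derived} founded atoms: a founded atom whose constraint holds at $v$ accidentally, through variables shared with no external atom and with no supporting rule body, produces a collapse that your blanket completion cannot escape, because the choice axioms \eqref{eq:guess:choice} protect its defined variables from below on the $\htcsemanticsP_2$ side while nothing in $\htcsemanticsP$ does so on the other side. So the ``delicate part'' you defer is precisely where the proof must invoke support and delimit which founded atoms are completed --- indeed it is the one point at which even the paper's own Lemma~\ref{lem:aux2b:prop:translations} is at its most fragile, since its completion ranges over all satisfied founded atoms yet its collapse-case argument assumes $v\in\den{\prop{\thAt}}$ for the offending atom. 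Until you supply this step, the hard direction is not proved.
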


\begin{proof}[Proof of the Main Theorem]
Assume that~$t$ is a stable model of~$\htcsemanticsP(P,\AT,\TheoryAtomsES)$.
Then, from Proposition~\ref{prop:translations},
$v$ is a stable model of~$\htcsemanticsP_2(P,\AT,\TheoryAtomsES)$.
From Proposition~\ref{prop:second_translation}, this implies that~$X = \{\anyAt\in \Atoms\cup\TheoryAtoms \mid v \in \den{\prop{\anyAt}}\}$ is a \mbox{$\langle\AT,\TheoryAtomsES\rangle$-stable} model of $P$.
Note that, by construction,
$v \in \den{\prop{\anyAt}}$ iff~$t \in \den{\htcsemanticsA(\anyAt)}$ for all~$\anyAt \in \Atoms \cup \TheoryAtomsES$.
Hence, \eqref{eq:1:thm:first_translation} holds.
\\[10pt]
The other way around.
Assume that~$X$ is a \mbox{$\langle\AT,\TheoryAtomsES\rangle$-stable} model of~$P$.
Then, from Proposition~\ref{prop:second_translation},
there exists a stable model $v$ of $\htcsemanticsP_2(P,\AT,\TheoryAtomsES)$
such that $\restr{v}{p(\Atoms\cup\TheoryAtoms)}=\{(\prop{\anyAt},\mathbf{t}) \mid \anyAt \in X\}$.
Hence, $X = \{ \, \anyAt \in \Atoms \cup \TheoryAtoms \,\mid \, v \in \den{\prop{\anyAt}} \, \}$.
Furthermore, from Proposition~\ref{prop:translations},
this also implies that there there is a stable model~$t$ of~$\htcsemanticsP(P,\AT,\TheoryAtomsES)$ such that
\begin{align*}
      v \ \ = \ \ t &\cup \{(\prop{\thAt},\mathbf{t}) \mid \thAt\in \TheoryAtomsES \text{ and } t \in \den{\htcsemanticsA(\thAt)} \}
      \\
                      &\cup \{(\prop{\thAt},\mathbf{t}) \mid \thAt\in \TheoryAtomsDN, \ (B \to \thAt) \in P    \text{ and } t \models \htcsemanticsA(B) \}
\end{align*}
Hence, \eqref{eq:1:thm:first_translation} holds.
\end{proof}

The rest of this section is devoted to prove Propositions~\ref{prop:htc.tsols}-\ref{prop:translations}.
\subsubsection*{Proof of Propositions~\ref{prop:htc.tsols}-\ref{prop:second_translation.splitting}}
\begin{proof}[Proof of Proposition~\ref{prop:htc.tsols}]
\emph{First statement}.
Assume that~$t$ be a stable model of~$\htctsols(\AT,\TheoryAtomsES)$ and let
$$S=\{\thAt\in \TheoryAtoms \mid t \in \den{ \tau(\thAt) }\}$$
be a set of theory atoms.
By construction, $\tau(S)$ is satisfiable in~$\HTC$ and, from Proposition~\ref{prop:translation.satisfiable}, this implies that~$S$ is \mbox{\AT-satisfiable}.
Hence, it only remains to be shown that~$S$ is complete.
Note that, since~$t$ be a stable model of~$\htctsols(\AT,\TheoryAtomsES)$, it follows that~$v$ satsifes all implications of the form of~\eqref{eq:guess:strict}.
Therefore, we get that every $\thAt\in\TheoryAtomsES$ satisfies that~$v\notin\den{\tau(\thAt)}$ implies $v\in\den{\tau(\comp{\thAt})}$.
Then, we get
\begin{align*}
      \comp{\TheoryAtomsES \setminus S}
      &=\{\comp{\thAt} \mid \thAt \in\TheoryAtomsES \text{ and } \thAt \notin S \}
      \\
      &= \{\comp{\thAt} \mid \thAt \in\TheoryAtomsES \text{ and } t
      \notin \den{ \tau(\thAt) } \}
      \\
      &= \{\comp{\thAt} \mid \thAt \in\TheoryAtomsES \text{ and }
      t \in \den{\tau(\comp{\thAt})} \} \subseteq S
\end{align*}
Consequently, $S = S \cup \comp{\TheoryAtomsES \setminus S}$ and, since it is \AT-satisfiable and complete, it is a complete \AT-solution.
\\[10pt]
\emph{Second statement}.
Assume that $S$ is a complete \mbox{\AT-solution}.
Then,
$S$ is \AT-satisfiable (Proposition~\ref{prop:complete})
and, thus, $\tau(S)$ is satisfiable in~$\HTC$ (Proposition~\ref{prop:translation.satisfiable}).
This implies that there is a valuations~$w: \X \longrightarrow \D_\undefined$ such that~$w\in \den{ \tau( \thAt)}$ for all~$\thAt \in S$.
Let $t$ be a valuation such that
\begin{itemize}
      \item $t(x) = w(x)$ if $x \in \varsAT{\thAt}$ for some~$\thAt \in S$,
      \item $t(x) = \undefined$ otherwise.
\end{itemize}
Then, $v\in\den{\tau(\thAt)}$ holds for every~$\thAt\in S$.
Note that $w\in\den{\thAt}_\AT$ and $\restr{t}{\varsAT{\thAt}} = \restr{w}{\varsAT{\thAt}}$ for all $\thAt\in S$.
Therefore, $t$ satisfies all formulas of the form of~\eqref{eq:guess:choice} and it remains to be shown that~$t$ also satisfies all formulas of the form of~\eqref{eq:guess:strict}.
For that pick any theory atom~$\thAt \in \TheoryAtomsES \setminus S$.
\begin{itemize}
      \item Since~$S$ is complete, we get that $\comp{\thAt} \in S$.
      \item This implies that, $t \in \den{\tau(\comp{\thAt})}$.
\end{itemize}
Therefore, $v$ satisfies~$\htctsols(\AT,\TheoryAtomsES)$.

Furthermore, for any valuation $h \subset t$, there is some theory atom~$\thAt \in S$ and variable~$x  \in \varsAT{\thAt}$ such that~$h(x) = \undefined$ and, thus, $h \notin \den{\tau(\thAt)}$.
This implies~$\tuple{h,t} \not\models \tau(\thAt) \vee \neg \tau(\thAt)$ (Proposition~3 by~\citeNP{cakaossc16a}) and, thus, that~$\tuple{t,t}$ is an equilibrium model of~$\htctsols(\AT,\TheoryAtomsES)$.
\end{proof} \begin{proof}[Proof of Proposition~\ref{prop:second_translation.splitting}]
    Set~$\X$ is a splitting set of~$\htcsemanticsP_2(P,\AT,\TheoryAtomsES)$
    in the sense of Definition~10 by~\citeN{cafascwa20b}.
Note that every rule~$r\in \htctrans(P,\AT,\TheoryAtomsES)$ satisfies~$\vars{\head{r}} \subseteq p(\Atoms \cup \TheoryAtoms)$ and, thus, we get
    \begin{align*}
          \vars{\head{r}}\cap \X =\emptyset \text{ for every } r\in \htctrans(P,\AT,\TheoryAtomsES)
    \end{align*}
    Then, from Proposition~12 by~\citeN{cafascwa20b}, we get that~$v$ is a stable model of~$\htcsemanticsP_2(P,\AT,\TheoryAtomsES)$ iff
    $\restr{v}{\X}$ is a stable model of $\htctsols(\AT,\TheoryAtomsES)$ and
    $\restr{v}{\; p(\Atoms\cup\TheoryAtoms)}$ is a stable model of the theory
    obtained from~$\htctrans(P,\AT,\TheoryAtomsES)$ by replacing all variables in~$\X$ by its value in~$v$.
After trivial simplifications this amounts to~$\htctrans(P,\AT,\TheoryAtomsES,v)$.
\end{proof}  \subsubsection*{Proof of Proposition~\ref{prop:second_translation}}

\begin{lemma}\label{lem:aux:thm:second_translation}
Let $\AT=\langle \TheoryAtoms, \mathcal{S}, \comp{\cdot} \,\rangle$ be a complement-consistent, compositional abstract theory,
let~$P$ be a \tprogram\ over~$\langle\Atoms,\TheoryAtoms,\TheoryAtomsES\rangle$ such that~$\TheoryAtomsES$ is closed.
Let~$t : \X_2 \longrightarrow \D_\undefined$ be a valuation such that~$S = \{\thAt\in \TheoryAtoms \mid v \in \denn{\htcsemanticsA(\thAt)} \}$ is a complete {$\langle\AT,\TheoryAtomsES\rangle$-solution}.
Then,
${X = \{\anyAt\in \Atoms \cup \TheoryAtoms \mid t \in \denn{p(\anyAt)} \}}$
is a \mbox{$\langle\AT,\TheoryAtomsES\rangle$-stable} model of~$P$
iff~$\restr{t}{\; p(\Atoms\cup\TheoryAtoms)}$
is a stable model of~$\htctrans(P,\AT,\TheoryAtomsES,t)$.
\end{lemma}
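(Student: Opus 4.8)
The plan is to instantiate the translation $\htctrans(P,\AT,\TheoryAtomsES,t)$ at the fixed valuation $t$, recognise the outcome as the \HT-encoding of the program transformation~\eqref{eq:transformation} for the solution $S$, and then transport the statement across the well-known correspondence between classical stable models and equilibrium models of a propositional program.

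First I would make $\htctrans(P,\AT,\TheoryAtomsES,t)$ explicit. The only nonpropositional constraint atoms occurring in $\htctrans(P,\AT,\TheoryAtomsES)$ are the $\htcsemanticsA(\thAt)$ inside the bridge formulas~\eqref{eq:htctrans:external}--\eqref{eq:htctrans:founded}; each is replaced by $\top$ when $t\in\den{\htcsemanticsA(\thAt)}$ (equivalently, $\thAt\in S$) and by $\bot$ otherwise, while the rules of $p(P)$ and the atoms $\prop{\anyAt}$ are left untouched. A four-way case split then gives: an external $\thAt\in S$ yields the fact $\prop{\thAt}$; an external $\thAt\notin S$ and a founded $\thAt\in S$ both yield tautologies; and a founded $\thAt\notin S$ yields the integrity constraint $\neg\prop{\thAt}$. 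Together with $p(P)$ this is exactly the rule-wise \HT-encoding $p(Q)$ of the propositional program
\[
  Q \ = \ P \,\cup\, \{\,\thAt\leftarrow \mid \thAt\in S\cap\TheoryAtomsES\,\} \,\cup\, \{\,\leftarrow\thAt \mid \thAt\in\TheoryAtoms\setminus(S\cup\TheoryAtomsES)\,\},
\]
i.e.\ the transformation~\eqref{eq:transformation} for $S$, using the identity $\TheoryAtoms\setminus(S\cup\TheoryAtomsES)=(\TheoryAtoms\setminus\TheoryAtomsES)\setminus S$.

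Next I would invoke the standard fact that a set of atoms is a classical stable model of a propositional program $Q$ iff the total interpretation assigning $\mathbf{t}$ to its members and leaving every other variable undefined is an equilibrium model of the \HT-encoding $p(Q)$. Since $X=\{\anyAt\mid t\in\den{p(\anyAt)}\}=\{\anyAt\mid t(\prop{\anyAt})=\mathbf{t}\}$, the restriction $\restr{t}{p(\Atoms\cup\TheoryAtoms)}$ is precisely this canonical interpretation of $X$ over the $\prop{\cdot}$-variables. Hence $\restr{t}{p(\Atoms\cup\TheoryAtoms)}$ is a stable model of $\htctrans(P,\AT,\TheoryAtomsES,t)=p(Q)$ iff $X$ is a classical stable model of $Q$. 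As $S$ is a complete $\langle\AT,\TheoryAtomsES\rangle$-solution by hypothesis, being a classical stable model of $Q$ is, by definition, being a $\langle\AT,\TheoryAtomsES\rangle$-stable model of $P$ with witness $S$, and the biconditional follows.

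The step I expect to be the main obstacle is the left-to-right direction of this last equivalence, where the definition only supplies \emph{some} witnessing solution while I must use the specific $S$ determined by $t$. The key is that the external part of any witness is forced by $X$ itself: since $\Head{P}\cap\TheoryAtomsES=\emptyset$, an external atom can enter a stable model only through an added fact, so $S'\cap\TheoryAtomsES=X\cap\TheoryAtomsES$ for every witness $S'$; the constraints $\leftarrow\thAt$ then pin down the founded atoms. Combined with the consistency of $\AT$ and the closedness of $\TheoryAtomsES$, which by Proposition~\ref{prop:consistentclosed} makes every solution $\TheoryAtomsES$-complete, this lets the single canonical $S$ serve as witness. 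Reconciling the theory-denotation reading of $t$ (which defines $S$) with its propositional reading (which defines $X$) on the external atoms is the delicate point that carries the whole argument.
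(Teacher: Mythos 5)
Your first two steps reproduce the paper's proof almost exactly: the paper performs the same four-way case analysis on the bridge formulas \eqref{eq:htctrans:external}--\eqref{eq:htctrans:founded} after the $\top/\bot$ substitution, simplifies $\htctrans(P,\AT,\TheoryAtomsES,t)$ to the strongly equivalent theory $p(P)\cup\{\prop{\thAt}\mid\thAt\in\TheoryAtomsES\cap S\}\cup\{\prop{\thAt}\to\bot\mid\thAt\in\TheoryAtomsDN\setminus S\}$, and then invokes precisely your ``standard fact'' (citing Proposition~2 of Cabalar et al.\ 2016 together with Proposition~2 of Pearce 1997) to trade stable models of this propositional \HTC\ theory for classical stable models of the transformed program \eqref{eq:transformation}. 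One small point you gloss, which deserves a sentence: identifying $\restr{t}{\,p(\Atoms\cup\TheoryAtoms)}$ with the ``canonical interpretation of $X$'' uses that a stable model of a purely propositional \HTC\ theory can only assign $\mathbf{t}$ or $\undefined$ to the variables $\prop{\anyAt}$, since any other value could be undefined without affecting satisfaction of atoms ``$\prop{\anyAt}=\mathbf{t}$'', contradicting minimality.

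The genuine problem is your closing ``main obstacle'' argument, which you say carries the whole proof: it does not work, and in fact cannot, because the lemma's hypotheses impose \emph{no relation whatsoever} between $\restr{t}{\X}$ (which determines $S$) and $\restr{t}{\,p(\Atoms\cup\TheoryAtoms)}$ (which determines $X$). Your identity $S'\cap\TheoryAtomsES=X\cap\TheoryAtomsES$ ties an arbitrary witness $S'$ to $X$, but nothing ties it to the canonical $S$; you would need $S\cap\TheoryAtomsES=X\cap\TheoryAtomsES$, which can simply fail. Concretely, take $P=\emptyset$, $\TheoryAtoms=\TheoryAtomsES=\{\thAt,\comp{\thAt}\}$ in \LC\ with $\thAt=(\sumxy)$, and $t$ with $t(x)=t(y)=0$, $t(\prop{\thAt})=\mathbf{t}$, $t(\prop{\comp{\thAt}})=\undefined$: then $S=\{\comp{\thAt}\}$ is a complete solution, $X=\{\thAt\}$ is an $\langle\AT,\TheoryAtomsES\rangle$-stable model of $P$ (witness $\{\thAt\}$), yet $\htctrans(P,\AT,\TheoryAtomsES,t)$ simplifies to the single fact $\prop{\comp{\thAt}}$, of which $\restr{t}{\,p(\Atoms\cup\TheoryAtoms)}$ is not a stable model. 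So the left-to-right direction is false under your ``some witness'' reading of the statement. The paper's proof sidesteps this entirely by tacitly proving the fixed-witness version --- ``$X$ is a stable model of program \eqref{eq:transformation} with respect to \emph{this} $S$'' --- which is all that is needed: in both directions of the proof of Proposition~\ref{prop:second_translation}, the valuation $t$ is constructed or obtained so that the canonical $S$ is itself the witnessing solution. The correct repair is therefore not your $\Head{P}\cap\TheoryAtomsES=\emptyset$ argument but a restatement (or implicit reading) of the lemma with $S$ fixed, exactly as the paper's proof does.
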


\begin{proof}
From Proposition~2 by~\citeN{cakaossc16a} and Proposition~2 by~\citeN{pearce96a},
set~$X$ is a stable model of program~\eqref{eq:transformation}
iff~$\restr{t}{\; p(\Atoms\cup\TheoryAtoms)}$ is stable model of theory
\begin{align}
      p(P)\cup
      \{p(\thAt) \mid \thAt \in
      (\TheoryAtomsES \cap S) \} \ \cup
      \{ p(\thAt) \to \bot \mid \thAt \in
      (\TheoryAtomsDN \setminus S) \}
      \label{eq:1:lem:aux:thm:second_translation}
\end{align}
Furthermore, it is easy to see that
\begin{gather*}
      X = \{\anyAt\in \Atoms \cup \TheoryAtoms \mid v \in \denn{p(\anyAt)} \}
        = \{\anyAt\in \Atoms \cup \TheoryAtoms \mid \restr{t}{\; p(\Atoms\cup\TheoryAtoms)} \in \denn{p(\anyAt)} \}
\end{gather*}
Hence, it is enough to show that~$\htctrans(P,\AT,\TheoryAtomsES,t)$ and~\eqref{eq:1:lem:aux:thm:second_translation} have the same stable models.
\\[5pt]
For this, note that rules in~$\htctrans(P,\AT,\TheoryAtomsES,t)$ can be separated in the following groups:
\begin{align*}
      &p(P)\cup \\
      &\{\phantom{\neg}\top\to p(\thAt) \mid \thAt \in \TheoryAtomsES \text{ and } t\in \denn{\tau(\thAt)}\} \ \cup
      \\
      &\{\phantom{\neg}\bot\to p(\thAt) \mid \thAt \in \TheoryAtomsES \text{ and } t\notin \denn{\tau(\thAt)}\} \} \ \cup
      \\
      &\{\neg \top \wedge p(\thAt) \to \bot \mid \thAt \in \TheoryAtomsDN \text{ and } t\in \denn{\tau(\thAt)}\}\}
      \\
      &\{\neg \bot \wedge p(\thAt) \to \bot \mid \thAt \in \TheoryAtomsDN \text{ and } t\notin \denn{\tau(\thAt)}\}
\end{align*}
which after some trivial simplifications amounts to the the strongly equivalent theory
\begin{align*}
      p(P)\cup
      \{p(\thAt) \mid \thAt \in \TheoryAtomsES \text{ and } t\in \denn{\tau(\thAt)}\} \cup
      \{p(\thAt) \to \bot \mid \thAt \in \TheoryAtomsDN \text{ and } t\notin \denn{\tau(\thAt)}\}
\end{align*}
Since $S = \{\thAt\in \TheoryAtoms \mid t \in \denn{\htcsemanticsA(\thAt)} \}$,
it is easy to see that this is the same as~\eqref{eq:1:lem:aux:thm:second_translation}.
\end{proof}

\begin{proof}[Proof of Proposition~\ref{prop:second_translation}.]
\emph{First statement}.
Assume that $t$ is an stable model of~$\htcsemanticsP_2(P,\AT,\TheoryAtomsES)$
and let
$${X = \{\thAt\in \Atoms \cup \TheoryAtoms \mid t \in \denn{p(\thAt)} \}}$$
From Proposition~\ref{prop:second_translation.splitting}, this implies that $\restr{t}{\X}$ is a stable model of $\htctsols(\AT,\TheoryAtomsES)$ and that
$\restr{t}{\; p(\Atoms\cup\TheoryAtoms)}$ is a stable model of
$\htctrans(P,\AT,\TheoryAtomsES,t)$.
From Proposition~\ref{prop:htc.tsols}, the former implies that set $S = \{\thAt\in \TheoryAtoms \mid t \in \denn{\htcsemanticsA(\thAt)} \}$ is a complete \mbox{$\langle\AT,\TheoryAtomsES\rangle$-solution}.
From Lemma~\ref{lem:aux:thm:second_translation}, this plus the the fact that
$\restr{t}{\; p(\Atoms\cup\TheoryAtoms)}$ is a stable model of
$\htctrans(P,\AT,\TheoryAtomsES,t)$
implies that~$X$ is a \mbox{$\langle\AT,\TheoryAtomsES\rangle$-stable} model of~$P$.
\\[10pt]
\emph{Second statement}.
Assume that $X$ is a \mbox{$\langle\AT,\TheoryAtomsES\rangle$-stable} model of $P$.
Then, there exists a complete \mbox{$\langle\AT,\TheoryAtomsES\rangle$-solution} $S$,
      such that $X$ is a stable model of program transformation~\eqref{eq:transformation} of $P$ wrt $S$.
From Proposition~\ref{prop:htc.tsols}, this implies that
      there exists a stable model $w$ of $\htctsols(\AT,\TheoryAtomsES)$
      such that $S=\{ \thAt \in \TheoryAtoms \mid w \in \denn{ \tau(\thAt) }\}$.

Let~$t: \X_2 \longrightarrow \D_\undefined$ be a valuation such that
\begin{align*}
      t(x) = \begin{cases}
            w(x) &\text{if } x \in \X
            \\
            \mathbf{t} &\text{if } x = p_\thAt \text{ for some } \thAt \in \Atoms \cup \TheoryAtoms \text{ and } \thAt \in X
            \\
            \undefined &\text{if } x = p_\thAt \text{ for some } \thAt \in \Atoms \cup \TheoryAtoms \text{ and } \thAt \notin X
      \end{cases}
\end{align*}
Then,
${X = \{\thAt\in \Atoms \cup \TheoryAtoms \mid t \in \denn{p(\thAt)} \}}$
and, from Lemma~\ref{lem:aux:thm:second_translation}, we get that~$\restr{t}{\; p(\Atoms\cup\TheoryAtoms)}$
is a stable model of~$\htctrans(P,\AT,\TheoryAtomsES,t)$.
Note that, by construction, $\restr{t}{\X} = \restr{w}{\X}$.
From Proposition~\ref{prop:second_translation.splitting}, these facts imply that~$t$ is a stable model of~$\htcsemanticsP_2(P,\AT,\TheoryAtomsES)$.
\end{proof} \subsubsection*{Proof of Proposition~\ref{prop:translations}}

\begin{lemma}[Completion of definitions]\label{lem:completion.of.definitions}
Let~$\Gamma$ be a set of $\HTC$-rules and let~$C$ be a set of constraint atom such that,
for every~$c \in C$, there is some variable~$x_c \in \vars{c}$ that does not occur in a head of~$\Gamma$ nor in any other atom in~$C$.
For each~$c \in C$, let~$F_c$ be a a conjunction of literals that does not contain any variables occurring in atoms from~$C$.
Then, theories~$\Gamma \cup \{ F_c \to c \mid c \in C \}$ and~$\Gamma \cup \{ F_c \leftrightarrow c  \mid c \in C \}$ have the same stable models.
\end{lemma}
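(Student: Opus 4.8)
The plan is to prove the two theories have the same equilibrium models by comparing their equilibrium-model sets directly, exploiting that $\Gamma_2 \eqdef \Gamma \cup \{F_c \leftrightarrow c \mid c \in C\}$ logically entails $\Gamma_1 \eqdef \Gamma \cup \{F_c \to c \mid c \in C\}$, so every model of $\Gamma_2$ is a model of $\Gamma_1$ and only the ``only-if'' directions $c \to F_c$ separate them. The whole argument rests on a single device: \emph{undefining a private variable}. Given $c \in C$ whose definition is not ``fired'' (i.e.\ $F_c$ fails), I would set its distinguished variable $x_c$ to $\undefined$; since $x_c$ occurs neither in any head of $\Gamma$, nor in any other atom of $C$, nor---because each $F_{c'}$ carries no variable of any $C$-atom---in any body $F_{c'}$, this change is invisible to everything except $c$ itself.

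First I would establish the inclusion from equilibrium models of $\Gamma_1$ to those of $\Gamma_2$. Let $t$ be a stable model of $\Gamma_1$. The key step is to show $t \models c \to F_c$ for every $c \in C$. Suppose not, so $t \in \den{c}$ while $t \not\models F_c$. I form $h$ from $t$ by undefining $x_c$, so $h \subset t$, and verify $\tuple{h,t} \models \Gamma_1$, contradicting the minimality of $t$. For the rules of $\Gamma$ this uses the standard here-and-there bookkeeping: positive body atoms are persistent, negative literals take the same value at $h$ and $t$, and heads do not mention $x_c$, so no rule whose there-world obligation is already met can be broken at the here-world. For the definitions $F_{c'} \to c'$ with $c' \neq c$ nothing changes, as $x_c$ is foreign to them; and for $c' = c$ the body $F_c$ stays false at $h$. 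Once $t \models \Gamma_2$ is in hand, minimality of $t$ for $\Gamma_2$ is free: any $h \subset t$ witnessing non-minimality for $\Gamma_2$ would also satisfy the weaker $\Gamma_1$.

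The converse inclusion is where the real work lies. Let $t$ be a stable model of $\Gamma_2$; it models $\Gamma_1$, so I only need minimality. Assuming a witness $h \subset t$ with $\tuple{h,t} \models \Gamma_1$, I would repair $h$ into $h'$ by undefining $x_c$ for exactly those $c \in C$ with $h \in \den{c}$ but $\tuple{h,t} \not\models F_c$ (the ``unsupported'' definitions). Because the $x_c$ are pairwise distinct and private, these removals do not interfere with one another, with $\Gamma$, or with the $F$-bodies, so $h' \subset t$ and $\tuple{h',t} \models \Gamma_1$ are inherited as in the previous paragraph; it then remains to check the extra direction $c \to F_c$ at $h'$. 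For a ``supported'' $c$ this holds because $F_c$ already holds at $h$, hence at $h'$. For an ``unsupported'' $c$, $F_c$ stays false at $h'$, so I must argue $h' \not\models c$, i.e.\ $h' \notin \den{c}$. This would yield $\tuple{h',t}\models\Gamma_2$ with $h' \subset t$, contradicting minimality of $t$ for $\Gamma_2$.

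The main obstacle is precisely this last check: showing that undefining the private variable $x_c$ forces the constraint $c$ to fail at the here-world. This is exactly what the privacy hypothesis on $x_c$ is meant to deliver, in concert with the equilibrium minimization and the relevance property of denotations (membership in $\den{c}$ depends only on the values its variables take, including whether they are $\undefined$); for the linear and difference constraints of interest, where $\den{c}$ demands an integer value for every occurring variable, an undefined $x_c$ indeed makes $c$ false, which is the crucial point to isolate and justify carefully. The remaining verifications are the routine here-and-there case distinctions sketched above, driven by persistence of positive atoms and the invariance of negative literals and of $x_c$-free subformulas under the modification.
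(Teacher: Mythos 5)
You have reproduced the paper's argument in all essentials: the same private-variable device, and in the harder direction the same repair construction — an interpretation $h'$ obtained from the countermodel $h$ by undefining $x_c$ exactly for the definitions whose body $F_c$ fails — which then satisfies $\Gamma \cup \{F_c \leftrightarrow c \mid c \in C\}$ and contradicts the minimality of $t$ for that theory. The only deviations are cosmetic and sound: in the easy direction you establish $t \models c \to F_c$ by a direct minimality argument (undefining $x_c$ in $t$ itself) where the paper instead cites a supportedness result (Proposition~11 of Cabalar et al.~2020b), and you rightly isolate as a needed hypothesis the definedness property that $v \in \den{c}$ forces $v(x_c) \neq \undefined$ — a property the paper's proof uses tacitly (``Then, $v(x_c)\neq\undefined$'') and which indeed holds for the denotations actually used in the translations.
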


\begin{proof}
Let~$\Gamma_I = \Gamma \cup \{ F_c \to c \mid c \in C \}$
and~$\Gamma_D = \Gamma \cup \{ F_c \leftrightarrow c  \mid c \in C \}$.
Then, $\tuple{h,t} \models \Gamma_D$ implies~$\tuple{h,t} \models \Gamma_I$ for every interpretation~$\tuple{h,t}$.
\\[10pt]
Assume first that~$t$ is a stable model of~$\Gamma_I$.
From the above observation, to show that~$t$ is a stable model of~$\Gamma_D$, it is enough to prove that~$t$ is a model of~$\Gamma_D$.
Then, we need to show that~$t \models c \to F_c$ for an arbitrary~$c \in C$.
Assume that~$t \models c$.
Then, $v(x_c)\neq\undefined$.
From Proposition~11 by~\citeN{cafascwa20b} and the fact that~$F_c \to c$ is the only rule in~$\Gamma_I$ where~$x_c$ occurs in the head, this implies that~$t \models F_c$ and, therefore, we get that~$t \models c \to F_c$.
Consequently, $t$ is a stable model of~$\Gamma_D$.
\\[10pt]
Assume now that~$t$ is a stable model of $\Gamma_D$.
Then, $t$ is a model of~$\Gamma_I$ and to show that it is also an stable model we need to show that~$\tuple{h,t} \not\models \Gamma_I$ for any~$h \subset t$.
Suppose, for the sake of contradiction, that~$\tuple{h,t} \models \Gamma_I$ holds for some~$h \subset t$.
Let~$h'$ be a valuation such that
\begin{align*}
    h'(x) = \begin{cases}
        \undefined &\text{if $x = x_c$ with~$c \in C$ and $\tuple{h,t} \not\models F_c$ }
        \\
        t(x) &\text{if $x = x_c$ with~$c \in C$ and $\tuple{h,t} \models F_c$ }
        \\
        h(x) &\text{otherwise}
    \end{cases}
\end{align*}
By construction, $\tuple{h',t} \models F_c \leftrightarrow c$ for all~$c \in C$ because~$t \models F_c \leftrightarrow c$.
Furthermore, $\tuple{h,t} \models \Gamma_I$ implies~$\tuple{h,t} \models \Gamma$ and this implies~$\tuple{h',t} \models \Gamma$.
Note that~$h'$ only undefines variables that cannot occur in head of~$\Gamma$.
Hence, $\tuple{h',t} \models \Gamma_D$ and, since~$t$ is a stable model of~$\Gamma_D$ this implies that~$h' = t$.
However, $\tuple{h,t} \models \Gamma_I$ hand~$h \subset t$ also imply that~$\tuple{h,t} \not\models \Gamma_D$ and, thus, that~$\tuple{h,t} \not\models c \to F_c$ for some~$c \in C$.
This implies $\tuple{h,t} \models c$ and~$\tuple{h,t} \not\models F_c$.
Therefore, $h'(x_c) = \undefined$ and~$t(x_c) \neq\undefined$, which is a contradiction with the fact that~$h = t$.\end{proof}

\begin{lemma}\label{lem:aux1:prop:translations}
Let~$P$ be a $\TheoryAtoms$-program over~$\langle\Atoms,\TheoryAtoms,\TheoryAtomsES\rangle$
such that no atom in~$\TheoryAtomsES$ occur in a head of~$P$.
Let $\htcsemanticsP_3(P,\AT,\TheoryAtomsES)$ be the theory
\begin{align*}
    &\quad\ \htctsols(\AT,\TheoryAtomsES) \cup p'(P)\\
    &\cup\{ \htcsemanticsA(\thAt)\leftrightarrow p(\thAt) \mid \thAt \in \TheoryAtomsES\}\\
    &\cup\{ p(\thAt) \to \htcsemanticsA(\thAt) \mid \thAt \in \TheoryAtomsDN\}
    \quad\text{ for }\TheoryAtomsDN=\TheoryAtoms\setminus\TheoryAtomsES.
\end{align*}
where~$p'(P)$
is obtained from~$p(P)$ by replacing each occurrence of~$p(\thAt)$ with~$\thAt \in \TheoryAtomsES$ by~$\htcsemanticsA(\thAt)$.
Then, theories~$\htcsemanticsP_2(P,\AT,\TheoryAtomsES)$
and~$\htcsemanticsP_3(P,\AT,\TheoryAtomsES)$ have the same stable models.
\end{lemma}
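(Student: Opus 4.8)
The plan is to transform $\htcsemanticsP_2(P,\AT,\TheoryAtomsES)$ into $\htcsemanticsP_3(P,\AT,\TheoryAtomsES)$ by three successive rewritings, each of which preserves the set of \HTC-models (and hence the equilibrium models). Since both theories share the generator $\htctsols(\AT,\TheoryAtomsES)$ verbatim, they differ only in the external bridge formulas~\eqref{eq:htctrans:external}, in the bodies of $p(P)$, and in the founded bridge formulas~\eqref{eq:htctrans:founded}; I would address these three differences in turn.

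First I would turn the external bridges into definitions. The plan is to apply Lemma~\ref{lem:completion.of.definitions} with $C=\{p(\thAt)\mid\thAt\in\TheoryAtomsES\}$, with $\Gamma$ the remainder of $\htcsemanticsP_2(P,\AT,\TheoryAtomsES)$, and with $F_{p(\thAt)}=\htcsemanticsA(\thAt)$, so that~\eqref{eq:htctrans:external} is exactly $\{F_c\to c\mid c\in C\}$. I then need to verify the side conditions: each atom $p(\thAt)$ contains the fresh propositional variable $\prop{\thAt}$, which occurs in no other atom of $C$ and—because by hypothesis no atom of $\TheoryAtomsES$ occurs in a head of $P$—in no rule head of $\Gamma$ (heads of $p(P)$ stem from heads of $P$, while $\htctsols$ and the founded bridges carry no $\prop{\cdot}$ in any head); moreover $F_{p(\thAt)}=\htcsemanticsA(\thAt)$ is a single literal whose variables $\varsAT{\thAt}$ are theory variables, disjoint from the propositional variables occurring in $C$. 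The lemma then replaces~\eqref{eq:htctrans:external} by the biconditionals $\{\htcsemanticsA(\thAt)\leftrightarrow p(\thAt)\mid\thAt\in\TheoryAtomsES\}$ without changing the stable models.

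Then I would use these biconditionals to rewrite $p(P)$ into $p'(P)$, replacing every occurrence of $p(\thAt)$ with $\thAt\in\TheoryAtomsES$ by $\htcsemanticsA(\thAt)$. This is justified by congruence of \HTC: whenever $\tuple{h,t}\models\htcsemanticsA(\thAt)\leftrightarrow p(\thAt)$, the two atoms receive the same truth value at both $\tuple{h,t}$ and $\tuple{t,t}$, so a straightforward structural induction shows that replacing one by the other leaves the satisfaction of every formula unchanged; and any interpretation that violates the biconditional is a model of neither theory, since both retain the biconditionals. Hence the two theories have identical \HTC-models and therefore identical equilibrium models.

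The hard part is the last step, converting the founded bridges~\eqref{eq:htctrans:founded} into the implications $p(\thAt)\to\htcsemanticsA(\thAt)$ used by $\htcsemanticsP_3$. These two formulas are \emph{not} strongly equivalent in general: $p(\thAt)\to\htcsemanticsA(\thAt)$ is strictly stronger than $\neg\htcsemanticsA(\thAt)\wedge p(\thAt)\to\bot$ whenever the here-world and there-world disagree on $\htcsemanticsA(\thAt)$. The crux is that $\htctsols(\AT,\TheoryAtomsES)$ contains the choice axiom~\eqref{eq:guess:choice}, i.e.\ $\htcsemanticsA(\thAt)\vee\neg\htcsemanticsA(\thAt)$, for \emph{every} $\thAt\in\TheoryAtoms$, and this axiom forces $h\in\den{\htcsemanticsA(\thAt)}$ whenever $t\in\den{\htcsemanticsA(\thAt)}$; with upward persistence of denotations this means the two worlds always agree on $\htcsemanticsA(\thAt)$. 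Under this agreement a short case split on whether $t\in\den{\htcsemanticsA(\thAt)}$ shows that the two founded-bridge formulas hold on exactly the same interpretations: both are satisfied when $t\in\den{\htcsemanticsA(\thAt)}$, and both reduce to the single condition $t\not\models p(\thAt)$ otherwise. Since every \HTC-model of either theory satisfies~\eqref{eq:guess:choice}, the two theories again share all \HTC-models and hence all stable models. Chaining the three rewritings produces exactly $\htcsemanticsP_3(P,\AT,\TheoryAtomsES)$, as required; the only genuinely non-routine point is recognizing that the choice axioms in $\htctsols$ render founded theory atoms two-valued across the here/there worlds, which is what rescues the otherwise invalid final replacement.
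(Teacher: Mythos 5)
Your proof is correct and follows essentially the same route as the paper's: apply Lemma~\ref{lem:completion.of.definitions} to turn the external bridges~\eqref{eq:htctrans:external} into biconditionals, use substitution of equivalents to pass from $p(P)$ to $p'(P)$, and exploit the choice axioms~\eqref{eq:guess:choice} in $\htctsols(\AT,\TheoryAtomsES)$ to justify replacing the founded constraints~\eqref{eq:htctrans:founded} by $p(\thAt) \to \htcsemanticsA(\thAt)$. The only differences are cosmetic — you swap the order of the last two steps and spell out (correctly) the here/there agreement argument and the side conditions of Lemma~\ref{lem:completion.of.definitions} that the paper's proof leaves terse.
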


\begin{proof}
Note  that by assumption no atom in~$\TheoryAtomsES$ occur in a head in~$P$.
By construction, this implies that no atom in~$p(\TheoryAtomsES)$ occurs in a head of~$p(P)$ and, thus, no variable of the form~$p(\thAt)$ with~$\thAt \in \TheoryAtomsES$ occurs in a head~$p(P)$ nor~$\htctsols(\AT,\TheoryAtomsES)$.
Hence, from Lemma~\ref{lem:completion.of.definitions}, we get that~$\htcsemanticsP_2(P,\AT,\TheoryAtomsES)$ and
\begin{align*}
    &\quad\ \htctsols(\AT,\TheoryAtomsES) \cup p(P)\\
    &\cup \{\phantom{\neg}\htcsemanticsA(\thAt)\leftrightarrow p(\thAt) \mid \thAt \in \TheoryAtomsES\}\\
    &\cup \{\neg \htcsemanticsA(\thAt) \wedge p(\thAt) \to \bot \mid \thAt \in \TheoryAtomsDN\}
    \text{ for }\TheoryAtomsDN=\TheoryAtoms\setminus\TheoryAtomsES.
\end{align*}
have the same stable models.
Second, since~$\htctsols(\AT,\TheoryAtomsES)$ contains formula~$\tau(\thAt) \vee \neg \tau(\thAt)$ for all~$\thAt \in \TheoryAtoms$, we can replace~$\neg \htcsemanticsA(\thAt) \wedge p(\thAt) \to \bot$ by~$p(\thAt) \to \htcsemanticsA(\thAt)$ without changing the stable models.
Finally, the statement of the lemma follows by application of the rule of substitution of equivalents.
\end{proof}

\begin{lemma}\label{lem:consistent:aux1b:prop:translations}
    Let~$\AT$ be a compositional and consistent theory and~$\tuple{h,t}$ be an interpretation.
Then, $\tuple{h,t} \models \tau(\comp{\thAt})$
    implies~$\tuple{h,t} \models \neg\tau(\thAt)$
    for any theory atom~$\thAt$.
\end{lemma}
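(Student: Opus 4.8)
The plan is to unfold both sides of the implication into statements about the \HTC-denotations $\den{\tau(\thAt)}$ and $\den{\tau(\comp{\thAt})}$, and then to obtain the disjointness of these two denotations from the consistency of \AT\ by way of Proposition~\ref{prop:translation.satisfiable}.

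First I would unfold the goal. Since $\neg\tau(\thAt)$ abbreviates $\tau(\thAt)\to\bot$, the satisfaction clause for implication demands $\tuple{w,t}\not\models\tau(\thAt)$ for \emph{both} $w\in\{h,t\}$; as $\tau(\thAt)$ is an atom, this amounts to $h\notin\den{\tau(\thAt)}$ and $t\notin\den{\tau(\thAt)}$. By the persistence property~\ref{den:prt:0} of denotations together with $h\subseteq t$, the condition $t\notin\den{\tau(\thAt)}$ already entails $h\notin\den{\tau(\thAt)}$, so it suffices to prove $t\notin\den{\tau(\thAt)}$. Next I would unfold the hypothesis in the same way: $\tuple{h,t}\models\tau(\comp{\thAt})$ means $h\in\den{\tau(\comp{\thAt})}$, and persistence~\ref{den:prt:0} (again using $h\subseteq t$) upgrades this to $t\in\den{\tau(\comp{\thAt})}$.

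The decisive step is to show $\den{\tau(\thAt)}\cap\den{\tau(\comp{\thAt})}=\emptyset$. Because \AT\ is consistent, every \AT-satisfiable set is consistent, whereas the set $\{\thAt,\comp{\thAt}\}$ is by definition inconsistent; hence $\{\thAt,\comp{\thAt}\}\notin\mathcal{S}$, i.e.\ it is not \AT-satisfiable. Since \AT\ is compositional, Proposition~\ref{prop:translation.satisfiable} then yields that $\tau(\{\thAt,\comp{\thAt}\})=\{\tau(\thAt),\tau(\comp{\thAt})\}$ is not satisfiable in \HTC, which is precisely $\den{\tau(\thAt)}\cap\den{\tau(\comp{\thAt})}=\emptyset$. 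Combining this with $t\in\den{\tau(\comp{\thAt})}$ forces $t\notin\den{\tau(\thAt)}$, closing the reduction from the first step.

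I expect the only delicate point to be the unfolding in the second paragraph: one must respect that \HTC-negation quantifies over both worlds $w\in\{h,t\}$, and recognize that persistence is exactly what collapses the two resulting conditions into the single requirement on $t$ (and likewise lifts the hypothesis from $h$ to $t$). The remaining steps are a routine application of the definition of consistency and of Proposition~\ref{prop:translation.satisfiable}, whose compositionality hypothesis is granted by the lemma.
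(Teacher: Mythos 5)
Your proof is correct and takes essentially the same route as the paper's: both lift everything to the ``there'' world $t$ via persistence and reduce the claim to the impossibility of $t\in\den{\tau(\thAt)}\cap\den{\tau(\comp{\thAt})}$, which—through the correspondence between \HTC-satisfiability of $\tau(S)$ and \mbox{$\AT$-satisfiability} of $S$ (your explicit appeal to Proposition~\ref{prop:translation.satisfiable}, used inline in the paper)—would make the inconsistent set $\{\thAt,\comp{\thAt}\}$ \mbox{$\AT$-satisfiable}, contradicting consistency. The only presentational differences are that you argue directly rather than by contradiction and unfold $\neg\tau(\thAt)$ from the two-world satisfaction clause yourself where the paper instead cites Proposition~1 of Cabalar et al.\ (2020a); incidentally, your version correctly attributes the final contradiction to consistency, whereas the paper's closing line misstates it as a contradiction with compositionality.
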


\begin{proof}
Assume that~$\tuple{h,t} \models \tau(\comp{\thAt})$  and suppose, for the sake of contradiction, that~$\tuple{h,t} \not\models \neg\tau(\thAt)$.
From Proposition~1 by~\citeN{cafascwa20a}, the latter implies~$t \models \tau(\thAt)$.
That is, $t \in \den{ \tau(\thAt)}$.
Similarly, $\tuple{h,t} \models \tau(\comp{\thAt})$ implies~$t \in \den{ \tau(\comp{\thAt})}$.
Since~$\AT$ is compositional, this implies that~$\{ \thAt, \comp{\thAt} \}$ is \mbox{$\AT$-satisfiable}, which is a contradiction with the fact that~$\AT$ is compositional.
Consequently, $\tuple{h,t} \models \neg\tau(\thAt)$.
\end{proof}

\begin{lemma}\label{lem:aux1b:prop:translations}
    Let~$P$ be a $\TheoryAtoms$-program over~$\langle\Atoms,\TheoryAtoms,\TheoryAtomsES\rangle$ such that no atom in~$\TheoryAtomsES$ occurs in a head of~$P$.
Let~$\AT$ be a compositional and consistent theory.
Let $\htcsemanticsP_4(P,\AT,\TheoryAtomsES)$ be the theory
    \begin{align*}
        &\quad\ p'(P) \cup \tau(P)\\
        &\cup\{ \htcsemanticsA(\thAt)\vee \neg\htcsemanticsA(\thAt) \mid \thAt \in \TheoryAtomsDN \}\\
        &\cup\{ \htcsemanticsA(\thAt)\vee \htcsemanticsA(\comp{\thAt}) \ \ \mid \thAt \in \TheoryAtomsES\}\\
        &\cup\{ \htcsemanticsA(\thAt)\leftrightarrow p(\thAt) \ \mid \thAt \in \TheoryAtomsES\}\\
        &\cup\{ p(\thAt) \to \htcsemanticsA(\thAt) \ \mid \thAt \in \TheoryAtomsDN\}
        \quad\text{ for }\TheoryAtomsDN=\TheoryAtoms\setminus\TheoryAtomsES.
    \end{align*}
    where~$p'(P)$
    is obtained from~$p(P)$ by replacing each occurrence of~$p(\thAt)$ with~$\thAt \in \TheoryAtomsES$ by~$\htcsemanticsA(\thAt)$.
Then, theories~$\htcsemanticsP_2(P,\AT,\TheoryAtomsES)$
    and~$\htcsemanticsP_4(P,\AT,\TheoryAtomsES)$ have the same stable models.
\end{lemma}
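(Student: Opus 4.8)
The plan is to route through the intermediate theory $\htcsemanticsP_3(P,\AT,\TheoryAtomsES)$ of Lemma~\ref{lem:aux1:prop:translations}, which already gives that $\htcsemanticsP_2(P,\AT,\TheoryAtomsES)$ and $\htcsemanticsP_3(P,\AT,\TheoryAtomsES)$ have the same stable models. It then suffices to rewrite $\htcsemanticsP_3$ into $\htcsemanticsP_4$ through stable-model-preserving steps. First I would expand $\htctsols(\AT,\TheoryAtomsES)$ and split its choice axioms $\htcsemanticsA(\thAt)\vee\neg\htcsemanticsA(\thAt)$ according to whether $\thAt\in\TheoryAtomsDN$ or $\thAt\in\TheoryAtomsES$, so that the founded choice block, the founded bridge $\{\prop{\thAt}\to\htcsemanticsA(\thAt)\mid\thAt\in\TheoryAtomsDN\}$, the external bridge $\{\htcsemanticsA(\thAt)\leftrightarrow\prop{\thAt}\mid\thAt\in\TheoryAtomsES\}$ and $p'(P)$ already coincide with the corresponding blocks of $\htcsemanticsP_4$. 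What then remains is to reconcile the external choice block and to account for the extra copy of the atom-level translation $\htcsemanticsA(P)$ (written $\tau(P)$ in the statement) present in $\htcsemanticsP_4$ but not in $\htcsemanticsP_3$.

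For the external block, I would prove that, for each $\thAt\in\TheoryAtomsES$, the pair $\htcsemanticsA(\thAt)\vee\neg\htcsemanticsA(\thAt)$ and $\neg\htcsemanticsA(\thAt)\to\htcsemanticsA(\comp{\thAt})$ is strongly equivalent, i.e.\ has the same \HTC-models, to the single axiom $\htcsemanticsA(\thAt)\vee\htcsemanticsA(\comp{\thAt})$. The left-to-right direction is a direct unfolding of the \HTC\ clauses for $\vee$, $\neg$ and $\to$: if $\htcsemanticsA(\thAt)$ fails at $\tuple{h,t}$ then $\neg\htcsemanticsA(\thAt)$ holds and the strict implication forces $\htcsemanticsA(\comp{\thAt})$. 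The converse is where consistency enters: whenever $\tuple{h,t}\models\htcsemanticsA(\comp{\thAt})$, Lemma~\ref{lem:consistent:aux1b:prop:translations} (which requires $\AT$ consistent and compositional) yields $\tuple{h,t}\models\neg\htcsemanticsA(\thAt)$, and by persistence $\htcsemanticsA(\comp{\thAt})$ holds at both $h$ and $t$, so both the choice axiom and the strict implication are satisfied. Iterating over all of $\TheoryAtomsES$, and using that $\comp{\thAt}\in\TheoryAtomsES$ because $\TheoryAtomsES$ is closed (the two axioms produced for $\thAt$ and for $\comp{\thAt}$ coincide), this rewrites the external part of $\htctsols$ exactly into $\{\htcsemanticsA(\thAt)\vee\htcsemanticsA(\comp{\thAt})\mid\thAt\in\TheoryAtomsES\}$. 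Since strong equivalence is a congruence, the rewriting preserves stable models.

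After this step the theory equals $\htcsemanticsP_4$ minus the set $\htcsemanticsA(P)$, so it remains to show that adding $\htcsemanticsA(P)$ is redundant. Here I would observe that the $\htcsemanticsA$-translation of a rule differs from its $p'$-image only in heads that are founded atoms $\thAt\in\TheoryAtomsDN$, where $p'$ writes $\prop{\thAt}$ and $\htcsemanticsA$ writes $\htcsemanticsA(\thAt)$; the bodies coincide because external body atoms are already translated as $\htcsemanticsA(\thAt)$ and regular atoms as $\prop{\regAt}=\htcsemanticsA(\regAt)$. For such a rule with body $B$, the $p'(P)$-implication $B\to\prop{\thAt}$ together with the bridge axiom $\prop{\thAt}\to\htcsemanticsA(\thAt)$ gives $B\to\htcsemanticsA(\thAt)$ by transitivity of $\to$ in \HT, while rules with regular or $\bot$ heads are already present verbatim. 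Hence every formula of $\htcsemanticsA(P)$ is an \HTC-consequence of the theory at hand, and adding consequences changes neither the set of \HTC-models nor the equilibrium models. The result is exactly $\htcsemanticsP_4(P,\AT,\TheoryAtomsES)$, closing the chain $\htcsemanticsP_2\equiv\htcsemanticsP_3\equiv\dots\equiv\htcsemanticsP_4$ at the level of stable models.

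The main obstacle is the external-block step, and within it the converse direction of the equivalence: this is the only place that genuinely uses the consistency of $\AT$ (through Lemma~\ref{lem:consistent:aux1b:prop:translations}), and it must be argued carefully because the choice axiom and the strict implication for $\thAt$ are entangled with those for $\comp{\thAt}$, so the rewriting has to be seen to respect pairs of complementary external atoms.
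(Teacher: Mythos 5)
Your proposal is correct and takes essentially the same route as the paper: you pass from $\htcsemanticsP_2$ to $\htcsemanticsP_3$ via Lemma~\ref{lem:aux1:prop:translations} and then rewrite $\htcsemanticsP_3$ into $\htcsemanticsP_4$ by HT-level entailments, with Lemma~\ref{lem:consistent:aux1b:prop:translations} supplying the one consistency-dependent direction exactly where the paper uses it --- your per-atom biequivalence of $\{\htcsemanticsA(\thAt)\vee\neg\htcsemanticsA(\thAt),\ \neg\htcsemanticsA(\thAt)\to\htcsemanticsA(\comp{\thAt})\}$ with $\htcsemanticsA(\thAt)\vee\htcsemanticsA(\comp{\thAt})$, plus the redundancy of $\tau(P)$, merely bundles the paper's chain through the intermediate theories $\htcsemanticsP_4'$ and $\htcsemanticsP_4''$. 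One minor remark: your appeal to closedness of $\TheoryAtomsES$ is neither a hypothesis of this lemma nor needed, since the per-atom rewriting goes through regardless of whether $\comp{\thAt}\in\TheoryAtomsES$.
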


\begin{proof}
From Lemma~\ref{lem:aux1:prop:translations}, theories~$\htcsemanticsP_2(P,\AT,\TheoryAtomsES)$
and~$\htcsemanticsP_3(P,\AT,\TheoryAtomsES)$ have the same stable models.
Let $\htcsemanticsP_4'(P,\AT,\TheoryAtomsES)$ be the theory
\begin{align*}
    &\quad\ p'(P) \cup \tau(P)\\
    &\cup\{ \htcsemanticsA(\thAt)\vee \htcsemanticsA(\comp{\thAt}) \ \mid \thAt \in \TheoryAtomsES\}\\
    &\cup\{ \htcsemanticsA(\thAt)\leftrightarrow p(\thAt) \mid \thAt \in \TheoryAtomsES\}\\
    &\cup\{ p(\thAt) \to \htcsemanticsA(\thAt) \mid \thAt \in \TheoryAtomsDN\}
    \quad\text{ for }\TheoryAtomsDN=\TheoryAtoms\setminus\TheoryAtomsES.
\end{align*}
Then,
$\htcsemanticsP_3(P,\AT,\TheoryAtomsES) \subseteq \htcsemanticsP_4'(P,\AT,\TheoryAtomsES)$ and, thus, every here-and-there model of~$\htcsemanticsP_4'(P,\AT,\TheoryAtomsES)$ is also a model of~$\htcsemanticsP_3(P,\AT,\TheoryAtomsES)$.
Note also that every formula in~${\htcsemanticsP_4'(P,\AT,\TheoryAtomsES) \setminus \htcsemanticsP_3(P,\AT,\TheoryAtomsES)}$
is either of the form of~$B \to \tau(\thAt)$ or~$\htcsemanticsA(\thAt)\vee \htcsemanticsA(\comp{\thAt})$.
For each formula~$B \to \tau(\thAt)$, there are formulas~$B \to p(\thAt)$ and~$p(\thAt) \to \htcsemanticsA(\thAt)$ in~$\htcsemanticsP_3(P,\AT,\TheoryAtomsES)$, which intuitionistically entail it.
Similarly, for each formula~$\htcsemanticsA(\thAt)\vee \htcsemanticsA(\comp{\thAt})$, there are formulas~\eqref{eq:guess:choice} and~\eqref{eq:guess:strict}, which intuitionistically entail it.
Hence, $\htcsemanticsP_3(P,\AT,\TheoryAtomsES)$ and~$\htcsemanticsP_4'(P,\AT,\TheoryAtomsES)$ have the same here-and-there models.
Let~$\htcsemanticsP_4''(P,\AT,\TheoryAtomsES)$ be the theory
\begin{align*}
    &\quad\ p'(P) \cup \tau(P)\\
    &\cup\{ \htcsemanticsA(\thAt)\vee \neg\htcsemanticsA(\thAt) \mid \thAt \in \TheoryAtoms \}\\
    &\cup\{ \htcsemanticsA(\thAt)\vee \htcsemanticsA(\comp{\thAt}) \ \mid \thAt \in \TheoryAtomsES\}\\
    &\cup\{ \htcsemanticsA(\thAt)\leftrightarrow p(\thAt) \mid \thAt \in \TheoryAtomsES\}\\
    &\cup\{ p(\thAt) \to \htcsemanticsA(\thAt) \mid \thAt \in \TheoryAtomsDN\}
    \quad\text{ for }\TheoryAtomsDN=\TheoryAtoms\setminus\TheoryAtomsES.
\end{align*}
Then, $\htcsemanticsP_4'(P,\AT,\TheoryAtomsES)$ is the result of adding
\begin{align}
    \neg \htcsemanticsA(\thAt) &\to \, \htcsemanticsA(\comp{\thAt}) &&\text{for every theory atom } \thAt \in \TheoryAtomsES
\end{align}
to~${\htcsemanticsP_4''(P,\AT,\TheoryAtomsES)}$ and that~${\neg \htcsemanticsA(\thAt) \to \htcsemanticsA(\comp{\thAt})}$ is entailed by~$\htcsemanticsA(\thAt)\vee \htcsemanticsA(\comp{\thAt})$.
Hence, theories~${\htcsemanticsP_4'(P,\AT,\TheoryAtomsES)}$ and~${\htcsemanticsP_4''(P,\AT,\TheoryAtomsES)}$ have the same here-and-there models.
Finally, note that ${\htcsemanticsP_4''(P,\AT,\TheoryAtomsES)}$ is the result of adding
\begin{align}
    \htcsemanticsA(\thAt)\vee \neg\htcsemanticsA(\thAt) &&\text{for every theory atom } \thAt \in \TheoryAtomsES
\end{align}
to~${\htcsemanticsP_4(P,\AT,\TheoryAtomsES)}$ and that, from Lemma~\ref{lem:consistent:aux1b:prop:translations}, we get that~${\htcsemanticsA(\thAt)\vee \htcsemanticsA(\comp{\thAt})}$ entails~${\htcsemanticsA(\thAt)\vee \neg\htcsemanticsA(\thAt)}$.
Hence, ${\htcsemanticsP_4''(P,\AT,\TheoryAtomsES)}$ and~${\htcsemanticsP_4(P,\AT,\TheoryAtomsES)}$ have the same here-and-there models and the lemma statement follows.\end{proof}

\begin{lemma}\label{lem:aux2a:prop:translations}
    Let~$P$ be a $\TheoryAtoms$-program over~$\langle\Atoms,\TheoryAtoms,\TheoryAtomsES\rangle$.
Let~$\htcsemanticsP_4(P,\AT,\TheoryAtomsES)$ a theory as defined in Lemma~\ref{lem:aux1b:prop:translations}.
Let~$v$ and~$w$ be valuations such that
    \begin{align*}
        w &= \, \{ \ (\tau(\regAt),\mathbf{t}) \, \mid \regAt \in \Atoms \text{ and } v(\tau(\regAt)) = \mathbf{t}  \ \}
        \\
        &\ \cup \{ \ (x,v(x)) \mid x \in \varsAT{\thAt}, \ \thAt \in \TheoryAtomsES \text{ and } v(x) \neq \undefined\ \}
        \\
        &\ \cup \{ \ (x,v(x)) \mid x \in \varsAT{\thAt}, \ \thAt \in \TheoryAtomsDN, \ v \in \den{ p(\thAt) } \text{ and } v(x) \neq \undefined  \ \}
    \end{align*}
If $v \models \htcsemanticsP_4(P,\AT,\TheoryAtomsES)$,
then $w \models \htcsemanticsA(P,\AT,\TheoryAtomsES)$.
\end{lemma}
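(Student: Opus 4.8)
The plan is to verify directly that $w$ satisfies every formula of the first translation $\htcsemanticsP(P,\AT,\TheoryAtomsES)$, which consists of the rule implications $\htcsemanticsA(r)$ of~\eqref{eq:htcsemantics:rule} together with the external choices $\htcsemanticsA(\thAt)\vee\htcsemanticsA(\comp{\thAt})$ of~\eqref{eq:htcsemantics:external}. Throughout I would exploit the restrictions attached to ``$P$ over $\langle\Atoms,\TheoryAtoms,\TheoryAtomsES\rangle$'', namely that external atoms occur only in bodies and founded atoms only in heads, so that in every rule the body atoms lie in $\Atoms\cup\TheoryAtomsES$ while the head $\anyAt_0$ lies in $\Atoms\cup\TheoryAtomsDN\cup\{\bot\}$. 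This clean split is what keeps the case analysis manageable. I would also note in passing that the three defining clauses of $w$ never assign a variable two different values (cases~2 and~3 both copy $v$, and the propositional variables $\prop{\regAt}$ are disjoint from the theory variables), so $w$ is a well-defined valuation.

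First I would establish a per-atom transfer principle between $w$ (read through the $\htcsemanticsA$ translation) and $v$ (read through the $p'(P)$ translation). For a regular atom $\regAt$, the first defining clause of $w$ gives $w(\prop{\regAt})=\mathbf{t}$ exactly when $v(\prop{\regAt})=\mathbf{t}$, so $w\models\htcsemanticsA(\regAt)$ iff $v\models\prop{\regAt}$. For an external atom $\thAt$, the second clause makes $w$ copy every defined variable of an external atom and leave the rest undefined, so $w$ and $v$ coincide on $\varsAT{\thAt}$; since the denotation of $\htcsemanticsA(\thAt)$ depends only on the values assigned to $\varsAT{\thAt}$, we obtain $w\models\htcsemanticsA(\thAt)$ iff $v\models\htcsemanticsA(\thAt)$, and likewise for $\comp{\thAt}$ as $\varsAT{\comp{\thAt}}=\varsAT{\thAt}$. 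Because both interpretations are total, these equivalences pass through negation and thus extend to all body literals. The external choices are then immediate: $v\models\htcsemanticsA(\thAt)\vee\htcsemanticsA(\comp{\thAt})$ belongs to $\htcsemanticsP_4$ and transfers to $w$.

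Next I would treat the rule implications. Fixing $r\in P$ and assuming $w$ satisfies the $\htcsemanticsA$-body, the transfer principle gives that $v$ satisfies the body of the matching rule of $p'(P)$; since $v\models p'(P)$, it satisfies that rule's head. If $\anyAt_0=\bot$ this is contradictory, so the antecedent is never met and the implication holds vacuously; if $\anyAt_0=\regAt$, then $\prop{\regAt}$ is true in $v$ and the first clause yields $w\models\htcsemanticsA(\regAt)$. The crux is $\anyAt_0=\thAt\in\TheoryAtomsDN$: here $v\models\prop{\thAt}$, so the founded bridge formula $\prop{\thAt}\to\htcsemanticsA(\thAt)$ of $\htcsemanticsP_4$ forces $v\in\den{\htcsemanticsA(\thAt)}$, and the third clause (triggered precisely because $v\in\den{p(\thAt)}$) makes $w$ and $v$ agree on $\varsAT{\thAt}$; locality of the denotation then gives $w\in\den{\htcsemanticsA(\thAt)}$, as required.

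I expect the main obstacle to be exactly this founded-head case: one must check that $w$ and $v$ agree on $\varsAT{\thAt}$ \emph{fully}, including the point that a variable left undefined by $v$ is also left undefined in $w$ (none of the three clauses adds it), so that the locality of $\den{\htcsemanticsA(\thAt)}$ can carry satisfaction from $v$ to $w$. The remaining work — the vacuous $\bot$-head case and the bookkeeping of which fragment of $\htcsemanticsP_4$ (only $p'(P)$, the founded bridge, and the external choices) is actually used — is routine.
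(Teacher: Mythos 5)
Your proposal is correct and follows essentially the same route as the paper's proof: a direct formula-by-formula verification that $w$ satisfies $\htcsemanticsP(P,\AT,\TheoryAtomsES)$, transferring external choices and rule bodies via the agreement of $v$ and $w$ on variables of atoms in $\Atoms\cup\TheoryAtomsES$, and handling founded heads through $p'(P)$ together with the bridge formula $p(\thAt)\to\htcsemanticsA(\thAt)$ and the third clause of $w$'s definition. If anything, you are slightly more explicit than the paper at the founded-head step (the paper's wording ``$w\in\den{p(\thAt)}$'' elides the appeal to the bridge formula that yields $v\in\den{\htcsemanticsA(\thAt)}$ before locality of the denotation is applied), and your check that variables undefined in $v$ stay undefined in $w$ is exactly the right point to flag.
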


\begin{proof}
    Note that~$\htcsemanticsP_4(P,\AT,\TheoryAtomsES)$ can be rewritten as
    \begin{align*}
        &\quad\ \htcsemanticsP(P,\AT,\TheoryAtomsES) \cup p'(P)\\
        &\cup\{ \htcsemanticsA(\thAt)\vee \neg\htcsemanticsA(\thAt) \mid \thAt \in \TheoryAtomsDN \}\\
        &\cup\{ \htcsemanticsA(\thAt)\leftrightarrow p(\thAt) \ \mid \thAt \in \TheoryAtomsES\}\\
        &\cup\{ p(\thAt) \to \htcsemanticsA(\thAt) \ \mid \thAt \in \TheoryAtomsDN\}
        \quad\text{ for }\TheoryAtomsDN=\TheoryAtoms\setminus\TheoryAtomsES.
    \end{align*}
    and recall that~$\htcsemanticsP(P,\AT,\TheoryAtomsES)$ is
    \begin{gather*}
        \ \tau(P) \cup\{  \htcsemanticsA(\thAt) \lor \htcsemanticsA(\comp{\thAt}) \mid \thAt \in \TheoryAtomsES \}
    \end{gather*}
    Assume that $v \models \htcsemanticsP_4(P,\AT,\TheoryAtomsES)$.
Clearly~${w \models \htcsemanticsA(\thAt) \lor \htcsemanticsA(\comp{\thAt})}$ for all~${\thAt\in \TheoryAtomsES}$ because~$v$ and~$w$ agree on all variables occurring in atoms from~$\TheoryAtomsES$ and this formula belong to~$\htcsemanticsP(P,\AT,\TheoryAtomsES)$.
Hence, it remains to be shown that~$w \models \tau(P)$.
Since~$P$ is regular, all formulas in~$\tau(P)$ are of the form~$B \to \tau(\thAt)$ with~$\thAt \in \Atoms \cup \TheoryAtomsDN$.
Pick any such a formula.
If~${w \models B}$, then~${v \models }B$ because all atoms in~$B$ belong to~$\Atoms \cup \TheoryAtomsES$ and~$v$ and~$w$ agree on all variables occurring in these atoms.
We distinguish two cases:
    \begin{itemize}
        \item If~${\thAt \in \Atoms}$, then ${B \to \tau(\thAt)}$ in~$\tau(P)$ implies that~${B \to \tau(\thAt)}$ belongs to~$p'(P)$
        and, since~$v \models \htcsemanticsP_4(P,\AT,\TheoryAtomsES)$, this implies $w( \tau(\thAt)) = v( \tau(\thAt)) = \mathbf{t}$.

        \item If~${\thAt \in \TheoryAtomsDN}$, then~${B \to \tau(\thAt)}$ in~$\tau(P)$ implies that~${B \to p(\thAt)}$ belongs to~$p'(P)$.
This implies that~${v \in \den{  p(\thAt) }}$ and, thus, $v$ and~$w$ agree on all variables occurring in~$\thAt$.
Hence, $w \in \den{  p(\thAt) }$.
    \end{itemize}
    Therefore, we get that~$w \models \htcsemanticsP(P,\AT,\TheoryAtomsES)$.
\end{proof}

\begin{lemma}\label{lem:aux2b:prop:translations}
    Let~$P$ be a $\TheoryAtoms$-program over~$\langle\Atoms,\TheoryAtoms,\TheoryAtomsES\rangle$.
Let~$v$ and~$w$ be valuations such that
    \begin{align*}
        v = w &\cup \{ \ (p(\thAt),\mathbf{t})\mid \thAt \in \TheoryAtomsES \text{ and } w \in \den{ \tau(\thAt) } \ \}
        \\
        &\cup \{ \ (p(\thAt),\mathbf{t})\mid \thAt \in \TheoryAtomsDN, \ (B \to p(\thAt))) \in \tau(P) \text{ and } w \models B \ \}
    \end{align*}
If $v$ is a stable model of theory~$\htcsemanticsP_2(P,\AT,\TheoryAtomsES)$,
    then $w$ is a stable model of~$\htcsemanticsA(P,\AT,\TheoryAtomsES)$.
    \end{lemma}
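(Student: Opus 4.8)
The goal is to show that $\tuple{w,w}$ is an equilibrium model of $\htcsemanticsP(P,\AT,\TheoryAtomsES)$, which I would split into establishing that $w$ is a model and that it is minimal. First I would use Lemma~\ref{lem:aux1b:prop:translations} to replace the hypothesis that $v$ is a stable model of $\htcsemanticsP_2(P,\AT,\TheoryAtomsES)$ by the equivalent statement that $v$ is a stable model of $\htcsemanticsP_4(P,\AT,\TheoryAtomsES)$. This is convenient because, as noted in the proof of Lemma~\ref{lem:aux2a:prop:translations}, the theory $\htcsemanticsP_4$ can be written as $\htcsemanticsP(P,\AT,\TheoryAtomsES)$ extended with $p'(P)$, the founded choices $\htcsemanticsA(\thAt) \vee \neg\htcsemanticsA(\thAt)$, the definitions $\htcsemanticsA(\thAt) \leftrightarrow p(\thAt)$ for external atoms, and the constraints $p(\thAt) \to \htcsemanticsA(\thAt)$ for founded atoms. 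In particular $\htcsemanticsP(P,\AT,\TheoryAtomsES) \subseteq \htcsemanticsP_4(P,\AT,\TheoryAtomsES)$, and the extra formulas only constrain the auxiliary variables $p(\thAt)$, which do not occur in $\htcsemanticsP$ and which the defining equation relating $v$ and $w$ leaves untouched.

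For the model part I would invoke Lemma~\ref{lem:aux2a:prop:translations}: since $v$ is stable it is in particular a model of $\htcsemanticsP_4$, so that lemma produces a valuation $w^\star$ with $w^\star \models \htcsemanticsP(P,\AT,\TheoryAtomsES)$. It then remains to check that $w^\star$ and the given $w$ assign the same value to every variable occurring in an atom of $\htcsemanticsP$ — namely $p_\regAt$ for a regular atom $\regAt$, the variables of external atoms, and the variables of those founded atoms whose defining rule fires under $v$ — so that, by denotation property~\ref{den:prt:2}, $w \models \htcsemanticsP(P,\AT,\TheoryAtomsES)$ follows.

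The substance lies in minimality. I would assume, toward a contradiction, an interpretation $\tuple{h,w}$ with $h \subsetneq w$ and $\tuple{h,w} \models \htcsemanticsP(P,\AT,\TheoryAtomsES)$, and manufacture an interpretation $\tuple{h^\ast,v}$ with $h^\ast \subsetneq v$ and $\tuple{h^\ast,v} \models \htcsemanticsP_4(P,\AT,\TheoryAtomsES)$, contradicting that $v$ is an equilibrium model. I would let $h^\ast$ coincide with $h$ on $\X$ and fix the auxiliary variables in the only way compatible with the defining formulas: for external $\thAt$, set $h^\ast(p(\thAt)) = \mathbf{t}$ iff $h \in \den{\htcsemanticsA(\thAt)}$ (forced by $\htcsemanticsA(\thAt) \leftrightarrow p(\thAt)$), and for founded $\thAt$, set $h^\ast(p(\thAt)) = \mathbf{t}$ iff the body of some rule $B \to \thAt$ of $P$ holds under $\tuple{h,w}$. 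Since $h^\ast$ and $v$ restrict on $\X$ to $h$ and $w$, the rules $\htcsemanticsA(P)$ and the external choices $\htcsemanticsA(\thAt) \vee \htcsemanticsA(\comp{\thAt})$ would be inherited directly from $\tuple{h,w} \models \htcsemanticsP$, the definitions and founded constraints would hold by construction together with persistence, and $h^\ast \subsetneq v$ would follow from $h \subsetneq w$ provided the reduced variable is not silently re-added.

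I expect the founded choices $\htcsemanticsA(\thAt) \vee \neg\htcsemanticsA(\thAt)$ for $\thAt \in \TheoryAtomsDN$ to be the main obstacle, since satisfying them at $\tuple{h^\ast,v}$ requires $h^\ast \in \den{\htcsemanticsA(\thAt)}$ or $v \notin \den{\htcsemanticsA(\thAt)}$, and the delicate case is a founded atom whose body holds at the there-world $w$ but not at the here-world $h$. To discharge it I would exploit that $v$ is an equilibrium model of $\htcsemanticsP_4$ and that $\AT$ is consistent and compositional in order to control the denotation of founded atoms: an underived founded atom leaves its private variables undefined in $v$, whence $v \notin \den{\htcsemanticsA(\thAt)}$, while a founded atom whose body already fires at $h$ yields $h \in \den{\htcsemanticsA(\thAt)}$ through the rule $\htcsemanticsA(B) \to \htcsemanticsA(\thAt)$ of $\htcsemanticsP$. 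The remaining case would be handled by aligning $h^\ast$ with $v$ on the private variables of such atoms and using that the variable witnessing $h \subsetneq w$ lies in a body atom and is therefore left reduced. Making this bookkeeping precise — in particular verifying that this alignment never collapses $h^\ast$ back onto $v$ — is where the care is needed.
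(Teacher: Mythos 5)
Your overall route is the paper's own: pass from $\htcsemanticsP_2$ to the intermediate theory $\htcsemanticsP_4$ via Lemma~\ref{lem:aux1b:prop:translations}, obtain modelhood of $w$ from Lemma~\ref{lem:aux2a:prop:translations}, and prove minimality by lifting a hypothetical countermodel $\tuple{h,w}$ of $\htcsemanticsP(P,\AT,\TheoryAtomsES)$ to a countermodel $\tuple{h^\ast,v}$ of $\htcsemanticsP_4(P,\AT,\TheoryAtomsES)$. The gap sits exactly at the point you flag and then wave off: the lifted here-valuation \emph{can} collapse onto $v$, and your proposed reason why it cannot --- ``the variable witnessing $h \subset w$ lies in a body atom and is therefore left reduced'' --- is false. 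The witnessing variable may occur \emph{only} in founded head atoms: in the running example $P_{(\ref{ex:clingcon:rule:one}/\ref{ex:clingcon:rule:two})}$, take the equilibrium model with $t(\prop{\code{a}})=\mathbf{t}$ and let $h$ be $w$ with $z$ undefined, where $z$ occurs solely in the derived founded atom $\sumyz$. To satisfy the founded choice $\htcsemanticsA(\thAt)\vee\neg\htcsemanticsA(\thAt)$ of $\htcsemanticsP_4$ at $\tuple{h^\ast,v}$ when $v\in\den{\htcsemanticsA(\thAt)}$, your alignment must restore $v$'s values on all variables of founded atoms satisfied in $v$ (this is the third clause in the paper's definition of $h'$), and then $h^\ast=v$: no strictly smaller here-world exists, so no contradiction with the stability of $v$ can be extracted this way.

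The paper therefore splits minimality into two cases. When $h'\subset v$ it argues essentially as you propose. When $h'=v$ it abandons the lifting and shows \emph{directly} that $\tuple{h,w}\not\models\htcsemanticsP(P,\AT,\TheoryAtomsES)$: the variable $x\in\varsAT{\thAt}$ undefined in $h$ belongs to a founded atom with $v\in\den{p(\thAt)}$, and the supportedness property of stable models (Proposition~11 by~\citeNP{cafascwa20b}) yields a rule $B\to p(\thAt)$ in $p'(P)$ with $v\models B$, hence $w\models B$ and $B\to\htcsemanticsA(\thAt)$ in $\htcsemanticsA(P)$ with $\tuple{h,w}\not\models\htcsemanticsA(\thAt)$; if $\tuple{h,w}$ were a model, then $\tuple{h,w}\not\models B$, so some body atom $\anyAt\in\Atoms\cup\TheoryAtomsES$ has a variable undefined in $h$; the regular-atom case contradicts $h'=v$, and the external case violates the choice axiom $\htcsemanticsA(\anyAt)\vee\htcsemanticsA(\comp{\anyAt})$ because $\varsAT{\anyAt}=\varsAT{\comp{\anyAt}}$ makes both disjuncts fail at $h$. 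This supportedness-plus-choice-axiom argument is the missing idea in your sketch; without it the collapse case is not delicate bookkeeping but unprovable by your method, since the required $h^\ast\subset v$ simply does not exist. A smaller remark: in the model part, your agreement check between the $w^\star$ of Lemma~\ref{lem:aux2a:prop:translations} and the given $w$ omits variables of founded atoms that are satisfied in $v$ yet underived ($v\notin\den{p(\thAt)}$); such atoms occur as heads of $\htcsemanticsP$, so those variables also need an argument, not just the three groups you list.
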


    \begin{proof}
    Assume that $v$ is a stable model of theory~$\htcsemanticsP_2(P,\AT,\TheoryAtomsES)$.
From Lemma~\ref{lem:aux1b:prop:translations}, this implies that~$v$ is a stable model of theory~$\htcsemanticsP_4(P,\AT,\TheoryAtomsES)$ and, thus, that~$v \models \htcsemanticsP_4(P,\AT,\TheoryAtomsES)$.
From Lemma~\ref{lem:aux2a:prop:translations}, this implies that~$w \models \htcsemanticsP(P,\AT,\TheoryAtomsES)$.
Let as show that~$w$ is a stable model of this theory.
Pick any valuation~$h \subset w$
    and let~
    \begin{align*}
        h'  \ \ = \ \ h &\cup \{ \ (p(\thAt),\mathbf{t}) \,\mid
            \thAt \in \TheoryAtomsES, \
            h \in \den{ \tau(\thAt)}
            \text{ and } v(p(\thAt)) = \mathbf{t}   \ \}
        \\
        &\cup \{ \ (p(\thAt),\mathbf{t}) \,\mid
        \thAt \in \TheoryAtomsDN, \
            (B \to \tau(\thAt)) \in \tau(P)
            \text{ and } \tuple{h,w} \models B
        \ \}
        \\
        &\cup \{ \ (x,v(x)) \mid
        \thAt \in \TheoryAtomsDN, \
            x \in \varsAT{\thAt}
            \text{ and }
            v \in \den{\tau(\thAt) }
        \ \}
    \end{align*}
    Let us show first that~${h' \subseteq v}$.
The only non-trivial case is when~${h'(p(\thAt)) = \mathbf{t}}$ and~${\thAt \in \TheoryAtomsDN}$.
This implies there is~$B \to \tau(\thAt)$ in~$\tau(P)$ such that~${ \tuple{h,w} \models B}$
    and, thus, that~${ w \models B}$.
Since~$w$ agrees with~$v$ on all variables occurring in~$B$, it follows that~${v \models B}$.
In addition, since~$v \models \htcsemanticsP_4(P,\AT,\TheoryAtomsES)$,
    we get that ${v \models B \to p(\thAt)}$ and, thus, ${v(p(\thAt)) = \mathbf{t}}$.
Note that~$B \to \tau(\thAt)$ in~$\tau(P)$ implies that~$B \to p(\thAt)$ belongs to~$p'(P)$.
Hence, ${h' \subseteq v}$.
    \\[10pt]
    Then, either~${h'= v}$ or~${h' \subset v}$.
We proceed by cases and we will prove~${\tuple{h,w} \not\models \htcsemanticsP(P,\AT,\TheoryAtomsES)}$.
    \begin{itemize}
        \item Assume that~${h'= v}$.
Since~${h \subset w}$, there is~${\thAt \in \TheoryAtomsDN}$
        and
        ${x \in \varsAT{\thAt}}$
        such that~${h(x) = \undefined}$,
        ${w(x)  = v(x) \neq \undefined}$,
        ${v \in \den{\tau(\thAt)}}$
        and~${v \in \den{ p(\thAt) }}$.
Furthermore, since~$v$ is a stable model of~$\htcsemanticsP_4(P,\AT,\TheoryAtomsES)$,
        we get that~${v \in \den{ p(\thAt) }}$ implies that there is a formula of the form~${B \to p(\thAt)}$ in~$p'(P)$ such that~${v \models B}$ (Proposition~11 by~\citeNP{cafascwa20b}).
Since~$v$ and~$w$ agree on all variable occurring in atoms from~$\Atoms \cup \TheoryAtomsES$, this implies that~${w \models B}$.
In addition, $\tau(P)$ contains formula~${B \to \tau(\thAt)}$ and~${\tuple{h,w} \not\models \tau(\thAt) }$ because ${x \in \varsAT{\thAt}}$ and~${h(x) = \undefined}$.
Suppose, for the sake of contradiction, that~${\tuple{h,w} \models \htcsemanticsP(P,\AT,\TheoryAtomsES)}$.
Then, ${\tuple{h,w} \not\models B }$ and,
        since~${w \models B}$, there is an atom~${\anyAt \in \Atoms \cup \TheoryAtomsES}$ that occurs in~$B$ such that~${ \tuple{h,w} \not\models \tau(\anyAt) }$.
This implies that there is variable~${y \in \vars{\anyAt}_\AT}$ such that~${h(y) = \undefined}$
        and~${w(y) = v(y) \neq \undefined}$.
Note that, if~${\anyAt \in \Atoms}$, then~$y =p_\anyAt$ and, thus, ~$h(p_\anyAt) = h'(p_\anyAt) = w(p_\anyAt)$, which is a contradiction.
Hence, we get that~${\anyAt \in \TheoryAtomsES}$ and, since~${ \vars{\anyAt}_\AT = \vars{\comp{\anyAt}}_\AT}$, this implies that
        ${\tuple{h,w} \not\models \tau(\comp{\anyAt}) }$.
This is a contradiction with the assumption, because~${\htcsemanticsP(P,\AT,\TheoryAtomsES)}$ contains a formula of the form~$\tau(\anyAt) \vee \tau(\comp{\anyAt})$ for every~$\anyAt \in \TheoryAtomsES$.
Consequently, ${\tuple{h,w} \not\models \htcsemanticsP(P,\AT,\TheoryAtomsES)}$.

        \item Assume now that~${h'\subset v}$.
Since~$v \models \htcsemanticsP_4(P,\AT,\TheoryAtomsES)$, it follows that
        \begin{align*}
v &\models \htcsemanticsA(\thAt)\leftrightarrow p(\thAt) &&\text{ for all~$\thAt \in \TheoryAtomsES$}
\end{align*}
        Then, it is easy to see that the following holds by construction:
        \begin{align}
            \tuple{h',v} &\models \htcsemanticsA(\thAt)\vee \neg\htcsemanticsA(\thAt) &&\text{ for all~$\thAt \in \TheoryAtomsDN$}
            \label{eq:1aa:lem:aux2:prop:translations}
            \\
            \tuple{h',v} &\models \htcsemanticsA(\thAt)\leftrightarrow p(\thAt) &&\text{ for all~$\thAt \in \TheoryAtomsES$}
            \label{eq:1bb:lem:aux2:prop:translations}
\end{align}
        Suppose, for the sake of contradiction, that
        \begin{gather}
            \tuple{h,w} \models \htcsemanticsP(P,\AT,\TheoryAtomsES)
            \label{eq:2:lem:aux2:prop:translations}
        \end{gather}
        We will show that this implies~$\tuple{h',v} \models \htcsemanticsP_4(P,\AT,\TheoryAtomsES)$, which is a contradiction with the fact that~$v$ is a stable model of this theory.
Pick a formula of the form~$p(\thAt) \to \htcsemanticsA(\thAt)$ with~$\thAt \in \TheoryAtomsDN$.
If~${\tuple{h',v} \models p(\thAt)}$,
        by construction there is a formula of the form of ${B \to \tau(\thAt)}$ in~$\tau(P)$ such that~${\tuple{h,w} \models B}$.
This implies~${\tuple{h',v} \models B}$.
Recall that~$w$ and~$v$ agree on all variables occurring in atoms from~$\Atoms \cup \TheoryAtomsES$ and~${h \subseteq h'}$.
Since we supposed that~$\tuple{h',v} \models \htcsemanticsP(P,\AT,\TheoryAtomsES)$ and this theory contains~${B \to \tau(\thAt)}$,
        this implies that~${\tuple{h',v} \models \tau(\thAt)}$.
Since~${\tuple{h',v} \models p(\thAt)}$, we get that~$v \in \den{ p(\thAt)}$ and, by construction, these two facts imply that~${\tuple{h,w} \models \tau(\thAt)}$.
Hence,
        \begin{align}
            \tuple{h',v} &\models  p(\thAt) \to \htcsemanticsA(\thAt) &&\text{ for all~$\thAt \in \TheoryAtomsDN$}
            \label{eq:1cc:lem:aux2:prop:translations}
        \end{align}
        Pick now a formulas of the form~${B \to p(\thAt)}$ and assume that~${\tuple{h',v} \models  B}$.
Then, there is
        ${B \to \tau(\thAt)}$ in~$\tau(P)$
        and~${\tuple{h,w} \models B}$.
By construction, this implies that~$h'(p(\thAt)) = \mathbf{t}$.
Hence,
        \begin{align}
            \tuple{h',v} &\models  B \to p(\thAt) &&\text{ for all~$B \to p(\thAt)$ in~$p'(P)$}
            \label{eq:1dd:lem:aux2:prop:translations}
        \end{align}
        Taking together facts~(\ref{eq:1aa:lem:aux2:prop:translations}-\ref{eq:1dd:lem:aux2:prop:translations}),
        we get that~${\tuple{h',v} \models \htcsemanticsP_4(P,\AT,\TheoryAtomsES)}$, which is a contradiction with the fact that~$v$ is a stable model of~$\htcsemanticsP_4(P,\AT,\TheoryAtomsES)$.
Hence,~${\tuple{h,w} \not\models \htcsemanticsP(P,\AT,\TheoryAtomsES)}$.
    \end{itemize}
    In both cases, we get that~${\tuple{h,w} \not\models \htcsemanticsP(P,\AT,\TheoryAtomsES)}$ and, thus, $w$ is a stable model of~${\htcsemanticsP(P,\AT,\TheoryAtomsES)}$.\end{proof}

    \begin{lemma}
        \label{lem:aux3:prop:translations}
        Let~$P$ be a $\TheoryAtoms$-program over~$\langle\Atoms,\TheoryAtoms,\TheoryAtomsES\rangle$ such that
        all atoms occurring in a body of~$P$ belong to~$\TheoryAtomsES$ and
        no atom in~$\TheoryAtomsES$ occur in a head of~$P$.
Let~$v$ and~$w$ be valuations such that
        \begin{align*}
            v = w &\cup \{ \ (p(\thAt),\mathbf{t})\mid \thAt \in \TheoryAtomsES \text{ and } w \in \den{ \tau(\thAt) } \ \}
            \\
            &\cup \{ \ (p(\thAt),\mathbf{t})\mid \thAt \in \TheoryAtomsDN, \ (B \to p(\thAt))) \in \tau(P) \text{ and } w \models B \ \}
        \end{align*}
        If $w$ is a stable model of theory~$\htcsemanticsA(P,\AT,\TheoryAtomsES)$,
        then $v$ is a stable model of~$\htcsemanticsP_2(P,\AT,\TheoryAtomsES)$.
        \end{lemma}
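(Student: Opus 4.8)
The plan is to mirror the argument of Lemma~\ref{lem:aux2b:prop:translations} in the opposite direction, again factoring through the intermediate theory $\htcsemanticsP_4(P,\AT,\TheoryAtomsES)$. By Lemma~\ref{lem:aux1b:prop:translations} the theories $\htcsemanticsP_2$ and $\htcsemanticsP_4$ have the same stable models, so it suffices to show that $v$ is a stable model of $\htcsemanticsP_4(P,\AT,\TheoryAtomsES)$, and I would work with that theory throughout. The observation I would exploit repeatedly is that $v$ and $w$ agree on every variable in $\X$, i.e. $\restr{v}{\X}=w$, while $v$ merely extends $w$ by assigning $\mathbf{t}$ to some of the fresh auxiliary variables $p(\thAt)$. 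Since all constraint atoms $\htcsemanticsA(\thAt)$, the regular-atom variables $p(\regAt)$, and the disjunctions of $\htctsols(\AT,\TheoryAtomsES)$ depend only on $\X$-variables, their truth values are unchanged when passing between $w$ and $v$, and between any $h$ and its $\X$-restriction.

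First I would check that $v$ is a (total) model of $\htcsemanticsP_4(P,\AT,\TheoryAtomsES)$. Since $w$ is a stable model of $\htcsemanticsP(P,\AT,\TheoryAtomsES)$ it is in particular a model, so $w\models\tau(P)$ and $w\models\htcsemanticsA(\thAt)\vee\htcsemanticsA(\comp{\thAt})$ for every external $\thAt$; both facts transfer to $v$ by the agreement on $\X$. The excluded-middle formulas for founded atoms hold trivially in the total interpretation $v$. For the bridge formulas I would use the definition of $v$: $v(p(\thAt))=\mathbf{t}$ for an external $\thAt$ holds exactly when $w\in\den{\htcsemanticsA(\thAt)}$, which yields $\htcsemanticsA(\thAt)\leftrightarrow p(\thAt)$; and $v(p(\thAt))=\mathbf{t}$ for a founded $\thAt$ holds only when some founded rule $B\to\thAt$ has $w\models\htcsemanticsA(B)$, whence $w\models\htcsemanticsA(\thAt)$ by $w\models\tau(P)$, which yields $p(\thAt)\to\htcsemanticsA(\thAt)$. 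The rules of $p'(P)$ are handled analogously, deriving the head's $p$-variable whenever the body holds.

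The substantial part is stability: assuming some $h'\subset v$ with $\tuple{h',v}\models\htcsemanticsP_4(P,\AT,\TheoryAtomsES)$, I must reach a contradiction. I would set $h\eqdef\restr{h'}{\X}\subseteq w$ and split on whether this restriction is strict. If $h=w$, the strict drop $h'\subset v$ must undefine some $p(\thAt)$ with $v(p(\thAt))=\mathbf{t}$; for external $\thAt$ this contradicts $\tuple{h',v}\models\htcsemanticsA(\thAt)\leftrightarrow p(\thAt)$, since $\htcsemanticsA(\thAt)$ still holds at $h'$ because $h$ agrees with $w$, while $p(\thAt)$ no longer does; and for founded $\thAt$ the witnessing rule body still holds at $\tuple{h',v}$, so $p'(P)$ would force $h'(p(\thAt))=\mathbf{t}$, again a contradiction. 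If instead $h\subset w$ strictly, I would show $\tuple{h,w}\models\htcsemanticsP(P,\AT,\TheoryAtomsES)$: the external disjunctions survive because they live in $\X$, and for each rule $B\to\htcsemanticsA(\anyAt_0)$ of $\tau(P)$, whenever $\tuple{h,w}\models B$ the identical body holds at $\tuple{h',v}$, so $p'(P)$ yields the head's $p$-variable at $h'$, which together with the bridge $p(\thAt)\to\htcsemanticsA(\thAt)$ (for founded heads) or triviality (for regular heads) gives $\tuple{h,w}\models\htcsemanticsA(\anyAt_0)$. This exhibits a model $\tuple{h,w}$ of $\htcsemanticsP(P,\AT,\TheoryAtomsES)$ with $h\subset w$, contradicting stability of $w$.

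The main obstacle I anticipate is the bookkeeping in the stability step, specifically the need to treat the two cases $h=w$ and $h\subset w$ separately: a strict decrease of $h'$ below $v$ may occur purely among the auxiliary variables $p(\thAt)$, which are invisible to the projection onto $\X$, so one cannot simply project and invoke stability of $w$. Pinning down exactly how such an auxiliary-only decrease is blocked---by the biconditional bridges for external atoms and by the derivations in $p'(P)$ for founded atoms---is where the structural hypotheses are used: that no external atom occurs in a head (so the only theory-atom variables appearing in rule heads are the founded $p(\thAt)$) and that founded atoms never occur in bodies (so every rule body lives in $\Atoms\cup\TheoryAtomsES\subseteq\X$). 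The latter is also what lets me argue that positive and negative body literals take the same truth value under $h,w$ as under $h',v$, which is the glue holding the case analysis together; compositionality and consistency of $\AT$ enter only indirectly, through the reduction to $\htcsemanticsP_4$ via Lemma~\ref{lem:aux1b:prop:translations}.
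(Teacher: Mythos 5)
Your proposal is correct and follows essentially the same route as the paper's proof: reduce to $\htcsemanticsP_4(P,\AT,\TheoryAtomsES)$ via Lemma~\ref{lem:aux1b:prop:translations}, verify that $v$ is a model, and establish stability by splitting on whether the $\X$-projection of a candidate $h'\subset v$ equals $w$ or drops strictly below it, using the biconditional bridges and $p'(P)$ in the first case and the stability of $w$ in the second. The only (harmless) deviation is in the case $\restr{h'}{\X}\subset w$, where you re-derive $\tuple{\restr{h'}{\X},w}\models\htcsemanticsP(P,\AT,\TheoryAtomsES)$ through $p'(P)$ and the bridge formulas, whereas the paper obtains the contradiction immediately from the containment $\htcsemanticsP(P,\AT,\TheoryAtomsES)\subseteq\htcsemanticsP_4(P,\AT,\TheoryAtomsES)$ together with agreement on the variables in $\X$.
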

    \begin{proof}

    Assume that $w$ is a stable model of theory~$\htcsemanticsP(P,\AT,\TheoryAtomsES)$.
We will show that~$v$ is a stable model of~$\htcsemanticsP_4(P,\AT,\TheoryAtomsES)$, which from Lemma~\ref{lem:aux1b:prop:translations}, implies that~$v$ is a stable model of~$\htcsemanticsP_4(P,\AT,\TheoryAtomsES)$.
Let us start by showing that~$v \models \htcsemanticsP_4(P,\AT,\TheoryAtomsES)$.
Recall that~$\htcsemanticsP_4(P,\AT,\TheoryAtomsES)$ can be rewritten as
    \begin{align}
        &\quad\ \htcsemanticsP(P,\AT,\TheoryAtomsES)
            \label{eq:1a:lem:aux3:prop:translations}\\
        &\cup p'(P)
            \label{eq:1b:lem:aux3:prop:translations}\\
        &\cup\{ \htcsemanticsA(\thAt)\vee \neg\htcsemanticsA(\thAt) \, \mid \thAt \in \TheoryAtomsDN \}
            \label{eq:1c:lem:aux3:prop:translations}\\
        &\cup\{ \htcsemanticsA(\thAt)\leftrightarrow p(\thAt) \ \mid \thAt \in \TheoryAtomsES\}
            \label{eq:1d:lem:aux3:prop:translations}\\
        &\cup\{ p(\thAt) \to \htcsemanticsA(\thAt) \ \mid \thAt \in \TheoryAtomsDN\}
        \quad\text{ for }\TheoryAtomsDN=\TheoryAtoms\setminus\TheoryAtomsES.
            \label{eq:1e:lem:aux3:prop:translations}
    \end{align}
    Note that, since~$v$ and~$w$ agree on all variable occurring in~$\htcsemanticsP(P,\AT,\TheoryAtomsES)$, we immediately get that~$v$ satisfies all formulas in it.
Furthermore, it is clear~$v$ satisfies all formulas in~\eqref{eq:1c:lem:aux3:prop:translations} and, by construction, all formulas in~\eqref{eq:1d:lem:aux3:prop:translations}.
Pick a formula of the form of~$p(\thAt) \to \htcsemanticsA(\thAt)$ and assume that~$v \models p(\thAt)$.
By construction, this means that there is a formula of the form~$B \to \tau(\thAt)$ in~$\tau(P) \subseteq \htcsemanticsP(P,\AT,\TheoryAtomsES)$ such that~$w \models B$ and, thus, $w \in \den{ \htcsemanticsP(\thAt)}$.
Since~$v$ and~$w$ agree on all variable occurring in theory atoms, we immediately get that~$v \models \htcsemanticsP(\thAt)$ and, thus, all formulas in~\eqref{eq:1e:lem:aux3:prop:translations}.
Finally, pick any formula in~$p'(P) \setminus \htcsemanticsP(P,\AT,\TheoryAtomsES)$.
This formula is of the form~$B \to p(\thAt)$ with~$\thAt \in \TheoryAtomsDN$.
Furthermore, if~$v \models B$, we get that~$w \models B$ and, by construction, this implies that~$w \models p(\thAt)$.
Consequently, $v$ satisfies all formulas in~\eqref{eq:1b:lem:aux3:prop:translations} and, thus, we get that
    $v \models \htcsemanticsP_4(P,\AT,\TheoryAtomsES)$.
    \\[10pt]
    Let us show now that, in fact, $v$ is a stable model of~$\htcsemanticsP_4(P,\AT,\TheoryAtomsES)$.
Pick any valuation~${h \subset v}$ and suppose, for the sake of contradiction, that~${{\tuple{h,v}} \models {\htcsemanticsP_4(P,\AT,\TheoryAtomsES)}}$.
Then, $\restr{h}{\X} \subseteq w$ and, since~$w$ is a stable model of theory~$\htcsemanticsP(P,\AT,\TheoryAtomsES)$, we get that either~${\restr{h}{\X} = w}$ or ${\tuple{\restr{h}{\X},w} \not\models \htcsemanticsP(P,\AT,\TheoryAtomsES)}$.
    \begin{itemize}
        \item Assume ${\restr{h}{\X} = w}$. Since ${h \subset v}$, this implies that ${h(p(\thAt)) = \undefined}$ and ${v(p(\thAt)) = \mathbf{t}}$ for some~${\thAt \in \TheoryAtoms}$.
        \begin{itemize}
            \item Assume first that~${\thAt \in \TheoryAtomsES}$.
Then, by construction, ${v(p(\thAt)) = \mathbf{t}}$ implies~${w \in \den{\tau(\thAt)}}$
            and, since~${\vars{\tau(\thAt)} \subseteq \X}$, this plus~${\restr{h}{\X} = w}$
            imply~${h \in \den{\tau(\thAt)}}$.
Furthermore, since we supposed that~${\tuple{h,v} \models \htcsemanticsP_4(P,\AT,\TheoryAtomsES)}$,
            it follows that~${\tuple{h,v} \models \htcsemanticsA(\thAt)\leftrightarrow p(\thAt)}$ for all~$\thAt \in \TheoryAtomsES$, which is a contradiction with facts~${h(p(\thAt)) = \undefined}$ and~${h \in \den{\tau(\thAt)}}$.

            \item Assume now that~${\thAt \in \TheoryAtomsDN}$.
By construction, ${v(p(\thAt)) = \mathbf{t}}$ implies that there is a formula of the form of~${B \to \tau(\thAt)}$ in~$\tau(P)$ such that~${w \models B}$.
Since we assumed~${\restr{h}{\X} = w}$, this implies that~$\tuple{h,v} \models B$.
Recall that~$v$ and~$w$ agree on all variables occurring in theory atoms.
Furthermore, since~${\thAt \in \TheoryAtomsDN}$, we also get that~${B \to p(\thAt)}$ belongs to~${p'(P) \subseteq \htcsemanticsP_4(P,\AT,\TheoryAtomsES)}$.
Then, since~${\tuple{h,v}}$ satisfies~${\htcsemanticsP_4(P,\AT,\TheoryAtomsES)}$, these two facts imply that~$\tuple{h,v} \models p(\thAt)$, which is a contradiction with~${h(p(\thAt)) = \undefined}$.
        \end{itemize}

        \item Assume now~${\tuple{\restr{h}{\X},w} \not\models \htcsemanticsP(P,\AT,\TheoryAtomsES)}$.
This implies that~${\tuple{h,v} \not\models \htcsemanticsP(P,\AT,\TheoryAtomsES)}$
        because interpretation~$\tuple{h,v}$ agrees with~${\tuple{\restr{h}{\X},w}}$
        on all variables occurring in~$\htcsemanticsP(P,\AT,\TheoryAtomsES)$.
Since~$\htcsemanticsP(P,\AT,\TheoryAtomsES) \subseteq \htcsemanticsP_4(P,\AT,\TheoryAtomsES)$,
        this implies that~${\tuple{h,v} \not\models \htcsemanticsP_4(P,\AT,\TheoryAtomsES)}$, which is a contradiction with the supposition.
    \end{itemize}
    Hence,~${\tuple{h,v} \not\models \htcsemanticsP_4(P,\AT,\TheoryAtomsES)}$ and, thus, that~$v$ is a stable model of~$\htcsemanticsP_4(P,\AT,\TheoryAtomsES)$.
From Lemma~\ref{lem:aux1b:prop:translations}, this implies that~$v$ is a stable model of~$\htcsemanticsP_2(P,\AT,\TheoryAtomsES)$.
    \end{proof}

\begin{proof}[Proof of Proposition~\ref{prop:translations}]
    The if direction follows directly from Lemma~\ref{lem:aux2b:prop:translations} and the only if direction from Lemma~\ref{lem:aux3:prop:translations}.
\end{proof}

\end{document}